\def\eqref#1{equation~\ref{#1}}
\def\1{\bm{1}}
\DeclareMathAlphabet{\mathsfit}{\encodingdefault}{\sfdefault}{m}{sl}
\SetMathAlphabet{\mathsfit}{bold}{\encodingdefault}{\sfdefault}{bx}{n}
\def\gE{{\mathcal{E}}}
\def\gF{{\mathcal{F}}}
\def\gS{{\mathcal{S}}}
\DeclareMathOperator*{\argmax}{arg\,max}
\pgfplotsset{compat=1.18}
\renewcommand\vec{\boldsymbol}
\newcommand{\EXP}{\mathbb{E}} 
\renewcommand{\Pr}{\mathbb{P}} 
\newcommand{\ind}{\mathbbm{1}}
\definecolor{purp}{rgb}{0.65, 0.16, 0.65}
\theoremstyle{plain}
\newtheorem{theorem}{Theorem}
\newtheorem{proposition}{Proposition}
\newtheorem{lemma}{Lemma}
\newtheorem{corollary}{Corollary}
\theoremstyle{definition}
\newtheorem{definition}{Definition}
\newtheorem{assumption}{Assumption}
\theoremstyle{remark}
\definecolor{urlcol}{HTML}{E0115F}
\title{Combinatorial Rising Bandits}
\author{
    Seockbean Song\textsuperscript{1}\thanks{Equal contribution}\;, 
    Youngsik Yoon\textsuperscript{2}\footnotemark[1]\;, 
    Siwei Wang\textsuperscript{3}, 
    Wei Chen\textsuperscript{3}, 
    Jungseul Ok\textsuperscript{1, 2}\thanks{Corresponding author: Jungseul Ok \texttt{<jungseul@postech.ac.kr>}}\\[0.5em]
 \textsuperscript{1}Graduate School of Artificial Intelligence, POSTECH, Pohang, Korea\\
 \textsuperscript{2}Department of Computer Science and Engineering, POSTECH, Pohang, Korea\\
 \textsuperscript{3}Microsoft Research Asia, Beijing, China\\[0.5em]
    \texttt{\{shinebobo,\!\!\! jungseul\}@postech.ac.kr}
}
\begin{document}

\maketitle
\vspace{-0.5cm}
\begin{abstract}
Combinatorial online learning is a fundamental task for selecting the optimal action (or super arm) as a combination of base arms in sequential interactions with systems providing stochastic rewards.
It is applicable to diverse domains such as robotics, social advertising, network routing, and recommendation systems.
In many real-world scenarios, we often encounter rising rewards, where playing a base arm not only provides an instantaneous reward but also contributes to the enhancement of future rewards, {\it e.g.}, robots improving through practice and social influence strengthening in the history of successful recommendations.
Crucially, these enhancements may propagate to multiple super arms that share the same base arms, introducing dependencies beyond the scope of existing bandit models.
To address this gap, we introduce the Combinatorial Rising Bandit (CRB) framework and propose a provably efficient and empirically effective algorithm, Combinatorial Rising Upper Confidence Bound (CRUCB).
We empirically demonstrate the effectiveness of CRUCB in realistic deep reinforcement learning environments and synthetic settings, while our theoretical analysis establishes tight regret bounds. Together, they underscore the practical impact and theoretical rigor of our approach.
Our code is available at \url{https://github.com/ml-postech/Combinatorial-Rising-Bandits}.
\end{abstract}
\vspace{-0.2cm}
\section{Introduction}
\label{sec:introduction}

Combinatorial online learning studies how to select an optimal action (super arm) composed of multiple sub-actions (base arms). This formulation captures the structure of many practical decision-making problems, including robotics \citep{xu2025reinforcement, wakayama2024observation}, social advertising \citep{ge2025multi}, automatic machine learning \citep{huang2021neural}, network routing \citep{lagos2025online}, and recommendation systems \citep{atalar2025neural, zhu2023scalable}.

However, previous studies of combinatorial online learning have largely neglected the presence of a {\it rising} reward nature in practice, where pulling a base arm not only yields an immediate reward but also enhances future rewards. 
For example, in robotic planning, hierarchical approaches tackle complex tasks by decomposing them into low-level skills, such as grasping and lifting, which act as sub-actions, while the full sequence of these skills constitutes an action.
As these low-level skills are reused across different plans, their performance improves \citep{jansonnie2024unsupervised, mao2025learning}, reflecting the rising reward nature. 
Additional real-world scenarios are presented in Appendix~\ref{sec:related_work_app}.

A complementary line of work studies rising bandits, where the expected reward of an arm enhances each time the arm is pulled \citep{fiandri2024rising, genalti2024graph, heidari2016tight, metelli2022stochastic}.
However, these studies consider only non-combinatorial settings and, therefore, ignore the structural dependencies that arise when different super arms share base arms.
Such overlap couples reward dynamics and makes the problem substantially more complex: while repeatedly pulling a single arm is optimal in the non-combinatorial setting \citep{heidari2016tight}, characterizing an optimal policy in the combinatorial regime is far more intricate.
A detailed comparison with existing rising bandit formulations is provided in 
Appendix~\ref{sec:related_work_formulation}.

\begin{figure*}[!ht]
    \vspace{-0.5cm} 
    \centering
    \begin{subfigure}{0.14\linewidth}
        \centering
        \includegraphics[width=\linewidth]{./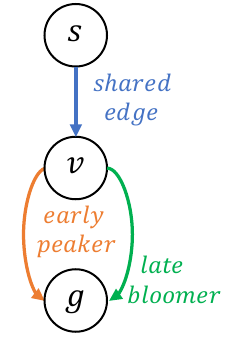}
        \vspace{-0.4cm}
        \caption{Toy graph}
        \label{figure:graph_toy}
    \end{subfigure}
    \hfill
    \begin{subfigure}{0.25\linewidth}
        \centering
        \vspace{-0.05cm}
        \includegraphics[width=\linewidth, trim={1cm 0 1cm 0.8cm},clip]{./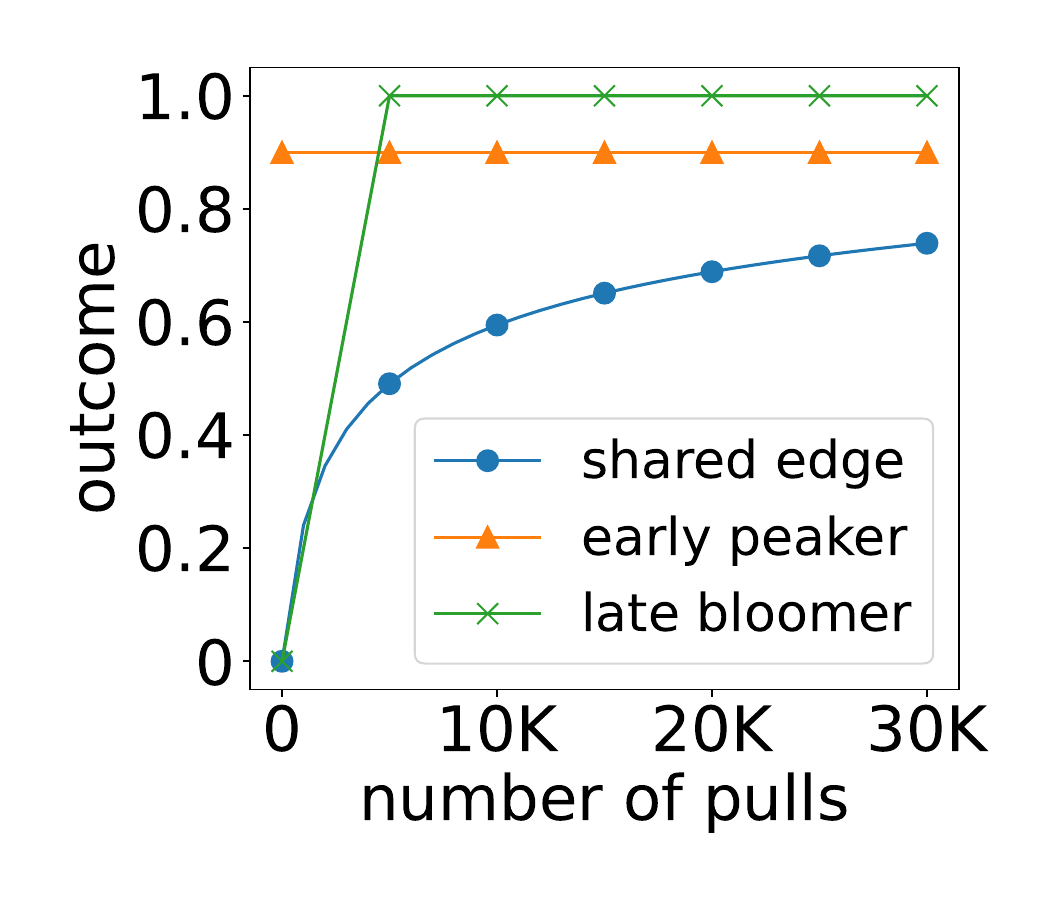}
        \vspace{-0.75cm}
        \caption{Outcome functions}
        \label{figure:reward_toy}
    \end{subfigure}
    \hfill
    \begin{subfigure}{0.28\linewidth}
        \centering
        \includegraphics[width=\linewidth, trim={1cm 0 1cm 0.8cm},clip]{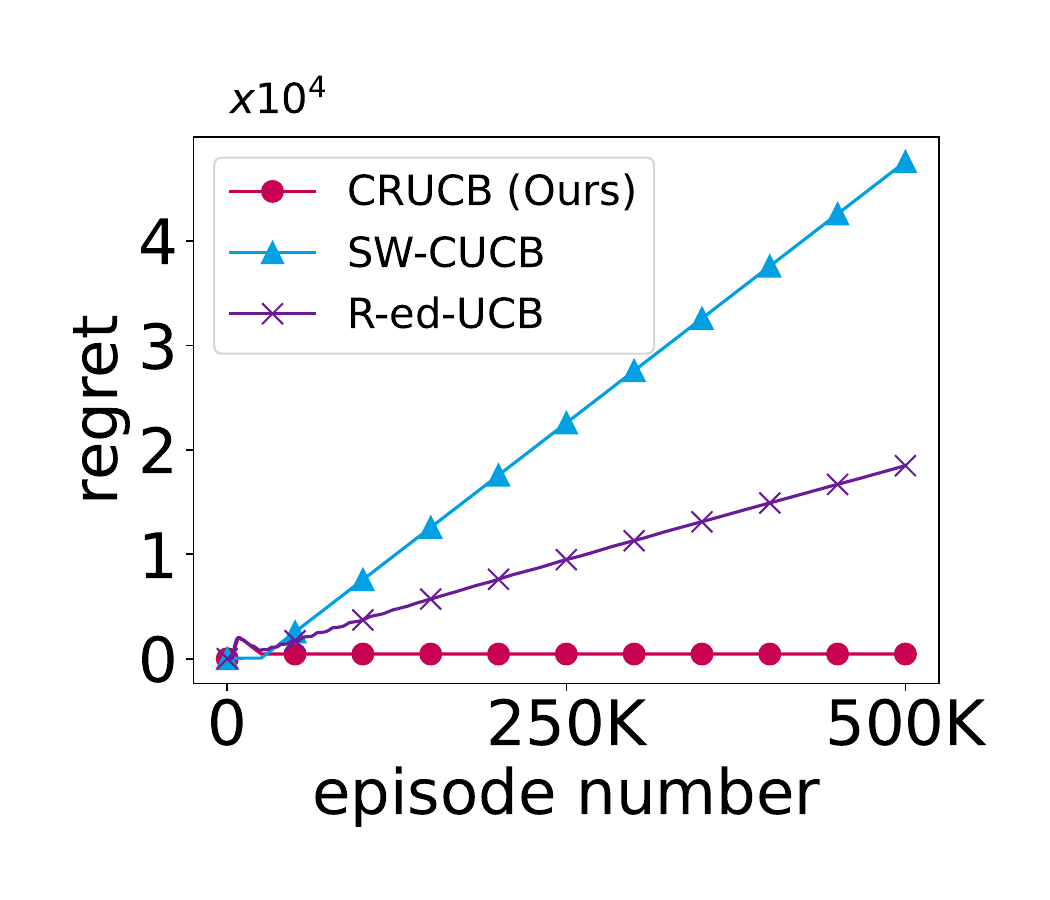}
        \vspace{-0.8cm}
        \caption{Cumulative regret}
        \label{figure:regret_toy}
    \end{subfigure}
    \hfill
    \begin{subfigure}{0.28\linewidth}
        \centering
        \includegraphics[width=\linewidth, trim={1cm 0 1cm 0.8cm},clip]{./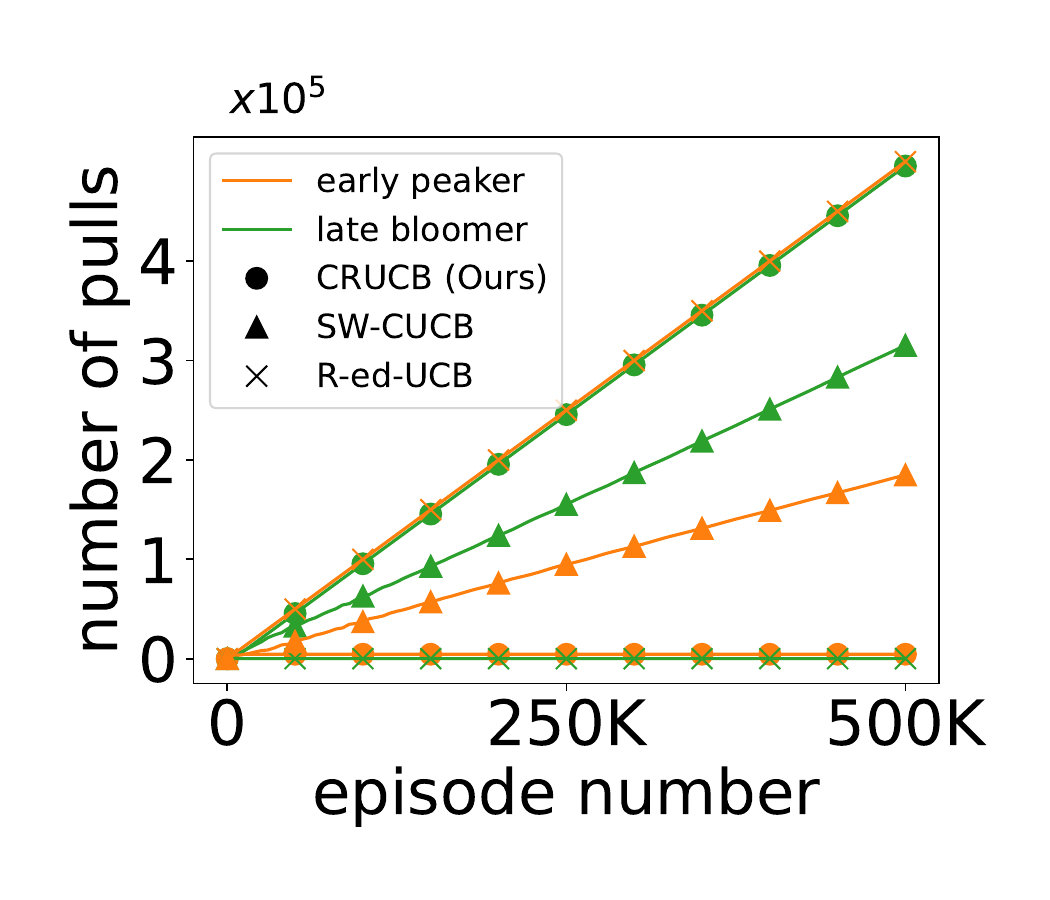}
        \vspace{-0.8cm}
        \caption{Empirical number of pulls}
        \label{figure:nop_toy}
    \end{subfigure}
    \caption{\textbf{Toy example for online shortest path planning.}
    (a) Graph: two paths from $s$ to $g$, an {\it early peaker path}  
(\{shared edge, early peaker\}) and a {\it late-bloomer path}  
(\{shared edge, late bloomer\}). 
    (b) Outcome functions: a {\it shared edge} rises slowly; {\it early peaker} starts high but flattens; a {\it late bloomer} starts low but rises quickly, 
    eventually surpassing the early peaker, so the late bloomer path is optimal for long horizon~$T$. The reward is the sum of the outcomes of the base arms.
    (c) Cumulative regret under three algorithms: CRUCB (ours); SW-CUCB \citep{chen2021combinatorial} (combinatorial bandits); R-ed-UCB \citep{metelli2022stochastic} (rested rising bandits). CRUCB becomes nearly flat, while SW-CUCB and R-ed-UCB accumulate linear regret. 
    (d) Empirical number of pulls of each edge:
    CRUCB pulls entirely the late bloomer, SW‐CUCB the early peaker, and R‐ed‐UCB splits pulls roughly evenly.
    }
    \label{figure:toy}
    \vspace{-0.3cm}
\end{figure*}

To model such scenarios, we propose the \textbf{C}ombinatorial \textbf{R}ising \textbf{B}andit (CRB) framework.
In this framework, (i) the policy selects a super arm (a set of base arms), and (ii) the outcome of each base arm follows a (rested) rising nature such that the expected outcome of a base arm increases after pulling it as part of the selected super arm.
We emphasize that CRB addresses a unique problem that combinatorial bandits \citep{chen2013combinatorial} and rising bandits \citep{heidari2016tight, metelli2022stochastic} cannot address.
While individual base arms behave like rising bandits, super arms do not: shared base arms create dependencies across super arms, leading to {\it partially shared enhancement}.
Figure~\ref{figure:toy} shows an illustrative example.
As depicted in Figure~\ref{figure:nop_toy}, our proposed algorithm, CRUCB, rapidly converges to selecting the late bloomer path, whereas SW-CUCB \citep{chen2021combinatorial}, a combinatorial bandit algorithm, consistently selects the early peaker path due to its inability to account for the rising nature.
R-ed-UCB \citep{metelli2022stochastic}, a rising bandit algorithm that ignores the combinatorial structure, splits pulls between both paths because it incorrectly interprets cumulative increments from repeatedly pulling the shared arm as immediate growth, causing it to hallucinate ongoing potential in the early peaker path.
This partially shared enhancement distinguishes CRB from prior formulations, introducing fundamentally new challenges. 
Indeed, this difference also leads to a different characterization of optimality in CRB.

To address the challenges introduced by the partially shared enhancement in CRB, we propose \textbf{C}ombinatorial \textbf{R}ising \textbf{UCB} (CRUCB), a provably efficient algorithm. 
CRUCB employs a Future-UCB index that optimistically estimates the future outcome of each base arm by combining its recent mean, slope, and uncertainty term, and then solves a combinatorial optimization problem using these estimates of future rewards to select the super arm. 
On the theoretical side, we derive a regret upper bound for CRUCB and a regret lower bound for CRB, and show that these bounds are close, demonstrating the near-optimal efficiency of our approach. 
On the empirical side, we conduct extensive experiments comparing CRUCB with a set of baselines in both synthetic environments and deep reinforcement learning tasks, training a neural agent for navigation. 
These results consistently highlight the superiority of CRUCB and its ability to handle challenges that existing approaches cannot. 
Therefore, our study positions CRUCB at the intersection of theory and practice: it not only provides provable guarantees but also exposes the limitations of prior methods in realistic environments and demonstrates how CRUCB effectively overcomes them.

Our main contributions are summarized as follows:
\begin{itemize}[topsep=0em,itemsep=0.3em,parsep=0pt,leftmargin=0.3cm]
\item We introduce the Combinatorial Rising Bandit (CRB) framework in Section~\ref{sec:problem_formulation} to formalize rising reward dynamics in combinatorial settings. Furthermore, we analyze the structure of optimal policies, highlighting that CRB differs from prior frameworks and makes the characterization of optimality both intractable and more intricate in Section~\ref{sec:optimality}.
\item We propose Combinatorial Rising UCB (CRUCB), a provably efficient algorithm for CRB in Section~\ref{sec:method}, and provide a regret upper bound that nearly matches a corresponding regret lower bound, demonstrating its theoretical tightness in canonical settings in Section~\ref{sec:analysis}.
\item We extensively validate CRUCB in both synthetic and deep reinforcement learning environments in Section~\ref{sec:experiments}. 
It confirms that CRUCB effectively overcomes the difficulties of the combinatorial rising structure left unsolved by prior methods, while maintaining robustness in practical settings beyond theoretical assumptions.
\end{itemize}

\section{Problem formulation}
\label{sec:problem_formulation}

We study the \textbf{C}ombinatorial \textbf{R}ising \textbf{B}andit (CRB) problem, where the mean outcome of each base arm increases with the number of plays. 
Let $K$ be the number of base arms, $[K]:=\{1,\dots,K\}$, and $\mathcal{S}\subseteq 2^{[K]}$ the set of valid super arms. 
At each round $t$, a super arm $S_t \in \mathcal{S}$ is chosen, and each $i\in S_t$ yields an outcome $X_i(t)$ drawn independently from a distribution $D_i(N_{i,t-1})$, where $N_{i,t-1}$ is the number of past plays of arm $i$ up to $t-1$.
We assume that $D_i(n)$ is $\sigma^2$-subgaussian with known $\sigma$, and define $\mu_i(n):=\mathbb{E}_{X\sim D_i(n)}[X]\,$, where $\mu_i(n) \in [0,1]$ for all $i,n$. The rising condition requires:
\begin{align}
    \gamma_i(n) := \mu_i(n+1)-\mu_i(n)\ge 0, \quad \forall i\in[K],n\geq1.
\end{align}

Given a chosen super arm $S_t$ and the outcome vector $\vec{X}_t=\{X_i(t): i \in S_t\}$, the reward is $R_t := R(S_t,\vec{X}_t)$, where $R$ is a fixed function. We consider a canonical setting of semi-bandit feedback at time $t$, i.e., $\pi$ selects super arm $S_t$ based on history $\mathcal{F}_{t-1} := \{(S_{t'}, \vec{X}_{t'}): t' \in [t-1]\}$.

For analytical tractability, we assume 
the concavity of $\mu_i$ as in \citet{heidari2016tight}: 
\begin{assumption} \label{asu:rising} (Concavity of $\mu_i$)
For each $i\in [K]$ and $n \ge 1$, we have
$\gamma_i(n) \ge \gamma_i(n+1)$.
\end{assumption}
We further assume the monotonicity of the reward function, which is canonical in combinatorial bandit literature \citep{chen2016combinatorial,wang2018thompson,wang2023combinatorial}:
\begin{assumption}[Monotone reward] \label{asu:monotone}
For each super arm $S \in \mathcal{S}$, the expected reward can be expressed as a function of the mean outcomes of its base arms. 
Formally, there exists a function $r$ such that
\begin{align}
    \mathbb{E}[R(S,\vec{X})] = r(S,\vec{\mu}), \quad 
    \vec{\mu} = \{\mu_i : i \in S\},
\end{align}
where $\vec{X}$ denotes the outcome vector. 
Moreover, $r$ is monotone: for any $S \in \mathcal{S}$ and vectors 
$\vec{\mu}, \vec{\mu}'$ with $\mu_i \leq \mu_i'$ for all $i \in S$, 
we have $r(S,\vec{\mu}) \leq r(S,\vec{\mu}')$. 
Additionally, we assume $r(S,\vec{0}) = 0$.
\end{assumption}
We note that this assumption is verified by various choices of reward functions 
such as the additive function \citep{combes2015combinatorial, kveton2015tight} and $k$-MAX function \citep{wang2023combinatorial}.

\paragraph{Regret minimization}
For a policy $\pi$, its expected cumulative reward over horizon $T$ is 
$\mathbb{E}_{\pi}\!\left[\sum_{t \in [T]} R_t\right]$. 
Let $\pi^* := \argmax_{\pi}\; \mathbb{E}_{\pi}\!\left[\sum_{t \in [T]} R_t\right]$ be the optimal policy. 
Then the regret of $\pi$ is defined as:
\begin{align}
    \text{Reg}(\pi,T) := 
    \mathbb{E}_{\pi^*}\!\left[\sum_{t \in [T]} R_t\right]
    - \mathbb{E}_{\pi}\!\left[\sum_{t \in [T]} R_t\right],    
\end{align}
where we want to design $\pi$ minimizing this.

\section{{Characterization of optimality}}
\label{sec:optimality}
We first study the structure of the optimal policy for CRB. Our key finding is that although the optimal policy is complex in general, a constant policy, which constantly plays the same super arm, can often serve as an effective and even optimal strategy under mild assumptions.
We begin with a formal definition of optimal constant policy, which is the best among all possible constant policies.
\begin{definition}[Optimal constant policy]
For any super arm $S \in \mathcal{S}$, let $\pi^{S}$ denote the constant policy that selects $S$ at every round, i.e., $\pi^{S}(t) = S,$ for each $t\in[T]$. 
The optimal constant policy is 
$\pi^*_{\mathrm{const}} := \pi^{S^*_{\mathrm{const}}}$, 
where $S^*_{\mathrm{const}} = \argmax_{S \in \mathcal{S}} \; 
\mathbb{E}_{\pi^{S}}[\sum_{t \in [T]} R_t]$.
\end{definition}
We note that $\pi^*_{\text{const}}$ is optimal in non-combinatorial rising settings \citep{heidari2016tight}, which are special instances of CRB such that $\gS = \{\{1\}, \{2\}, ..., \{K\}\}$.
However, we found that in general, $\pi^*_{\text{const}}$ is not (exactly) optimal:
\begin{theorem} \label{thm:no_optimal_policy}
Under Assumptions~\ref{asu:rising} and \ref{asu:monotone}, there exists an instance of CRB in which $\pi^*_{\text{const}}$ is not optimal.
\end{theorem} 
The proof is provided in Appendix~\ref{sec:proof_no_optimal_policy}.
As shown in the proof, the optimal policy may begin with a combination of 
{\it early peakers} and {\it late bloomers} (as introduced in Figure~\ref{figure:toy}),
before eventually selecting a pure combination of late bloomers to maximize long-term rewards.
This implies the optimal policy can be more complex than constant policies due to the partially shared enhancement.

As such, $\pi^*_{\text{const}}$  is not exactly optimal in CRB. However, it can still serve as a good approximation under mild assumptions.
In particular, if the reward function satisfies additive-bounded reward assumption, which encompasses important reward functions such as additive and $k$-MAX rewards, $\pi^*_{\text{const}}$  achieves a cumulative reward close to that of the overall optimal policy.
\begin{theorem} \label{thm:optimal_approx}
Assume that the reward function $r$ is bounded above and below by an additive function for each super arm $S \in \mathcal{S}$ and any $\vec{\mu} \in [0,1]^{|S|}$:
\begin{align}
    B_{L} \!\sum_{i \in S} \mu_i 
    \;\leq\; r(S,\vec{\mu}) 
    \;\leq\; B_{U} \!\sum_{i \in S} \mu_i ,
\end{align}
where $B_L$ and $B_U$ are non-negative constants.

Then, under Assumptions~\ref{asu:rising} and \ref{asu:monotone}, the cumulative reward ratio of the optimal constant policy $\pi^*_{\text{const}}$ to the optimal policy $\pi^*$ is bounded as
\begin{align}
    \frac{\mathbb{E}_{\pi^*}\!\left[\sum_{t \in [T]} R_t\right]}
         {\mathbb{E}_{\pi^*_{\text{const}}}\!\left[\sum_{t \in [T]} R_t\right]} 
    \;\leq\; \frac{B_U}{B_L}.
\end{align}
\end{theorem}
We can interpret 
the ratio $\frac{B_{U}}{B_L}$ as 
a degree of how far the reward function $r$ deviates from the additivity. Then, Theorem~\ref{thm:optimal_approx} implies that 
the optimal constant policy can be optimal when the reward function is effectively additive. Indeed,
when the reward function is additive, i.e., 
$B_U=B_L $,
the exact optimality of optimal constant policy $\pi^*_{\text{const}}$ is guaranteed:
\begin{corollary} \label{thm:optimal_policy}
    Given an additive reward $r$, 
        $\pi^*_{\text{const}}$ is exactly optimal.
\end{corollary}
The proof of Theorem~\ref{thm:optimal_approx} is provided in Appendix~\ref{sec:proof_thm_optimal_approx}.

\section{Proposed method: CRUCB}
\label{sec:method}
We propose the \textbf{C}ombinatorial \textbf{R}ising \textbf{UCB} (CRUCB) algorithm, presented in Algorithm~\ref{alg:CRUCB}.
At each round, CRUCB proceeds in two stages: 
(i) it estimates the potential of each base arm using {\it Future-UCB} index based on
the recent average outcome, the estimated rate of improvement and an exploration bonus, and then (ii) it calls {\tt Solver} to select the best super arm after solving a combinatorial optimization problem over the estimated indices.

\begin{algorithm}[!ht]
\caption{\texttt{Combinatorial Rising UCB\!\! (CRUCB)}}\label{alg:CRUCB}
	\begin{algorithmic}
	\STATE \textbf{Input} $N_{i,0} \leftarrow 0$ for all $i \in [K]$, Sliding window parameter $\varepsilon$.
	\STATE \textbf{Initialize} Play an arbitrary super arm including base arm $i$ twice for each $i \in [K]$.
	\FOR{$t \in ( 2K+1, \dots, T)$}
        \STATE Calculate Future-UCB $\Acute{\mu}_i(t)$ for each base arm, where $\Acute{\mu}_i(t)$ is defined in \eqref{eq:fucb_def}.
		\STATE $S_t \leftarrow$ {\tt Solver}$\left( \Acute{\mu}_1(t), \Acute{\mu}_2(t), \cdots, \Acute{\mu}_{K}(t) \right)$.
		\STATE Play $S_t$ and observe reward $R_t$.
        \STATE Update $\gF_{t}$ and $N_{i,t}$.
	\ENDFOR
	\end{algorithmic}	
\end{algorithm}
\vspace{-0.2cm}

\paragraph{Estimation}
At each time $t$, for each base arm $i$,
the Future-UCB index $\Acute{\mu}_i(t)$
is estimated as follows to predict the potential of base arm $i$:
\begin{align}
    \Acute{\mu}_i(t)  &:=
    \underbrace{\textcolor{black}{
    \frac{1}{h_i}
    {\sum_{l=N_{i,t\!-\!1}\!-h_i+1}^{N_{i,t\!-\!1}}}
    X_i(l)
    }}_{\textnormal{(i) recent average}}
    +
    \underbrace{
    \textcolor{black}{
    \frac{1}{h_i} \sum_{l=N_{i,t\!-\!1}\!-h_i+1}^{N_{i,t-1}}
    (t\!-\!l) \frac{X_i(l)\!-\!X_i(l\!-\!h_i)}{h_i}
    } }_{\textnormal{(ii) predicted upper bound of improvement}}
    \notag \\
    &+\underbrace{\textcolor{black}{
    \sigma\left(t\!-\!N_{i,t-1}\!+\!h_i\!-\!1\right)\sqrt{\!\tfrac{10\log t^3}{h_i^3}}}}_{\textnormal{(iii) exploration bonus}}
    \;,
    \label{eq:fucb_def}
\end{align}
where $\sigma$ is the standard deviation and $h_i$ is the size of the sliding window governing a bias-variance trade-off
between employing few recent observations (less biased),
compared to many past observations (less variance).
Specifically, we define the window size adaptively as $h_i = \epsilon N_{i,t-1}$, making it grow proportionally with the number of pulls. This design is crucial for balancing initial agility with long-term statistical stability.  The hyperparameter $\epsilon$ tunes this bias-variance trade-off: a smaller $\epsilon$ uses a shorter, more recent history, resulting in a less biased but higher-variance estimate that is more agile in detecting changes. Conversely, a larger $\epsilon$ averages over a longer history, providing a more stable, lower-variance estimate that is slower to adapt to the rising reward dynamics.

The index $\Acute{\mu}_i(t)$ consists of three parts:
\begin{enumerate}[left=0.2cm, label=(\roman*)]
    \item {\it recent average}: It is the mean of most recent $h_i$ outcomes from playing base arm $i$, and indicates the expected immediate outcome of playing base arm $i$.
    \item {\it predicted upper bound of improvement} : $\frac{X_i(l) - X_i (l - h_i)}{h_i}$ is the estimated slope by finite difference method. Then, by linear extrapolation,  $(t\!-\!l)\frac{X_i(l) - X_i (l - h_i)}{h_i}$ is an estimate of improvement in the average outcome when playing $i$ for $(t\!-\!N_{i,t-1})$ times. 
    By the concavity Assumption~\ref{asu:rising}, the expectation of this term is always optimistic compared to the true value.
    \item {\it exploration bonus}: It accounts for uncertainty and encourages exploration of arms that have not been sufficiently often.
    The exploration bonus used here is intentionally larger than typical bonuses in UCB-based algorithms for stationary bandit settings \citep{auer2002finite}, because CRUCB predicts future rewards in a rising setting, where uncertainty is inherently greater.
\end{enumerate} 

\paragraph{Solver}
After estimating the potential of each base arm, CRUCB employs {\tt Solver}, which solves a combinatorial optimization problem.
{\tt Solver} takes the estimated Future-UCB indices of the base arms $\vec{\Acute{\mu}}=\left[\Acute{\mu}_1(t),\cdots\Acute{\mu}_K(t)\right]$ as input and selects the super arm with the highest expected reward, i.e., {\tt Solver}$(\vec{\Acute{\mu}}) = \argmax_{S} r(S,\vec{\Acute{\mu}})$.
This use of a problem-specific optimization oracle is a standard convention in the combinatorial bandit literature (\cite{chen2013combinatorial}).The Solver is an interchangeable component, and its implementation depends on the specific combinatorial structure of the task. For example, in the online shortest path problem, {\tt Solver} can be instantiated as Dijkstra’s algorithm \citep{dijkstra1959note}.
\section{Regret analysis}
\label{sec:analysis}
\subsection{Regret upper bound of CRUCB}
\label{sec:regret_upper_bound}
We establish an upper bound on the regret of CRUCB and analyze how it adapts to different levels of problem difficulty. 
To characterize the difficulty of a CRB instance, we introduce a \textit{cumulative increment} $\Upsilon(M,q) :=  \sum_{l\in[M-1]} \max_{i \in [K]}\{\gamma_i(l)^q\}\;$\citep{metelli2022stochastic}. 
Intuitively, $\Upsilon(M,q)$ quantifies the difficulty of a CRB instance by measuring the overall outcome growth in expected outcomes.
Using $\Upsilon(M,q)$, we establish a regret upper bound for CRUCB as follows:
\begin{theorem} \label{thm:upper_bound}
    Assume that the reward function satisfies Lipschitz assumption: 
    \begin{align}
        \left\lvert r(S,\vec{\mu})\!-\!r(S,\vec{\mu}') \right\rvert 
        \!\leq\! B\!\sum_{i\in S}\left\lvert\mu_i\!-\!\mu'_i\right\rvert \;, 
    \end{align}
    where $B$ is a Lipschitz constant.
    Let $\pi_\varepsilon$ be CRUCB with 
    $h_i = \varepsilon N_{i,t}$.
    Under Assumptions~\ref{asu:rising}\&\ref{asu:monotone},
    for $T > 0$,
    $q\in [0,1]$,
    and 
    $ \varepsilon \in (0, \frac{1}{2})$, we have the
     following regret upper bound: 
    \begin{align}
    \textit{Reg}(\pi_\varepsilon,T) 
        \!\leq\!\underbrace{\left(\!2\!+\!\frac{L\pi^2}{3}\!\right)K \!+\!
        \frac{BKT^{q}}{1\!-\!2\varepsilon}\!\Upsilon\!\left(\!(1\!-\!2\varepsilon)\frac{LT}{K},q\!\right)}_{(i)}+
        \underbrace{\frac{3K}{\varepsilon}\!\left(\!(2B\sigma T)^{\frac{2}{3}}(6\log 4T)^{\frac{1}{3}}\!\right)}_{(ii)}, \label{eq:main_thm_upper_bound}
    \end{align}
    where $L := \max_{S \in \mathcal{S}} |S| $ is the maximum size of a super arm.
\end{theorem}

Term $(i)$ captures the regret caused by the inherent difficulty of the CRB problem, which is related to
the rising nature of expected outcomes and the size of a super arm. 
First, when outcomes of base arms evolve continuously, i.e., $\Upsilon$ is large, identifying the optimal super arm becomes significantly more difficult, since early observations may not reflect the long-term value of each base arm, making it harder to distinguish the optimal super arm without extensive exploration.
Second, when the maximum super arm size $L$ is large, the complexity of accurately estimating the combined reward increases, making it harder to confidently identify the optimal super arm.
These challenges are quantified by term $(i)$ via the cumulative increment $\Upsilon$ and $L$, which scales as $O(KT^q\Upsilon(\frac{LT}{K},q))$.
Term $(ii)$ captures the regret due to randomness in observed outcomes and scales as $O(KT^{2/3}(\log T)^{1/3})$.

It is important to note that $q$ is not a hyperparameter of the algorithm but a purely analytical tool used in our proof. 
The algorithm's implementation is completely independent of $q$. Its sole purpose in our analysis is to provide a single, unified regret bound that holds across a wide spectrum of reward-growth patterns by summarizing the cumulative effect of rising rewards. 
Thus, our theorem guarantees CRUCB's performance uniformly, while the algorithm itself operates without any knowledge of $q$.
The proof of Theorem~\ref{thm:upper_bound} is provided in Appendix~\ref{sec:proof_thm_upper_bound}.

The dominant term in the regret bound depends on the difficulty of the instance.
When $\Upsilon$ is large, corresponding to more difficult instances, term $(i)$ becomes dominant, potentially leading to linear regret $O(T)$.
To characterize the effect of problem difficulty on the regret bound, we present the following corollary, which refines the analysis by assuming an explicit upper bound on the slope $\gamma_i$.
\begin{corollary} \label{cor:characterize}
    For a non-increasing function $f$,
    assume 
    $\gamma_i(n) \leq f(n)$ for each $  i\in[K]$ and $ n\ge 1$. 
    For $T \ge 0$, $q\in[0,1]$, and $ \varepsilon \in (0, 1/2)$, the regret of CRUCB $\pi_\varepsilon$ is bounded as follows:
    \begin{align}
        &\textit{Reg}(\pi_\varepsilon,T)\!=\!O\!\left(\!\max\!\left(\!KT^{\frac{2}{3}}(\log T)^{\frac{1}{3}},KT^{q}\!\!\int_{1}^{(1-2\varepsilon)\frac{LT}{K}}\!\!f(n)^{q} dn\right)\!\right)\!.
    \end{align}
    In particular, we instantiate the regret upper bound with a set of $f$ with various learning difficulties:
        \begin{align}
        & \text{If $f(n) = \exp(-n)$},
            && \textit{Reg}(\pi_\varepsilon,T)  = O(T^{\frac{2}{3}}\log KT^{\frac{1}{3}}) \;.
            \label{eq:non-linear} 
            \\
            \label{eq:linear}
        & \text{If $f(n) = (n+1)^{-c}$ and $c\le1$},
            && \textit{Reg}(\pi_\varepsilon,T)  = O(T) \;.
            \\
        & \text{If $f(n) =  (n+1)^{-c}$ and $c>1$,}
            && 
            \textit{Reg}(\pi_\varepsilon,T)
             = O\!\left(\max\left(T^{\frac{2}{3}}\log KT^{\frac{1}{3}},T^{\frac{1}{c}}\!\log\tfrac{LT}{K}\right)\!\right) \;. \label{eq:non-linear2}
        \end{align}
\end{corollary}

To make the role of problem difficulty more explicit, Corollary~\ref{cor:characterize} reformulates the regret bound in terms of $f(n)$.
This allows the cumulative increment to be explicitly bounded in terms of $f(n)$, enabling an analytical characterization of the regret.

\begin{wrapfigure}{r}{0.34\linewidth}
    \centering
    \vspace{-0.5cm}
    \includegraphics[width=0.95\linewidth, trim={1cm 0 0.7cm 0.5cm},clip]{./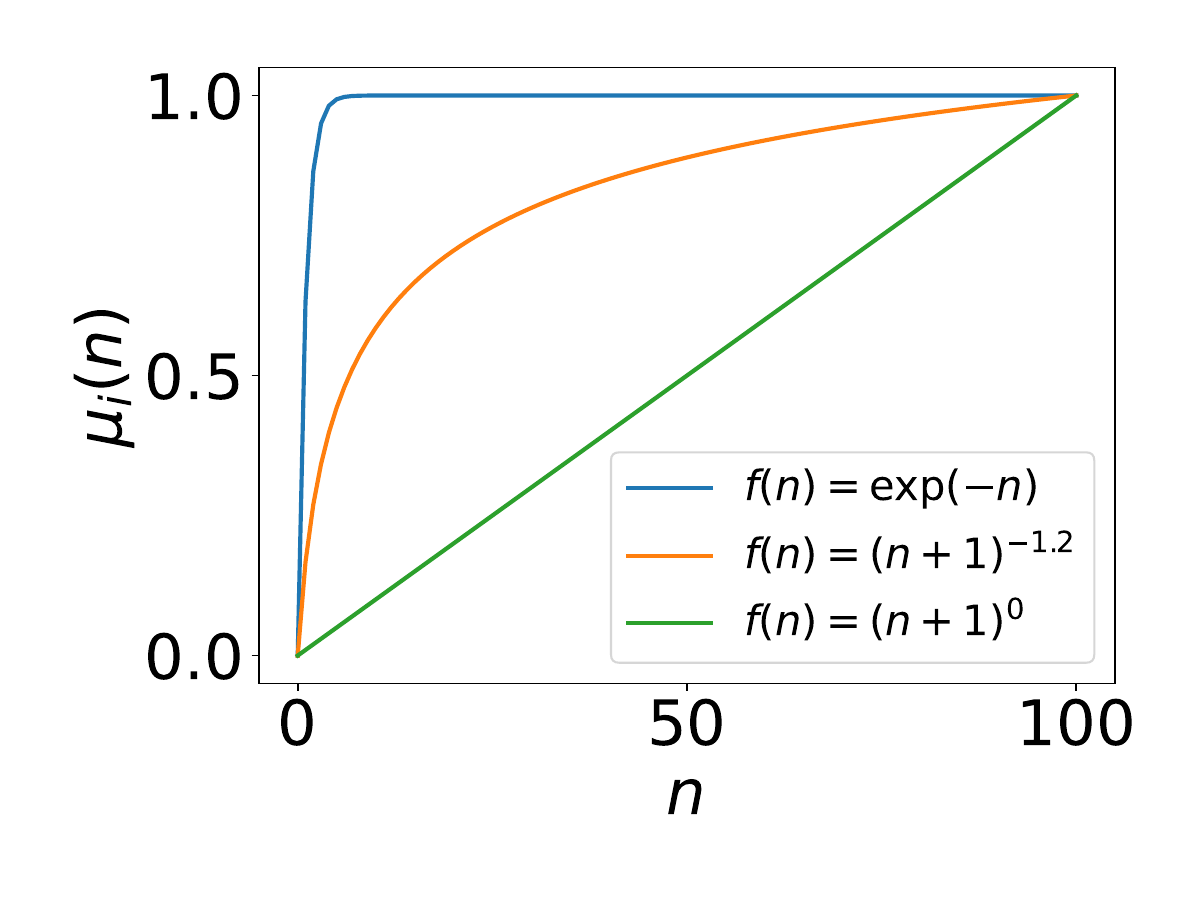}
    \vspace{-0.45cm}
    \caption{\textbf{Growth of outcomes.} 
    $\mu_i(n)$ induced by $\gamma_i(n)\!=\!Cf(n)$, with $C$ as a normalizing constant. 
    }
    \label{figure:constraint}
\end{wrapfigure}

The regret bounds given in Corollary~\ref{cor:characterize} reflect how the difficulty of the CRB instance varies with the choice of $f(n)$, as illustrated in Figure~\ref{figure:constraint}.
When $f(n) = \exp(-n)$, outcomes saturated rapidly, making it feasible to disregard the rising nature and resulting in sub-linear regret.
In contrast, when $f(n) = (n+1)^{-c}$ with $c\le 1$, outcomes change gradually, necessitating sustained exploration, and consequently resulting in linear regret.
An interesting intermediate regime appears when $f(n) = (n+1)^{-c}$ with $c>1$, where the regret upper bound explicitly depends on the problem difficulty (parameter $c$), highlighting adaptivity of CRUCB.
This adaptivity will become clearer through the regret lower bound analysis in next section with Figure~\ref{figure:bound_gap}.

\subsection{Regret lower bound of CRB}
\label{sec:regret_lower_bound}
In this section, we establish regret lower bounds for CRB.
Our results highlight two key findings. 
First, without any additional assumptions, the regret lower bound is $\Omega(T)$, reflecting the intrinsic difficulty of CRB.
Second, given restricted outcome growth, the regret lower bound can be sub-linear.
To analyze the regret across a class of CRB instances, we make the dependence on the instance $\nu$ explicit and write the regret as $Reg_{\nu}(\pi,T)$ in this section. 

We begin with a general class of CRB without any structural assumptions on the slope $\gamma_i$.
\begin{theorem} (Regret lower bound over a \textbf{general} class) \label{thm:lower_bound}
    Fix sufficiently large time $T$.
    Let $\mathcal{I}$ be the set of all available CRB instances. 
    Then, any policy $\pi$ suffers regret:
    \begin{align} 
        \min_{\pi}\max_{\nu\in\mathcal{I}}\textit{Reg}_{\nu}(\pi,T) = \Omega(LT)\;,
     \end{align}
     where $L$ is the maximum size of super arms.
\end{theorem}
Theorem~\ref{thm:lower_bound} establishes that, without any structural assumptions, no algorithm can achieve sub-linear regret.
However, as seen in the regret upper bound analysis of CRUCB (Corollary~\ref{cor:characterize}), not all instances necessarily incur linear regret.
This discrepancy motivates a finer analysis: by considering a more fine-grained instance class, we can distinguish between instances that are inherently difficult and those that allow efficient learning, which the regret lower bound becomes sub-linear.
The proof of Theorem~\ref{thm:lower_bound} is provided in Appendix~\ref{sec:proof_thm_lower_bound}.

\begin{theorem} (Regret lower bound over a \textbf{fine-grained} class) \label{thm:lower_bound_2}
    Fix sufficiently large $T$. 
    For an arbitrary constant $1<c<2$, define a fine-grained set of CRB instances $\mathcal{A}_c$ as follows:
   \begin{align}
         \mathcal{A}_c:=\! \left\{\nu\!:\gamma_i(n) \!\leq\! (n\!+\!1)^{-c}, i\!\in\![K],  n\!\in\![T\!-\!1]\right\}. \label{eq:constraint_set} 
    \end{align} 
    \vspace{-0.2cm}
    Then, for any policy $\pi$ incurs regret:
    \begin{align} 
        \!\!\min_{\pi}\!\max_{\nu \in\mathcal{A}_c}\textit{Reg}_{\nu}(\pi,T) = \Omega\!\left(\!\max\!\left\{L\sqrt{T},LT^{2-c}\right\}\!\right)\!\;,
     \end{align}
     where  $L := \max_{S \in \mathcal{S}} |S| $ is the maximum size of a super arm.
\end{theorem}

\begin{wrapfigure}{r}{0.35\linewidth}
    \centering
    \vspace{-0.7cm}
    \includegraphics[width=0.9\linewidth, trim={0cm 0cm 0cm 0cm},clip] {./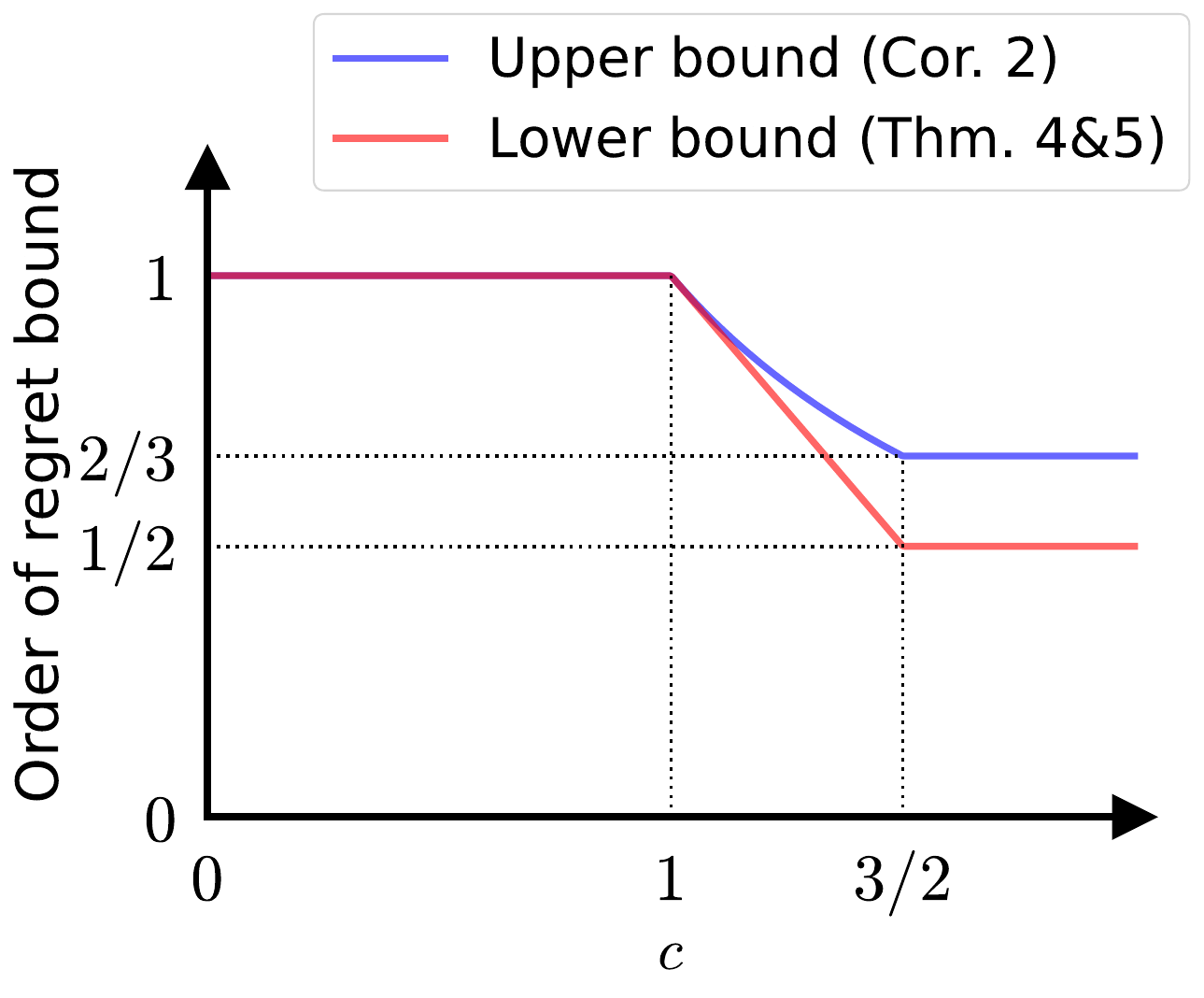}
    \vspace{-0.4cm}
    \caption{\textbf{Regret bound gap.}
    The \textcolor{red}{regret lower bound} of CRB and the \textcolor{blue}{regret upper bound} of CRUCB when $f(n)\!=\!(n\!+\!1)^{-c}$.
    For $c \le 1$, both the upper and lower bounds are equal to $1$.
    Specifically, for $1\!<\!c\!<\!1.5$, the lower bound ($2\!-\!c$) and the upper bound $\left(\frac{1}{c}\right)$ are of similar order, indicating that the regret bounds closely match.
    }
    \label{figure:bound_gap}
    \vspace{-1.5cm}
\end{wrapfigure}

Theorem~\ref{thm:lower_bound_2} characterizes how the regret lower bound varies with a parameter $c$.
As also reflected in the upper bound, $c$ serves as a structural separator between easy and difficult instances: a larger $c$ leads to slower outcome growth and a smaller regret lower bound, while smaller $c$ result in faster growth and higher regret lower bound. 
The proof of Theorem~\ref{thm:lower_bound_2} is provided in Appendix~\ref{sec:proof_thm_lower_bound_2}.

As a final remark for Section~\ref{sec:analysis}, our CRUCB achieves a regret upper bound that closely matches the regret lower bound of the CRB (see Figure~\ref{figure:bound_gap}).
In particular, without requiring any prior knowledge about the difficulty of the problem instance (e.g., the outcome growth parameter $c$), CRUCB effectively adapts to varying problem difficulties, ensuring robustness of CRUCB across diverse scenarios.
To the best of our knowledge, this represents the first explicit and rigorous comparison between regret upper and lower bounds in the rising bandit literature, highlighting a key theoretical contribution of our work.

\section{Experiments}
\label{sec:experiments}
We evaluate the performance of CRUCB against existing state-of-the-art algorithms for rising and non-stationary bandits on the online shortest path planning, in both synthetic environments (Section~\ref{sec:simul}) and realistic deep reinforcement learning applications (Section~\ref{sec:deeprl}). 
Unlike prior works that mainly focus on simplified rising bandit settings, our evaluation further considers realistic deep RL scenarios, underscoring the practical relevance and robustness of CRUCB.
Additional results on diverse combinatorial tasks, including maximum weighted matching, minimum spanning tree, and the k-MAX problem are provided in Appendix~\ref{sec:add_ex}.

\vspace{-0.1cm}
\paragraph{Baselines} 
We consider the following baseline algorithms:
\vspace{-0.1cm}
\begin{itemize}[left=0.3cm]
\item 
\textbf{R-ed-UCB \citep{metelli2022stochastic}} 
is a non-combinatorial algorithm for rising bandits, combining a sliding window with UCB-based estimation designed for rising rewards.

\item 
\textbf{SW-UCB \citep{garivier2011upper}\; and \; SW-TS \citep{trovo2020sliding}} are  
non-stationary non-combinatorial bandit algorithms that use a sliding-window approach with UCB and Thompson Sampling, respectively.

\item 
\textbf{SW-CUCB \citep{chen2021combinatorial}\; and \; SW-CTS} are
non-stationary combinatorial bandit algorithms that use a sliding-window approach with UCB and Thompson Sampling, respectively.
\end{itemize}
Detailed pseudocode and descriptions of the baselines are provided in Appendix~\ref{sec:baselines}.
For CRUCB, we set the window size parameter $\epsilon=0.125$ in our main experiments. We found this to be a robust choice, and a detailed sensitivity analysis on the impact of $\epsilon$ is provided in Appendix~\ref{sec:sensitivity}.

\subsection{Synthetic environments}
\label{sec:simul}

\begin{figure*}[!t]
    \vspace{-0.5cm}
    \centering
    \begin{subfigure}{0.23\linewidth}
        \centering
        \includegraphics[width=0.75\linewidth]{./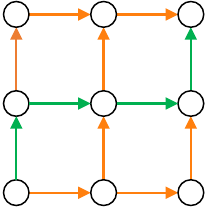}
        \caption{Path-easy graph}
        \label{figure:graph_path_simple}
    \end{subfigure}
    \hfill
    \begin{subfigure}{0.25\linewidth}
        \centering
        \includegraphics[width=0.95\linewidth, trim={1cm 0 1cm 0.8cm},clip]{./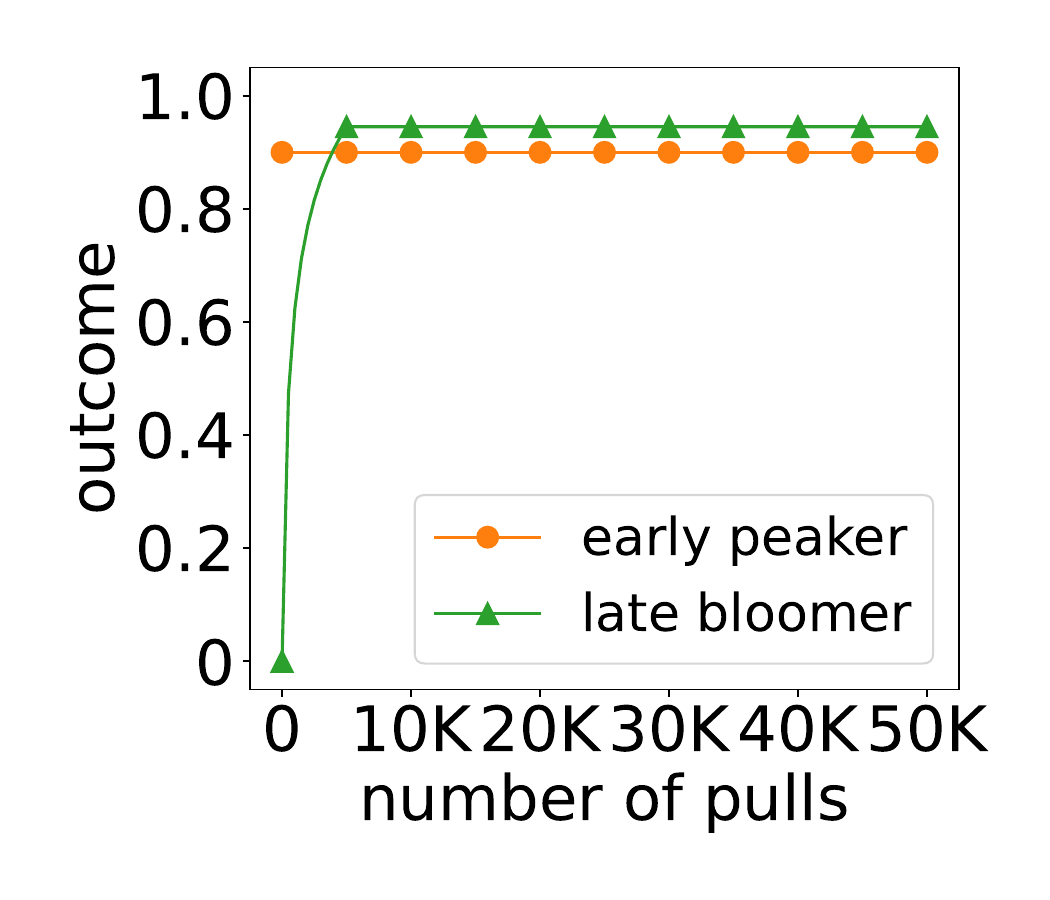}
        \vspace{-0.4cm}
        \caption{Outcome functions}
        \label{figure:reward_simple}
    \end{subfigure}
    \hfill
    \begin{subfigure}{0.25\linewidth}
        \centering
        \includegraphics[width=0.75\linewidth]{./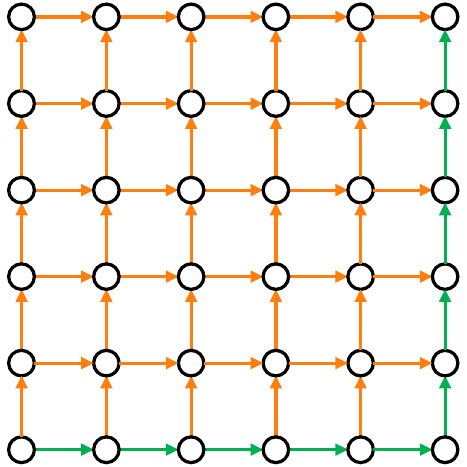}
        \vspace{-0.1cm}
        \caption{Path-complex graph}
        \label{figure:graph_path_complex}
    \end{subfigure}
    \hfill
    \begin{subfigure}{0.25\linewidth}
        \centering
        \includegraphics[width=0.95\linewidth, trim={1cm 0 1cm 0.8cm},clip]{./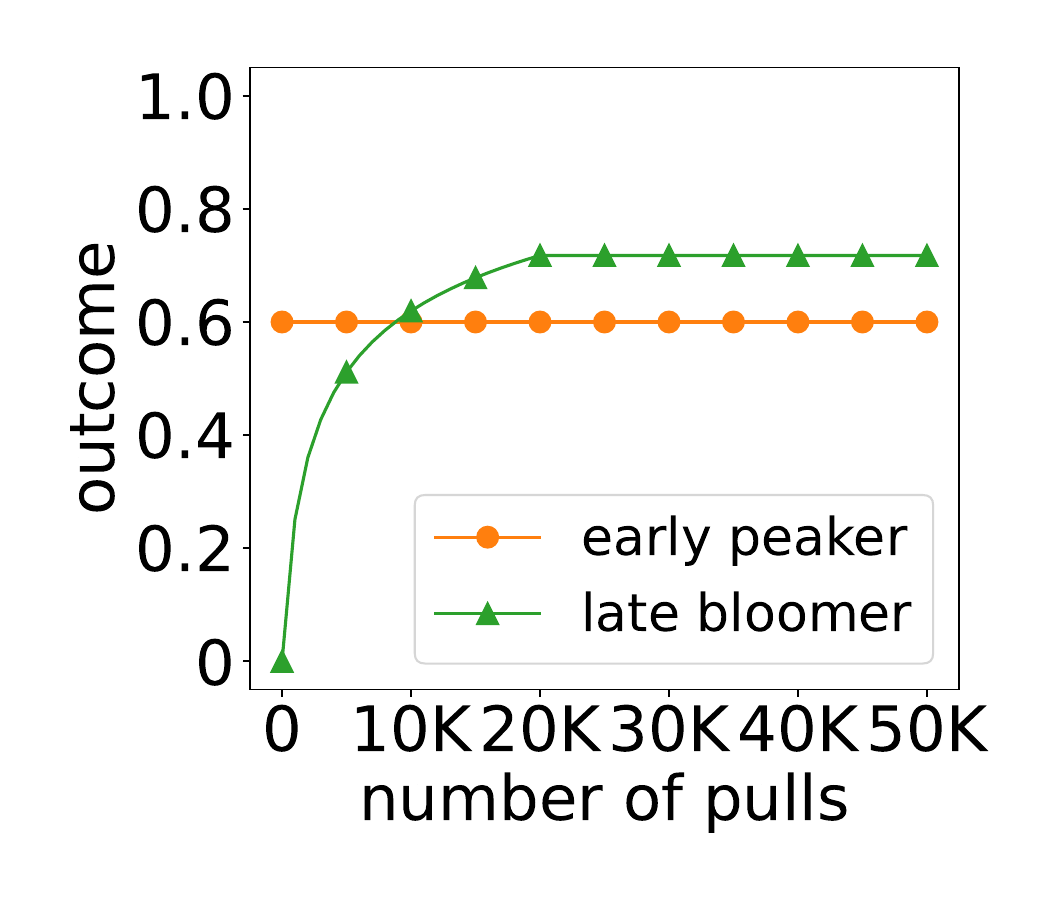}
        \vspace{-0.4cm}
        \caption{Outcome functions}
        \label{figure:reward_complex}
    \end{subfigure}
    \vspace{-0.7cm}
    
    \caption{\textbf{Online shortest path planning task.}
        (a, c) Graphs used to evaluate CRUCB and baselines.
        (b, d) Corresponding outcome functions for each task.
    }
    \label{figure:path}
    \vspace{-0.3cm}
\end{figure*}

\begin{figure*}[!t]
    \centering
    \begin{subfigure}{0.22\linewidth}
    \centering
    \includegraphics[width=0.99\linewidth, trim={4cm 0 4cm 0}, clip]{./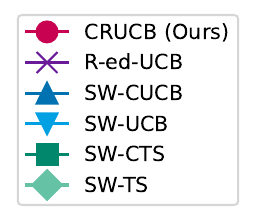}
    \vspace{0.7cm}
    \end{subfigure}
    \hfill
    \begin{subfigure}{0.34\linewidth}
        \centering
        \includegraphics[width=0.99\linewidth, trim={1cm 0 1cm 0.8cm},clip]{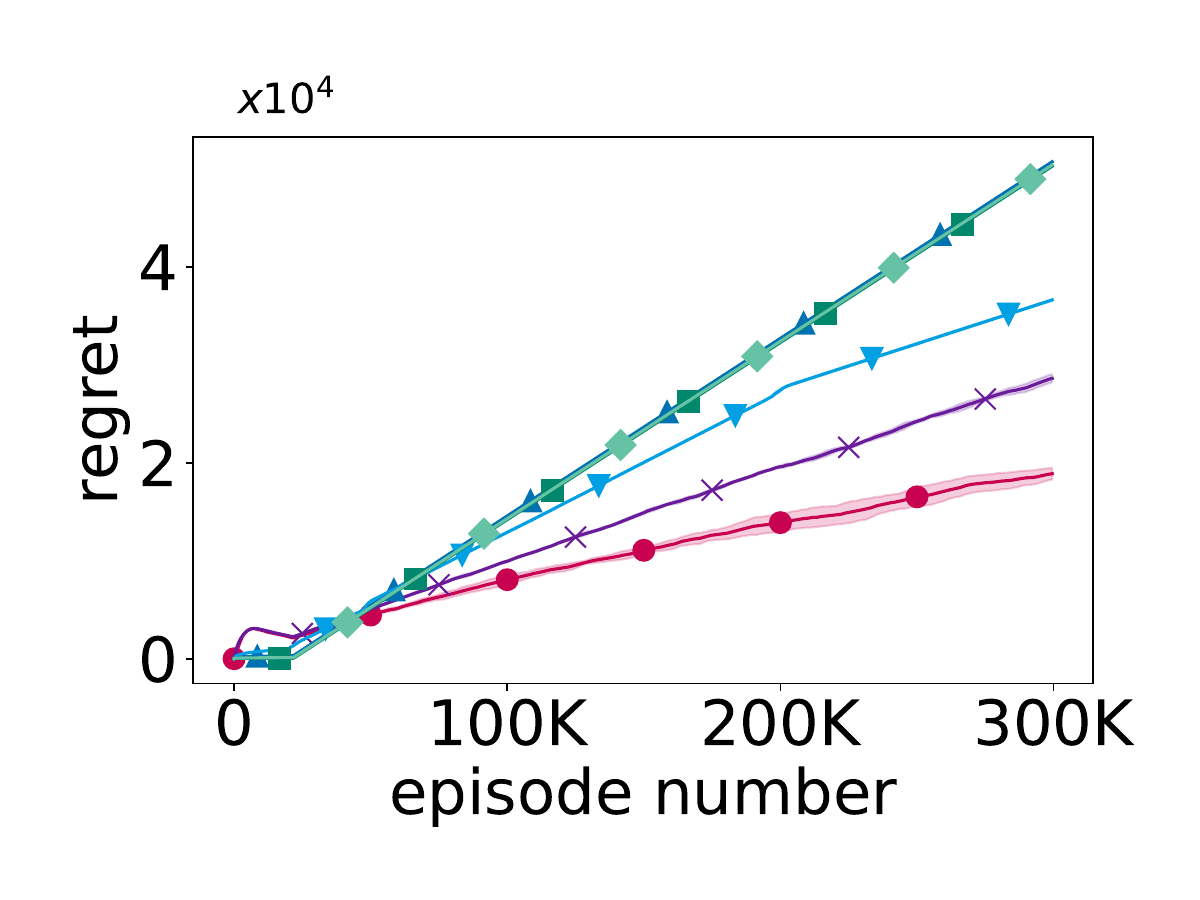}
        \vspace{-0.7cm}
        \caption{Path-easy}
        \label{figure:regret_path_simple}
    \end{subfigure}
    \hfill
    \begin{subfigure}{0.34\linewidth}
        \centering
        \includegraphics[width=0.99\linewidth, trim={1cm 0 1cm 0.8cm},clip]{./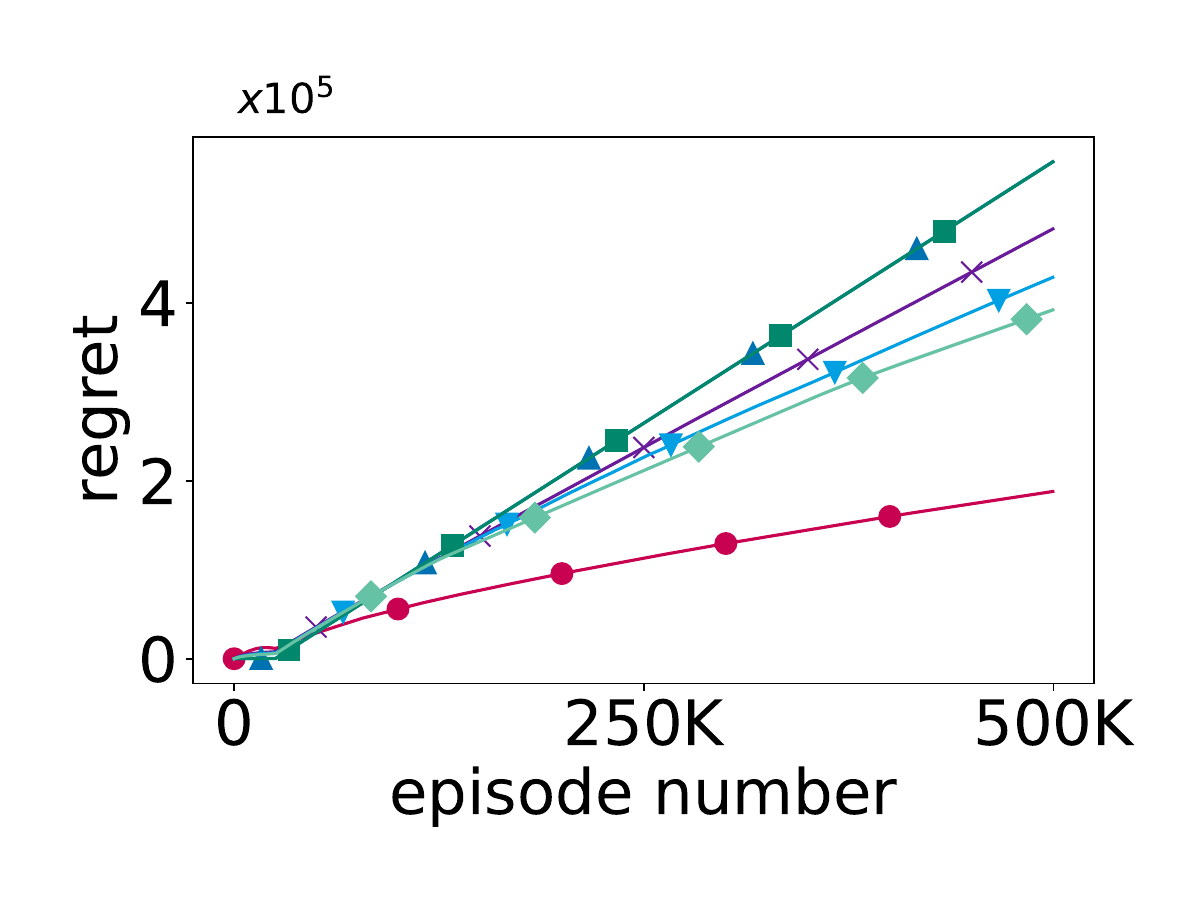}
        \vspace{-0.7cm}
        \caption{Path-complex}
        \label{figure:regret_path_complex}
    \end{subfigure}
    
    \vspace{-0.3cm}
    \caption{\textbf{Cumulative regret in synthetic environments.}
        Regret curves for (a) Path-easy and (b) Path-complex.
        Lines show average; shaded areas indicate 99\% confidence intervals over 5 runs.
    }
    \label{figure:regret_path}
    \vspace{-0.3cm}
\end{figure*}

We conduct experiments on the online shortest path task using the graph structures shown in Figures~\ref{figure:path}\subref{figure:graph_path_simple} and \subref{figure:graph_path_complex}, each containing two types of edges: {\it early peakers} and {\it late bloomers}, as illustrated in Figures~\ref{figure:path}\subref{figure:reward_simple} and \subref{figure:reward_complex}, respectively.
In these experiments, we assume the additive reward setting in which the reward of a super arm is defined as the sum of the outcomes of constituent base arms.
In this setting, Corollary~\ref{thm:optimal_policy} implies that the optimal policy is a constant policy repeatedly selecting a fixed path (super arm), which in our experiments corresponds to a path composed solely of late bloomers.

As shown in Figure~\ref{figure:regret_path_simple}, CRUCB demonstrates lower regret compared to all baselines in the Path-easy task.
R-ed-UCB underperforms despite the simplicity of the graph structure, due to the partially shared enhancement described earlier in Figure~\ref{figure:toy}.
In the more complex Path-complex task, CRUCB continues to outperform all baselines, with the gap between CRUCB and R-ed-UCB becomes significantly larger, as shown in Figure~\ref{figure:regret_path_complex}.

This is because the effects of the partially shared enhancement are amplified as the overlap of edges (base arms) among paths (super arms) increases.
Interestingly, across both environments, non-combinatorial and non-stationary algorithms (SW-UCB, SW-TS) consistently outperform their combinatorial counterparts (SW-CUCB, SW-CTS), with the gap becoming more pronounced in the complex task.
This occurs because the increased number of paths promotes broader exploration, allowing non-combinatorial algorithms sufficient time to explore late bloomers, whereas combinatorial algorithms tend to focus exploitation on early peakers, thereby restricting the opportunities for late bloomers to enhance their full reward potential.

\subsection{Deep reinforcement learning}
\label{sec:deeprl}

\begin{figure}[!t]
\vspace{-0.7cm}
    \centering
    \begin{subfigure}[b]{0.45\linewidth}
        \centering
        \includegraphics[width=0.9\linewidth, trim={0.2cm 0 0.2cm 0}, clip]{./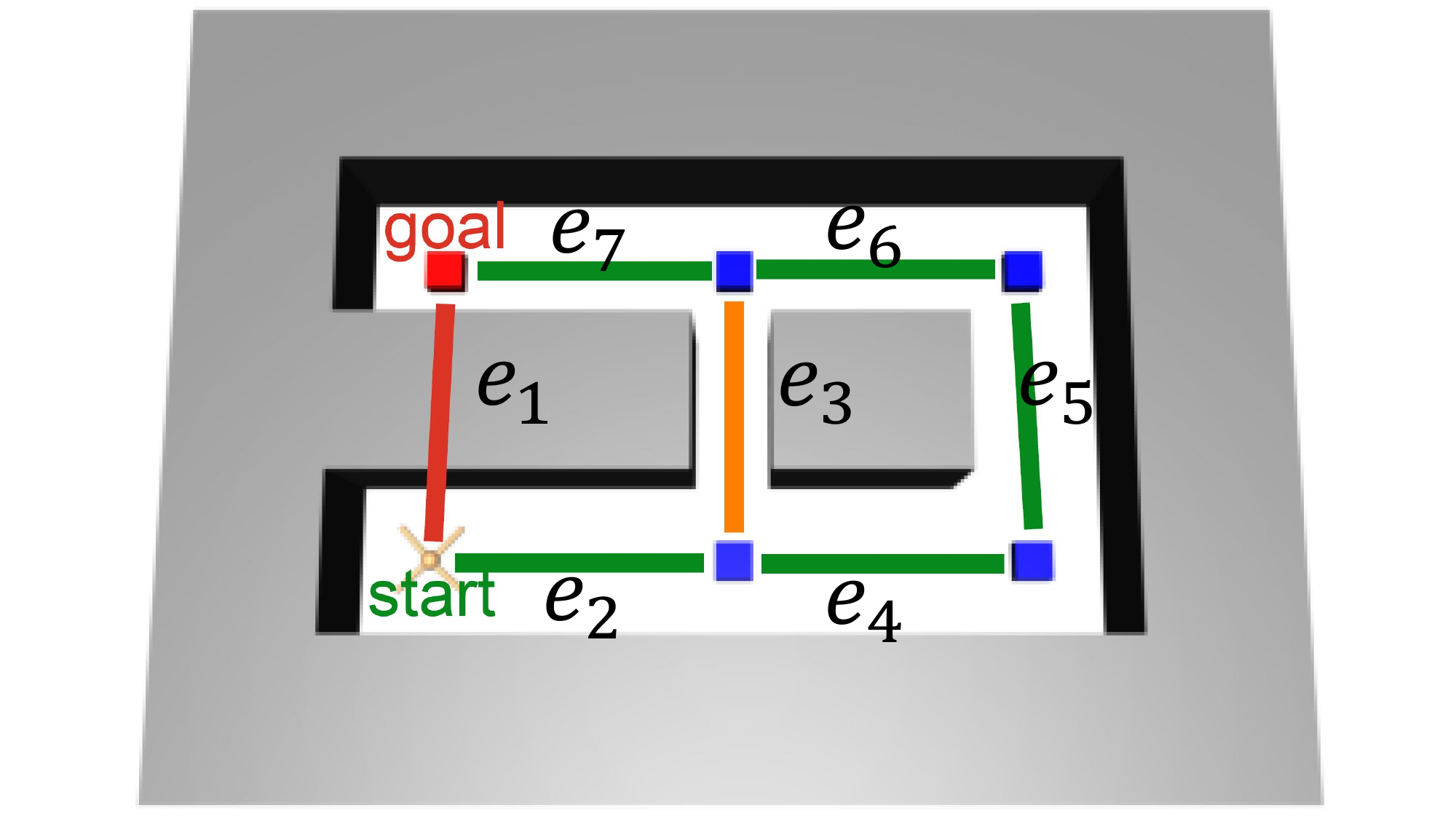}
        \caption{AntMaze-easy}
        \label{figure:antmaze-easy}
    \end{subfigure}
    \hspace{0.6cm} 
    \begin{subfigure}[b]{0.45\linewidth}
        \centering
        \includegraphics[width=0.65\linewidth]{./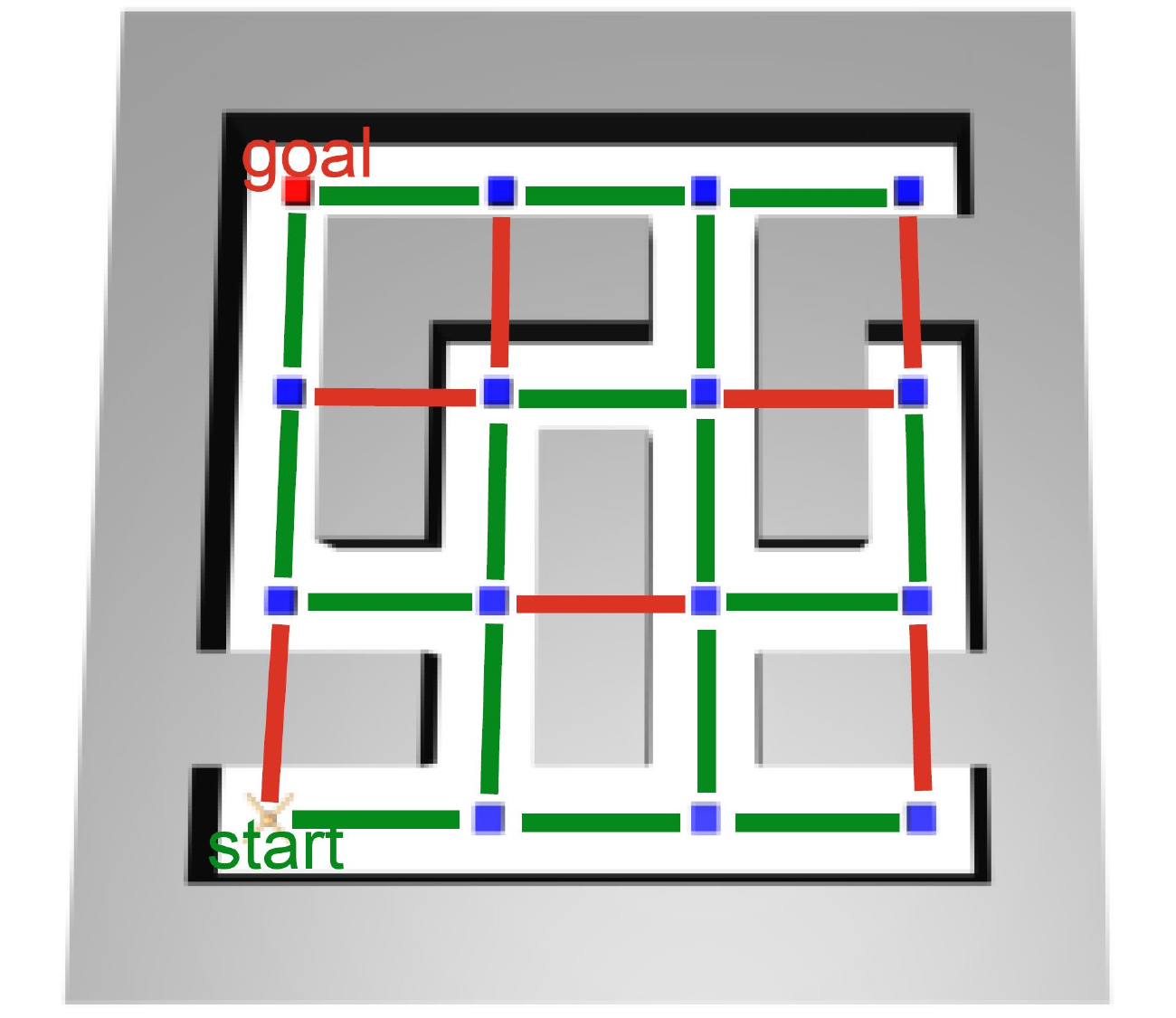}
        \caption{AntMaze-complex}
        \label{figure:antmaze-complex}
    \end{subfigure}
    
    \vspace{-0.2cm} 
    \caption{\textbf{Deep RL environments.}
    An ant robot navigates the shortest path from start to goal via intermediate nodes, encountering three types of edges in (a):
    \textcolor[HTML]{FF0000}{impossible} ($e_1$), \textcolor[HTML]{FF8C00}{bottleneck} ($e_3$), and \textcolor[HTML]{008000}{wide} ($e_2, e_4, e_5, e_6, e_7$) edge.
    (b) Focus on \textcolor[HTML]{FF0000}{impossible} and \textcolor[HTML]{008000}{wide edges} in a complex map.}
    \label{figure:deeprl_environments}
    \vspace{-0.3cm} 
\end{figure}

We conduct experiments on the online shortest path problem using hierarchical reinforcement learning in AntMaze environments \citep{yoon2024beag}, as illustrated in Figure~\ref{figure:deeprl_environments}.
It divides tasks into high-level and low-level policies.
The high-level policy makes abstract decisions, such as the path from start to goal, while the low-level policy executes these decisions by controlling the specific movements of the robot.
In our setup, the high-level policy plays a role similar to the CRB framework by selecting paths as super arms, where each edge corresponds to a base arm.
As training progresses, the improvements in the low-level policy lead to the rising outcomes for the high-level policy.
\begin{figure*}[!t]
    \centering
    \begin{subfigure}{0.9\linewidth}
    \centering
        \includegraphics[width=\linewidth]{./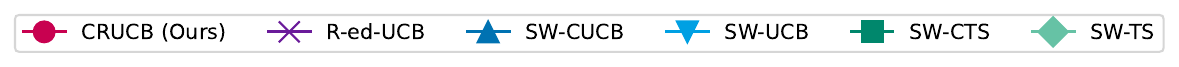}
        \vspace{-0.5cm}
    \end{subfigure}
    \centering
    \begin{subfigure}{0.265\linewidth}
        \centering
        \includegraphics[width=\linewidth, trim={1cm 0 1cm 0.8cm},clip]{./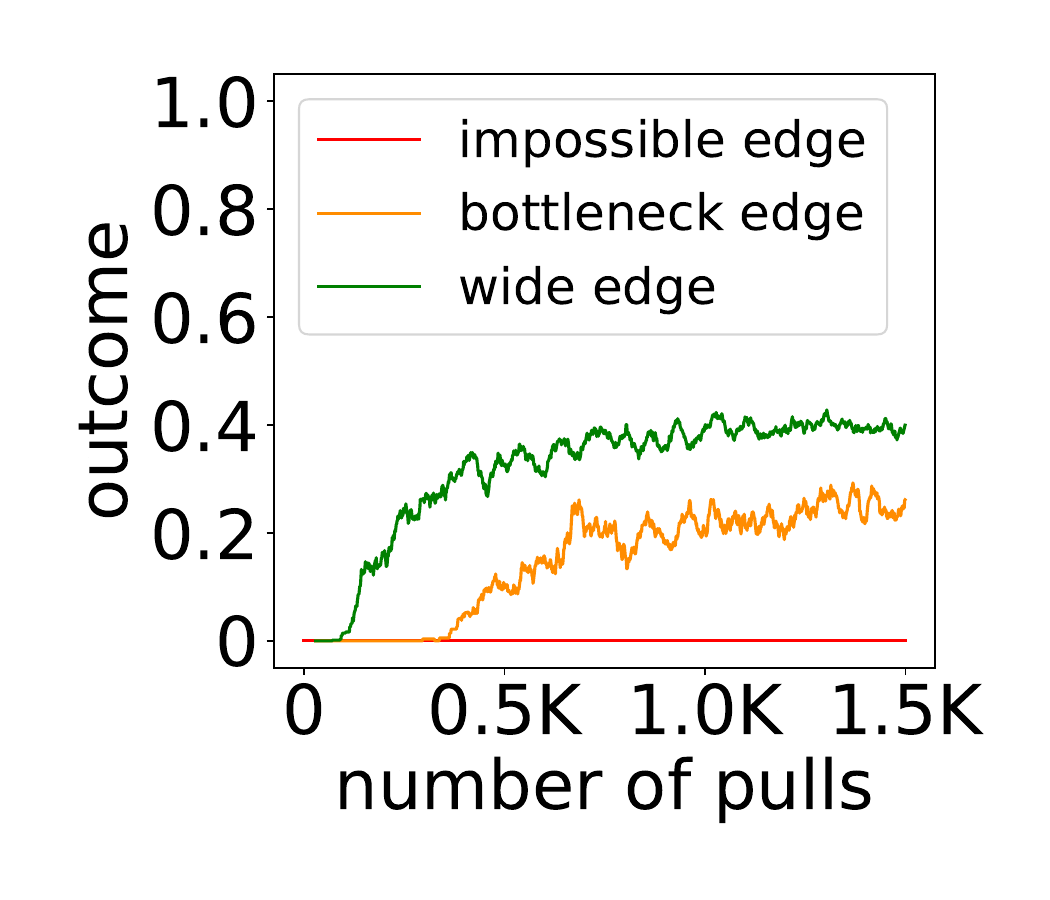}
        \vspace{-0.7cm}
        \caption{Observed outcomes}
        \label{figure:reward-easy}
    \end{subfigure}
    \hfill
    \begin{subfigure}{0.34\linewidth}
        \centering
        \includegraphics[width=\linewidth, trim={1cm 0 1cm 0.8cm},clip]{./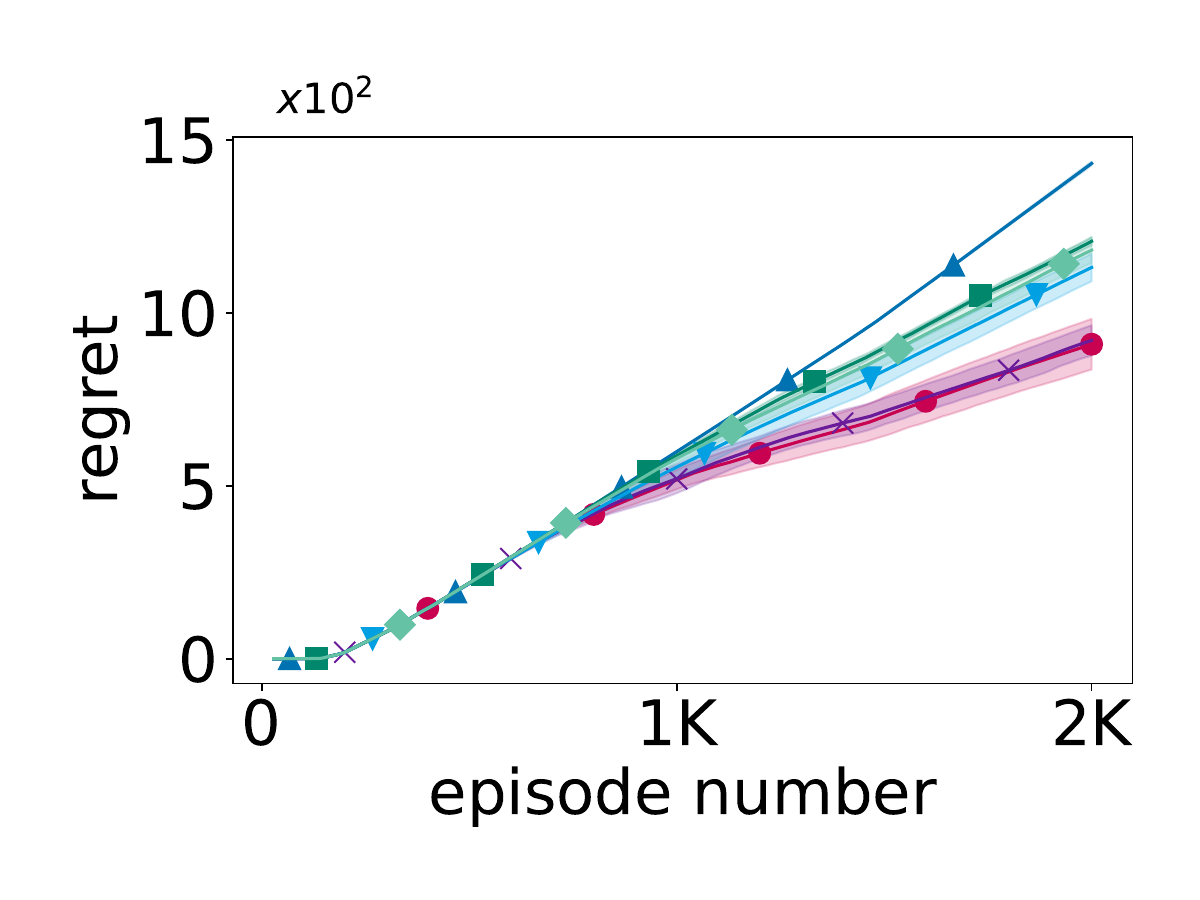}
        \vspace{-0.7cm}
        \caption{AntMaze-easy}
        \label{figure:regret_antmaze-easy}
    \end{subfigure}
    \hfill
    \begin{subfigure}{0.34\linewidth}
        \centering
        \includegraphics[width=\linewidth, trim={1cm 0 1cm 0.8cm},clip]{./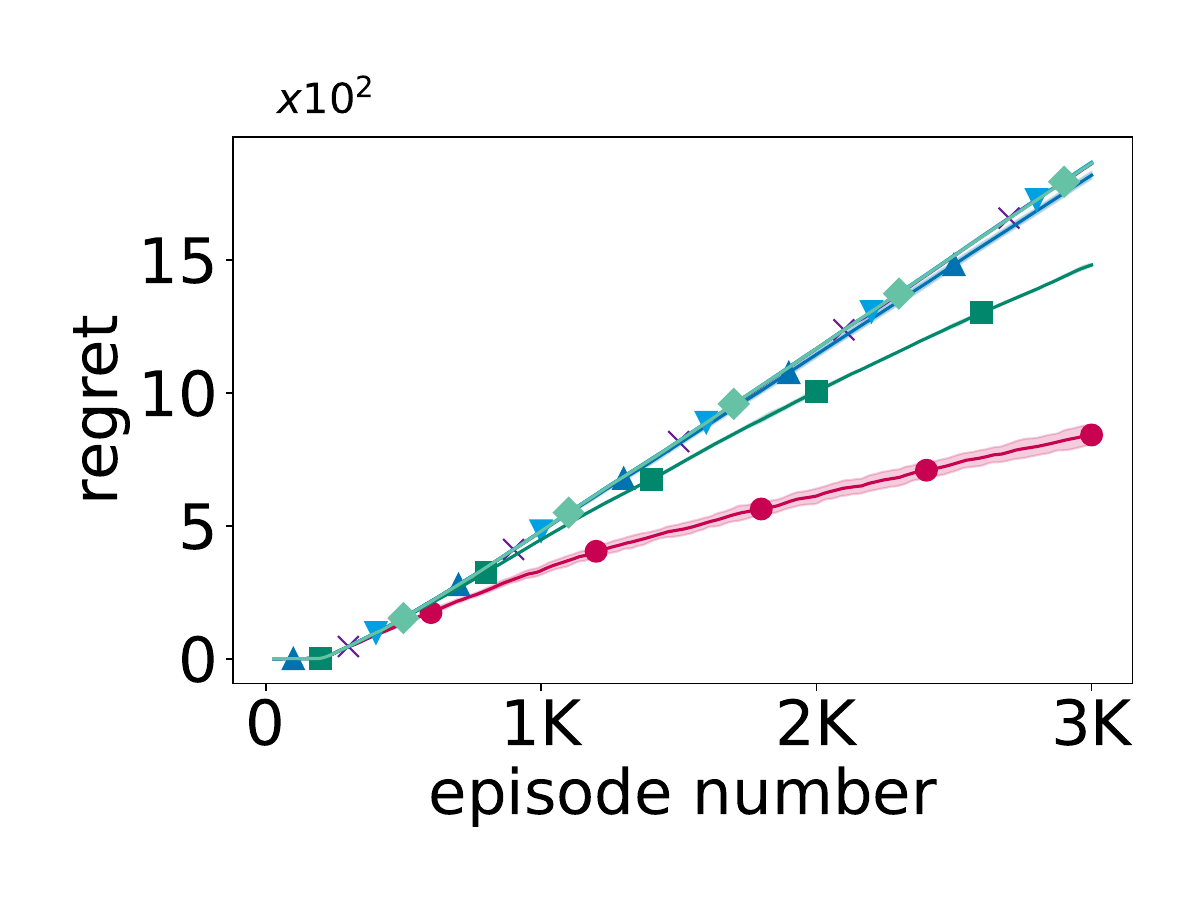}
        \vspace{-0.7cm}
        \caption{AntMaze-complex}
        \label{figure:regret_antmaze-complex}
    \end{subfigure}
    \vspace{-0.3cm}
    \caption{\textbf{Cumulative regret in deep reinforcement learning environments.}
    (a) Observed outcomes for each edge with respect to the number of pulls.
    Regret curves for (b) AntMaze-easy and (c) AntMaze-complex.
    Lines show average; shaded areas indicate 99\% confidence intervals over 5 runs.
    }
    \vspace{-0.3cm}
    \label{figure:regret_deeprl}
\end{figure*}
We consider two tasks: AntMaze-easy and AntMaze-complex (Figures~\ref{figure:deeprl_environments}\subref{figure:antmaze-easy} and \subref{figure:antmaze-complex}).
In AntMaze-easy, the policy can choose among three paths: an impossible path using edge $(e_1)$, a shortcut path $(e_2, e_3, e_7)$ that is short but contains a bottleneck edge $e_3$ requiring more episodes to train, and a detour path $(e_2, e_4, e_5, e_6, e_7)$ composed solely of wide edges but requiring more steps.
The key challenge in this task is to recognize the rising outcome of the bottleneck edge and efficiently exploit the shortcut path despite its initial difficulty.
In AntMaze-complex, the environment has a complex graph structure with extensive paths from start to goal.
The large number of paths increases combinatorial complexity, making exploration and identification of the optimal path challenging.
Each task aims at a distinct challenge: AntMaze-easy focuses on capturing the rising reward nature, whereas AntMaze-complex emphasizes robustness against growing combinatorial complexity.
Detailed descriptions of the environments and reward structures are provided in Appendix~\ref{sec:deep rl description}.

As depicted in Figure~\ref{figure:reward-easy}, the outcomes exhibit non-concave behavior due to an extended zero-reward period before the first success; however, the outcome growth appears roughly concave once the rewards increase.
Despite this violation of the concavity assumption (Assumption~\ref{asu:rising}), Figures~\ref{figure:regret_deeprl}\subref{figure:regret_antmaze-easy} and \subref{figure:regret_antmaze-complex} show that CRUCB outperforms the baselines, highlighting its robustness in settings where theoretical assumptions are not strictly satisfied.
In AntMaze-easy, CRUCB and R-ed-UCB outperform other baselines, as shown in Figure~\ref{figure:regret_antmaze-easy}.
Given the simplicity of the environment, which includes only three paths, most algorithms successfully identify the detour path.
However, non-stationary bandit algorithms tend to exploit the detour path once found and limit further exploration.
In contrast, rising bandit algorithms continue to explore the bottleneck path, eventually identifying the optimal path and resulting in lower cumulative regret.

\begin{figure}[!t] 
    \centering
    \includegraphics[width=0.7\linewidth]{./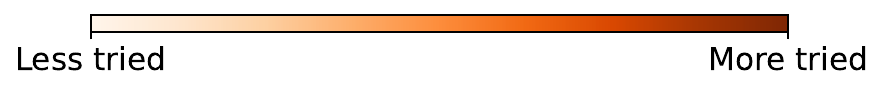} 
    \vspace{-0.1cm} 

    \begin{subfigure}[b]{0.03\linewidth} 
        \centering
        \rotatebox{90}{\LARGE \;\;\;\;\;\;\;\;\;\;$t$\;=\;3,000\;\;\;\;\;\;\;\;\;$t$\;=\;500}
    \end{subfigure}
    \hfill
    \begin{subfigure}[b]{0.26\linewidth} 
        \centering
        \includegraphics[width=\linewidth,trim={3.2cm 1cm 3.2cm 1cm},clip]{./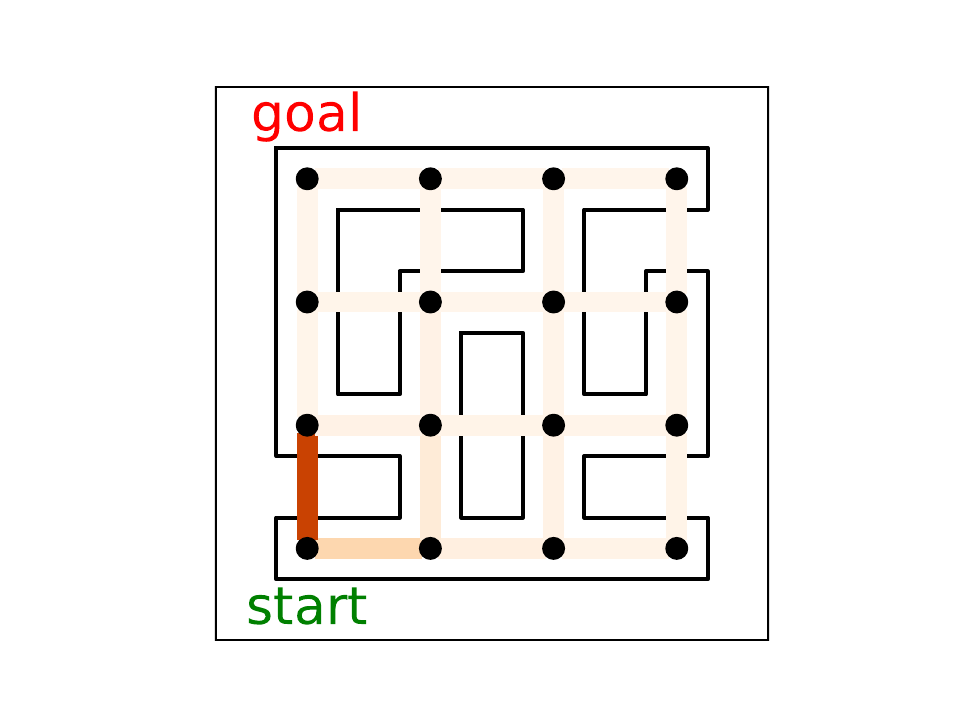}
        \vspace{2pt} 
        \includegraphics[width=\linewidth,trim={3.2cm 1cm 3.2cm 1cm},clip]{./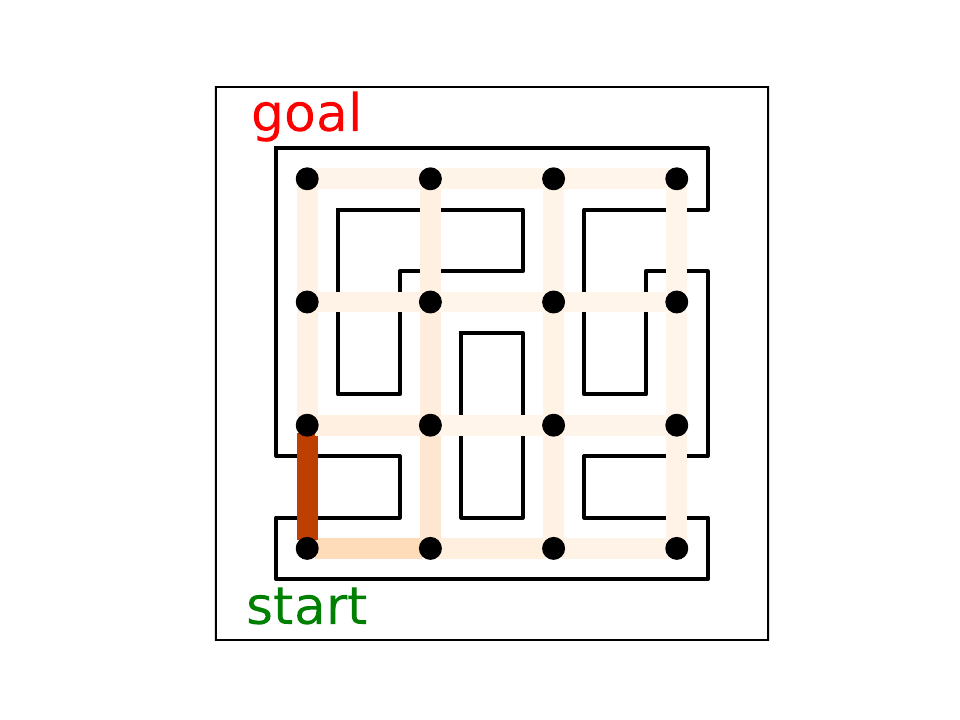}
        \caption{SW-CUCB}
        \label{figure:heatmap-SWCUCB}
    \end{subfigure}
    \hfill
    \begin{subfigure}[b]{0.26\linewidth}
        \centering
        \includegraphics[width=\linewidth,trim={3.2cm 1cm 3.2cm 1cm},clip]{./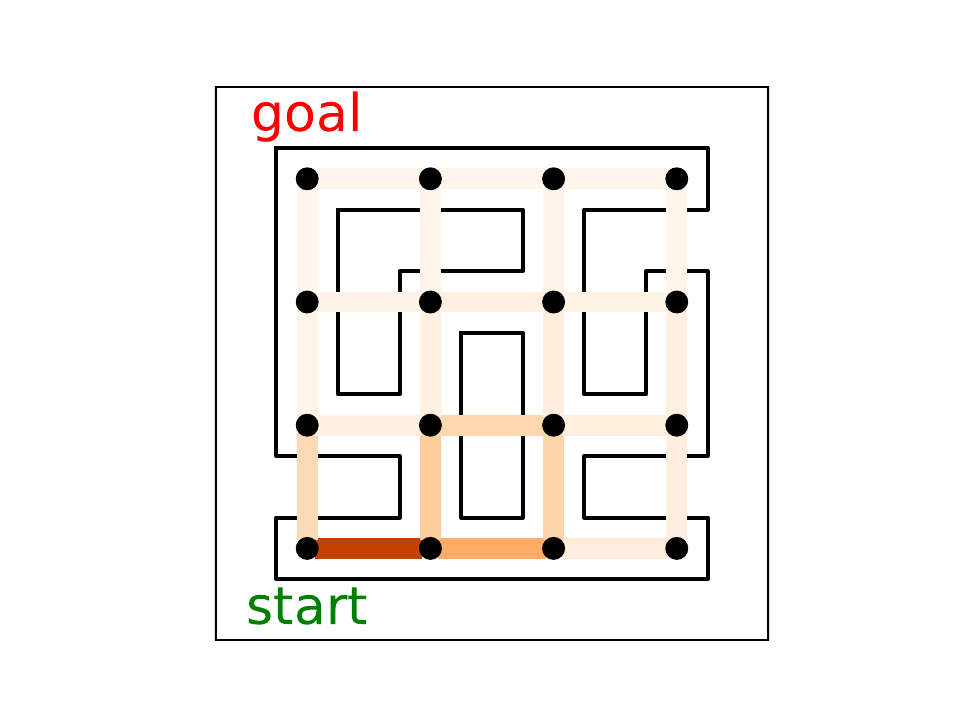}
        \vspace{2pt}
        \includegraphics[width=\linewidth,trim={3.2cm 1cm 3.2cm 1cm},clip]{./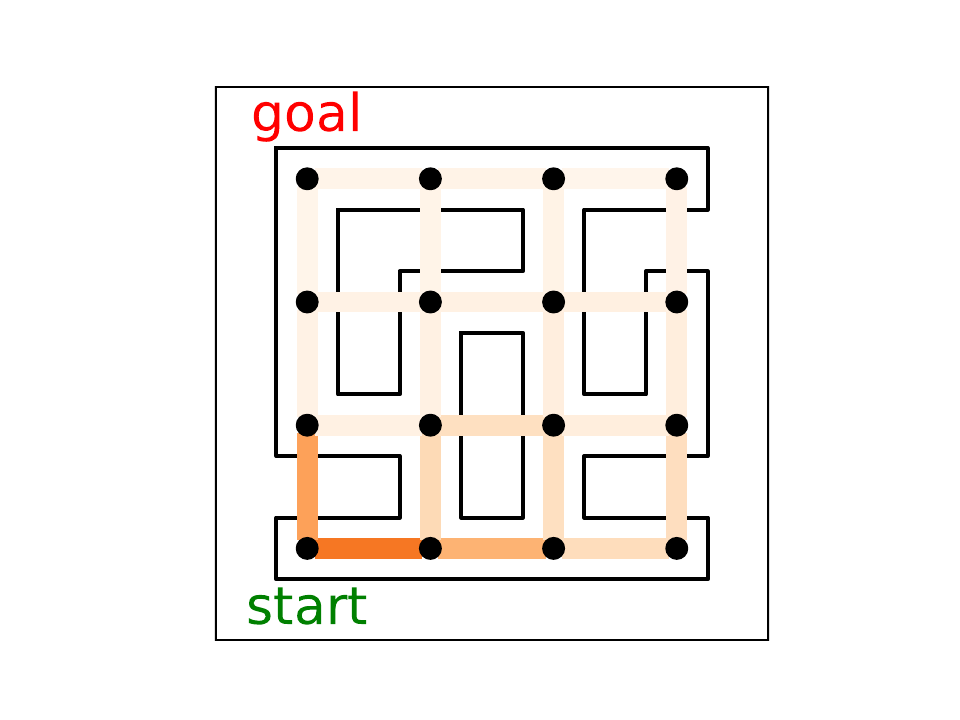}
        \caption{R-ed-UCB}
        \label{figure:heatmap-RUCB} 
    \end{subfigure}
    \hfill
    \begin{subfigure}[b]{0.26\linewidth}
        \centering
        \includegraphics[width=\linewidth,trim={3.2cm 1cm 3.2cm 1cm},clip]{./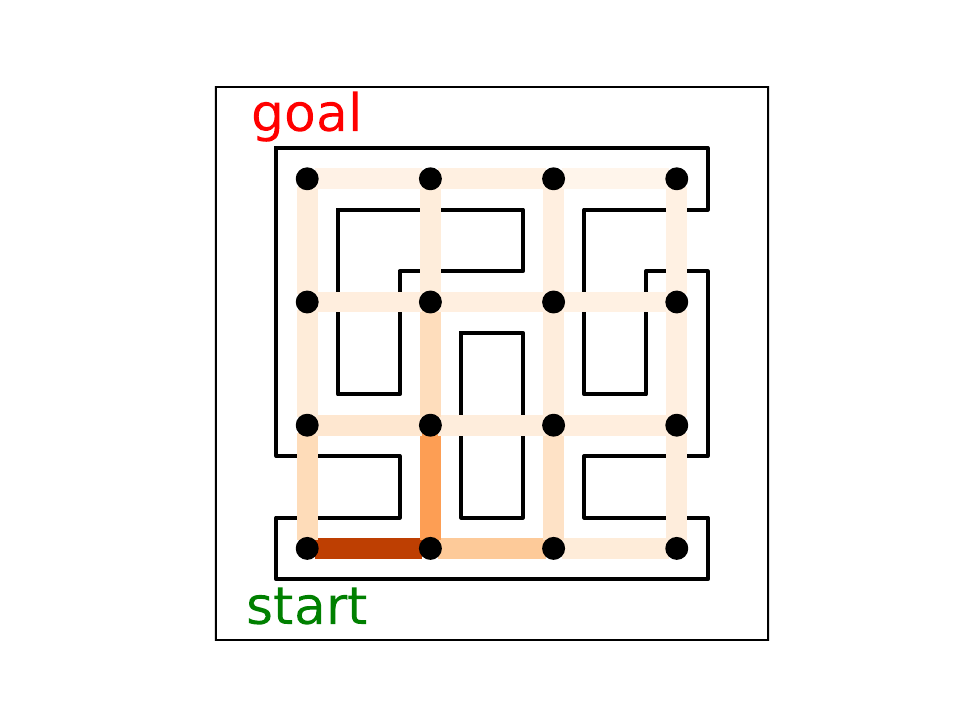}
        \vspace{2pt}
        \includegraphics[width=\linewidth,trim={3.2cm 1cm 3.2cm 1cm},clip]{./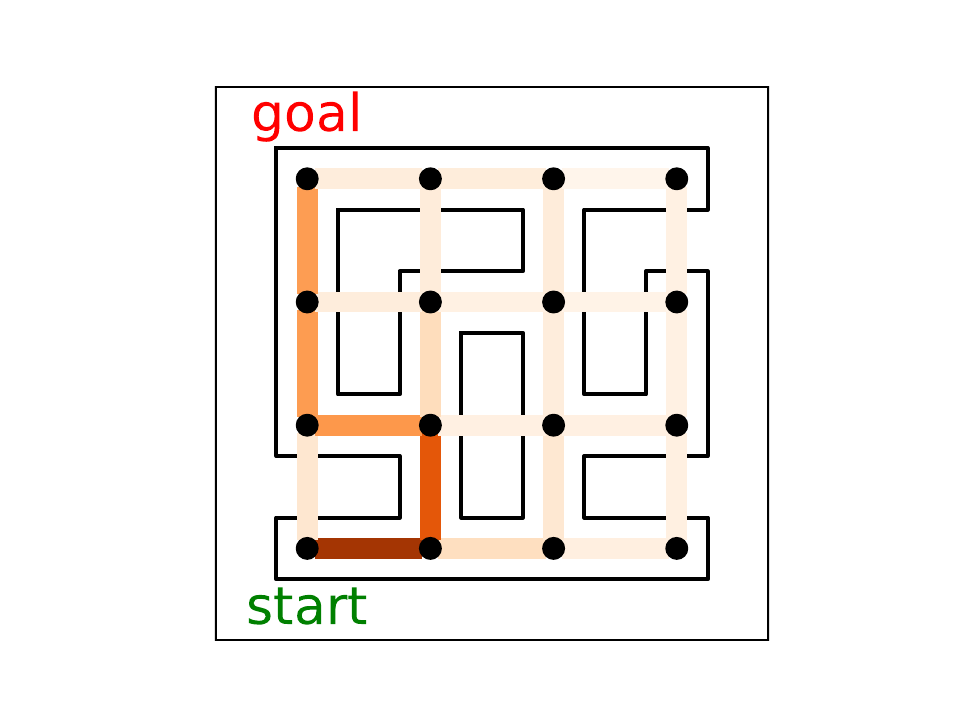}
        \caption{CRUCB (Ours)}
        \label{figure:heatmap-CRUCB}
    \end{subfigure}

    \vspace{-0.2cm}
    \caption{\textbf{Heatmap of visit frequencies in AntMaze-complex.}
    We visualize the visit frequencies of SW-CUCB, R-ed-UCB, and CRUCB at time steps 500 and 3,000 to highlight their respective exploration patterns. Visualizations of other baselines are provided in Appendix~\ref{sec:heatmap}.}
    \label{figure:heatmap}
    \vspace{-0.3cm}
\end{figure}
As depicted in Figure~\ref{figure:heatmap}, existing algorithms fail to capture both the combinatorial structure and the rising nature simultaneously.
Figure~\ref{figure:heatmap-SWCUCB} shows thick traces around blocked walls, indicating that the agent repeatedly attempts the same impossible edges.
This behavior stems from the agent's evaluation, where it perceives a single impossible edge as more optimistic path than a detour path composed of multiple low-reward edges.
Conversely, R-ed-UCB performs uniform exploration as illustrated in Figure ~\ref{figure:heatmap-RUCB}.
This broad search is an unavoidable consequence of initially treating all 178 possible paths as independent super arms.
Even after a sufficient amount of time, its inability to leverage partially shared enhancements leads to incorrect estimations, causing the agent to continue exploring various paths instead of converging on the optimal path.
In contrast, CRUCB, as depicted in Figure~\ref{figure:heatmap-CRUCB}, integrates both perspectives: it avoids repeated trials on impossible paths, efficiently exploits shared improvements, and quickly concentrates on the optimal path.
These observations confirm that the limitations of existing approaches highlighted in Section~\ref{sec:introduction} arise in practice and demonstrate that CRUCB successfully overcomes them.

\section{Conclusion}
\label{sec:conclusion}
In this work, we introduced the Combinatorial Rising Bandit (CRB) framework, modeling combinatorial online learning scenarios wherein selecting a super arm enhances the future rewards of its constituent base arms.
By highlighting the novel challenges from the \textit{partially shared enhancement} in Figure~\ref{figure:toy}, we established that CRB fundamentally differs from classical bandit formulations.
To address this challenge, we developed Combinatorial Rising UCB (CRUCB), a provably efficient algorithm. 
Our extensive experiments across synthetic and deep RL environments demonstrate that CRUCB robustly handles the combinatorial rising structure where prior methods fail.
At the same time, our theoretical analysis establishes tight regret bounds, showing that the algorithm is nearly optimal from an analytical standpoint. 
Taken together, these results highlight that CRUCB offers both tangible benefits in practice and solid guarantees in theory.
While our analysis relies on simplifying assumptions, such as a fixed set of base arms and a static combinatorial structure, these are often reasonable in domains where the action space is pre-defined. However, in certain applications, such as robotic systems that involve skill discovery, the set of feasible actions may evolve over time. Extending CRB to handle such dynamic structures is a promising direction for future research.

\subsubsection*{Acknowledgments}
This work was supported by the Institute for Information \& Communications Technology Planning \& Evaluation (IITP) grant funded by the Korean government (MSIT) (No.RS-2019-II191906, Artificial Intelligence Graduate School Program (POSTECH); No.RS-2024-00436680, Global Research Support Program in the Digital Field; No.RS-2024-00457882, AI Research Hub Project); the Korea Institute for Advancement of Technology (KIAT) grant funded by the Ministry of Trade, Industry and Energy (MOTIE) (No.RS-2025-00564342); the Seoul R\&BD Program (No.SP240008) through the Seoul Business Agency (SBA) funded by the Seoul Metropolitan Government. This project is supported by Microsoft Research Asia.

\bibliography{iclr2026_conference}
\bibliographystyle{iclr2026_conference}

\newpage
\appendix
\newpage
\appendix

This material provides proof of theorems, details of environments and baselines, and additional experimental results:
\begin{itemize}[left = 0.3cm]
\item \textbf{Appendix~\ref{sec:related_work_app}:} Motivating applications of CRB.
\item \textbf{Appendix~\ref{sec:related_work_formulation}:} Comparison with existing rising bandit studies.
\item \textbf{Appendix~\ref{sec:proof}:} Proofs of Theorem~\ref{thm:no_optimal_policy}, ~\ref{thm:optimal_approx}, ~\ref{thm:upper_bound}, ~\ref{thm:lower_bound}, and ~\ref{thm:lower_bound_2}.
\item \textbf{Appendix~\ref{sec:baselines}:} Detailed description and pseudocode of the baselines in Section~\ref{sec:experiments}.
\item \textbf{Appendix~\ref{sec:exp_details}:} Detailed description of the environments in Section~\ref{sec:experiments}.
\item \textbf{Appendix~\ref{sec:add_ex}:} Additional experiments on other combinatorial tasks.
\item \textbf{Appendix~\ref{sec:heatmap}:} Further analysis of exploration on the deep RL environment.
\item \textbf{Appendix~\ref{sec:app_llm_usage}:} The use of Large Language Models.
\end{itemize}

\clearpage
\section{Real-world applications of the CRB framework}
\label{sec:related_work_app}
The CRB framework, which addresses a regret minimization problem, naturally arises in real-world scenarios where complex actions are composed of reusable sub-actions that improve through repetition. We can consider following applications:

\paragraph{Network Routing} optimizes performance metrics such as latency or throughput by selecting network paths (super arms) composed of individual links (base arms). Frequent utilization of specific links enables routing protocols to adapt and improve via better congestion estimation and traffic-pattern learning. Thus, network routing naturally aligns with regret minimization as it balances exploiting known effective routes and exploring potentially better alternatives.
\paragraph{Crowdsourcing} aims to find optimal task assignments by combining annotators (base arms) with datasets, which can be framed as an online combinatorial regret minimization problem \citep{chen2016combinatorial}. Annotators' skill levels improve over repeated tasks, increasing their annotation accuracy. CRB effectively addresses regret minimization here by dynamically reallocating tasks among annotators to leverage their rising skills, thereby optimizing overall annotation quality and cost-effectiveness.
\paragraph{LLM-based Planning} decomposes complex tasks into simpler subtasks (base arms), akin to the Chain-of-Thought (CoT) approach \citep{wei2022chain}. Iterative prompting, where previous outputs refine future ones \citep{zheng2023progressive}, enhances model performance over time (rising reward). By applying CRB to decompose tasks into subtasks, we expect improved performance by exploiting the rising reward structure in these iterative reasoning tasks.

\clearpage
\section{Comparison with existing rising bandit studies}
\label{sec:related_work_formulation}
The rising bandit problem has been widely studied in non-combinatorial settings \citep{fiandri2024rising, fiandri2024thompson, heidari2016tight, metelli2022stochastic, mussi2024best, patil2022mitigating, xia2024llm, amichay2025rising}, where each base arm evolves independently over pulls.
In this work, we consider a combinatorial extension of the rising bandit problem, where each action is a set of base arms. 
This generalization introduces new challenges that fundamentally differ from previous work.

In previous work \citep{heidari2016tight}, it is shown that a constant policy is optimal in the rising bandit setting.
However in Section~\ref{sec:optimality}, we demonstrate that in the combinatorial setting, constant policies are generally not optimal.
Furthermore, \citep{metelli2022stochastic} focus primarily on establishing worst-case regret lower bounds, showing that regret is linear $(\Omega(T))$, highlighting the inherent difficulty of the problem. 
In contrast, we show that under a more fine-grained instance class where reward growth is bounded, the regret lower bound can be sublinear. 
Moreover, we illustrate that this lower bound is tight, nearly matching it with the regret upper bound of our proposed algorithm, CRUCB.

A recent study \citep{genalti2024graph} investigates rising bandits with structured dependencies among arms, introducing a graph-triggered mechanism in which pulling an arm increases the rewards of its neighboring arms. 
While conceptually related to our work, their approach assumes uniform enhancement across neighbors, without modeling the nuanced structure of overlapping actions.
In contrast, our CRB framework models partially shared enhancement, preserving the combinatorial structure.
This distinction makes CRB a more general and unified framework for capturing rising reward dynamics in combinatorial settings.

\clearpage
\section{Proof of theorems}
\label{sec:proof}

\subsection{Proof of Theorem~\ref{thm:no_optimal_policy}}
\label{sec:proof_no_optimal_policy}
It suffices to show that there exists a CRB instance such that the \textit{constant policy} is not the optimal policy.
We consider $k$-MAX problem, where reward function is given as follows:
\begin{align}
    r(S,\vec{\mu}) := \max_{i\in S}(\mu_{i}) \;. \label{eq:thm_no_opitmal}
\end{align}
Note that \eqref{eq:thm_no_opitmal} satisfies all assumptions.
Consider $\vec{\mu}$ such that:
\begin{align}
    &\mu_1(n) = 
    \begin{cases} 
      \frac{10}{T}n & n < \frac{T}{10} \\
      1 & n \geq \frac{T}{10} 
   \end{cases} \;, 
   \\
   &\mu_2(n) = 
    \begin{cases} 
      0.1 & n = 1  \\
      0.9 & n > 1
   \end{cases} \; ,
   \\
    &\mu_3(n) = 0.5 \; .
\end{align}
Let $K=3$, $\mathcal{S} = \left\{(1,2), (1,3), (2,3)\right\}$, $T \gg 100$.
For simplicity, when a base arm is pulled for $n$-th times, then the outcome is $\mu_i(n)$ without considering randomness.
In this problem instance, the optimal constant policy is selecting the super arm $(1,2)$ continuously.
For the best constant policy, it receives $0.1$ for $t=1$, and $\frac{10t}{T}$ for $1< t\leq\frac{9T}{10}$ and $1$ for $t > \frac{9T}{10}$.  
However, if $(2,3)$ is firstly selected once and $(1,2)$ for the remaining time, it receives $0.3$ more rewards than the best constant policy.
This is because selecting $(2,3)$ initially yields an immediate gain of $0.4$ from first selecting, but later results in a loss of $0.1$ due to not playing optimal super arm $(1,2)$.
Consequently, the total reward is higher than that of the best constant policy, which suffices to complete proof.

\subsection{Proof of Theorem~\ref{thm:optimal_approx}}
\label{sec:proof_thm_optimal_approx}
For proof, we first consider a specific case: additive reward.
\begin{lemma} \label{lem:additive_optiaml}
    Given an additive reward $r(S,\vec{\mu})=\sum_{i\in S}\mu_i$, 
    $\pi^*_{\text{const}}$ is exactly optimal.
\end{lemma}

\paragraph{Set up.} 
Since we consider additive reward function, the cumulative reward is invariant with respect to permutations of the order of selecting super arms, which means that a policy can be represented as the vector of number of pulling each super arm, that is, a policy $\pi$ can be represented as follows:
\begin{align}
    \pi  \mapsto \left(T^{\pi}_1, T^{\pi}_2, \cdots, T^{\pi}_{|\mathcal{S}|} \right)\;,
\end{align}
where $T^{\pi}_S$ denotes the number of pulling a super arm $S$ until time $T$ by the policy $\pi$, which satisfies $\sum_{S\in\mathcal{S}}T^{\pi}_S=T$. 
Let $N^{\pi}_{i,T}$ denote the number of selecting a base arm $i$ until time $T$ by $\pi$. Then, $N^{\pi}_{i,T}$ can be represented as follows:
\begin{align}
    N^{\pi}_{i,T} = \sum_{S \in \mathcal{S}} T^{\pi}_S \ind \left\{i\in S\right\}\;,
\end{align}
where $\ind$ denotes the indicator function.
Let $\pi^*$ be the optimal policy given $\vec{\mu}$ and $T$.
We show that if $\pi^*$ pulls at least two different super arms, then a constant policy can be constructed so that generates larger than or equal to the expected cumulative reward as the one produced by $\pi^*$, which suffices to conclude.

Assume that $\pi^*$ selects $m$ distinct super arms, denoted by super arms as $S_1, S_2, \ldots, S_m$.
Define a subset of base arms $B_c$ and $B_j$ for each $j\in [m]$ as follows:
\begin{align}
    &B_c := \left\{i\in [K] : i\in S_j, \quad \forall j\in [m] \right\}\;, \\
    &B_j := S_j \setminus B_c \;.
\end{align}
$B_c$ represents the subset of the common base arms included in every selected super arm by the optimal policy $\pi^*$ and $B_j$ represents the subset of base arms included in the super arm $S_j$ except for $B_c$.

\paragraph{Claim 1.}  $\sum_{i\in B_j}\mu_i(N^{\pi^*}_{i,T})$ is equal for all $j\in[m]$.
\label{eq:claim1}
\begin{proof}
To establish Claim~1, we consider two arbitrary distinct super arms $S_1$ and $S_2$, without loss of generality. 
We observe $\sum_{i\in B_1\setminus B_2}\mu_i(N^{\pi^*}_{i,T}) \geq \sum_{i\in B_2\setminus B_1}\mu_i(N^{\pi^*}_{i,T})$. 
If not, that is, $\sum_{i\in B_1\setminus B_2}\mu_i(N^{\pi^*}_{i,T}) < \sum_{i\in B_2\setminus B_1}\mu_i(N^{\pi^*}_{i,T})$, we can construct new policy $\pi_1$ as follows:
\begin{align} \label{eq:optimal1_0)}
    &T^{\pi_1}_S = 
    \begin{cases} 
      T^{\pi^*}_{S_1} - 1 & S=S_1\;  \\
      T^{\pi^*}_{S_2} + 1 & S=S_2\;  \\
      T^{\pi^*}_{S} &  \text{otherwise}. \;\\ 
   \end{cases}
\end{align}
Then, $N^{\pi_1}_{i,T}$ is given by:
\begin{align}
    &N^{\pi_1}_{i,T} = 
    \begin{cases} 
      N^{\pi^*}_{i,T} - 1 & i\in B_1\setminus B_2\;  \\
      N^{\pi^*}_{i,T} + 1 & i\in B_2\setminus B_1\;  \\
      N^{\pi^*}_{i,T} &  \text{otherwise}\;. \\ 
   \end{cases}
\end{align}
The difference between the expected cumulative reward of $\pi^*$ and $\pi_1$ is given by:
\begin{align}
    &\quad \sum_{i\in [K]}\left(\sum_{n\in [N^{\pi^*}_{i,T}]}\mu_i(n) - \sum_{n\in [N^{\pi_1}_{i,T}]}\mu_i(n)\right) \\
    &= \sum_{i\in B_1\setminus B_2}\mu_i(N^{\pi^*}_{i,T}) - \sum_{i\in B_2 \setminus B_1}\mu_i(N^{\pi^*}_{i,T}+1) \; \label{eq:optimal1} \\
    &< 0 \; , \label{eq:optimal1_1}
\end{align}
which indicates that the cumulative reward of $\pi_1$ is larger than that of $\pi^*$.
However, it is contradicting with the assumption that $\pi^*$ is optimal and thus we have $\sum_{i\in B_1\setminus B_2}\mu_i(N^{\pi^*}_{i,T}) \geq \sum_{i\in B_2\setminus B_1}\mu_i(N^{\pi^*}_{i,T})$.
By applying the same logic, we can also derive that $\sum_{i\in B_1 \setminus B_2}\mu_i(N^{\pi^*}_{i,T}) \leq \sum_{i\in B_2 \setminus B_1}\mu_i(N^{\pi^*}_{i,T})$.
Combing these results, we have $\sum_{i\in B_1 \setminus B_2}\mu_i(N^{\pi^*}_{i,T}) = \sum_{i\in B_2 \setminus B_1}\mu_i(N^{\pi^*}_{i,T})$. 
By adding $\sum_{i\in B_1 \cap B_2}\mu_i(N^{\pi^*}_{i,T})$, we can derive that $\sum_{i\in B_1}\mu_i(N^{\pi^*}_{i,T}) = \sum_{i\in B_2}\mu_i(N^{\pi^*}_{i,T})$.
Since we can apply the same logic to any arbitrary super arm pair, we conclude the claim.  
\end{proof}

\paragraph{Claim 2.} $\sum_{i\in B_j\setminus B_{j'}}\mu_i(N^{\pi^*}_{i,T}-T^{\pi^*}_{S_1}+1) = \sum_{i\in B_j \setminus B_{j'}}\mu_i(N^{\pi^*}_{i,T})$ for any $j,\; j'\in [m]$\;. \label{eq:claim2}
\begin{proof}
    Similar to Claim~1, we consider $S_1$ and $S_2$, without loss of generality.
Given that $\sum_{i\in B_1\setminus B_2}\mu_i(N^{\pi^*}_{i,T}) \leq \sum_{i\in B_2\setminus B_1}\mu_i(N^{\pi^*}_{i,T})$ from preceding analysis, we observe $\sum_{i\in B_1\setminus B_2}\mu_i(N^{\pi^*}_{i,T}- T^{\pi^*}_{S_1}+1) \geq \sum_{i\in B_2\setminus B_1}\mu_i(N^{\pi^*}_{i,T})$. 
Otherwise, that is, $\sum_{i\in B_1\setminus B_2}\mu_i(N^{\pi^*}_{i,T}- T^{\pi^*}_{S_1}+1) < \sum_{i\in B_2\setminus B_1}\mu_i(N^{\pi^*}_{i,T})$, we can construct new policy $\pi_2$ such that:
\begin{align} 
    &T^{\pi_2}_S = 
    \begin{cases} 
      0 & S=S_1 \;  \\
      T^{\pi^*}_{S_1} + T^{\pi^*}_{S_2}   & S=S_2 \; \\
      T^{\pi^*}_{S} &  \text{otherwise}. \;\\ 
   \end{cases}
\end{align}
Then, $N^{\pi_2}_{i,T}$ is given by:
\begin{align}
    &N^{\pi_2}_{i,T} = 
    \begin{cases} 
      N^{\pi^*}_{i,T} - T^{\pi^*}_{S_1} & i\in B_1\setminus B_2\;  \\
      N^{\pi^*}_{i,T} + T^{\pi^*}_{S_1} & i\in B_2\setminus B_1\;  \\
      N^{\pi^*}_{i,T} &  \text{otherwise} \;.\\ 
   \end{cases}
\end{align}
The difference between the cumulative rewards of $\pi^*$ and $\pi_2$ is given by:
\begin{align}
    &\sum_{i\in [K]}\left(\sum_{n\in [N^{\pi^*}_{i,T}]}\mu_i(n) - \sum_{n\in [N^{\pi_2}_{i,T}]}\mu_i(n)\right) \\
    = &\sum_{i\in B_1\setminus B_2}\sum_{n=N^{\pi^*}_{i,T}-T^{\pi^*}_{S_1}+1}^{N^{\pi^*}_{i,T}}\mu_i(n) - \sum_{i\in B_2\setminus B_1}\sum_{n=N^{\pi^*}_{i,T}+1}^{N^{\pi^*}_{i,T}+T^{\pi^*}_{S_1}}\mu_i(n)   \\
    = & \sum_{l\in [T^{\pi^*}_{S_1}]}\left(\sum_{i\in B_1\setminus B_2}\mu_i(N^{\pi^*}_{i,T}-T^{\pi^*}_{S_1}+l)- \sum_{i\in B_2\setminus B_1}\mu_i(N^{\pi^*}_{i,T}+l)\right) \\ 
    \leq & \left(\sum_{i\in B_1\setminus B_2}\mu_i(N^{\pi^*}_{i,T}-T^{\pi^*}_{S_1}+1) -\sum_{i\in B_2\setminus B_1}\mu_i(N^{\pi^*}_{i,T})\right) \label{eq:optimal2} \\
    &+ (T^{\pi^*}_{S_1}-1)\left(\sum_{i\in B_1\setminus B_2}\mu_i(N^{\pi^*}_{i,T})- \sum_{i\in B_2\setminus B_1}\mu_i(N^{\pi^*}_{i,T})\right) \notag \\
    < &0 \;, 
\end{align}
where \eqref{eq:optimal2} holds since $\mu_i(N^{\pi^*}_{i,T}-T^{\pi^*}_{S_1}+l) \leq \mu_i(N^{\pi^*}_{i,T})$ and $\mu_i(N^{\pi^*}_{i,T}+l) > \mu_i(N^{\pi^*}_{i,T})$ for any $l\in [2, T^{\pi^*}_{S_1}]$ for any base arm $i$ by the definition of combinatorial rising bandit.
It indicates that the cumulative reward of $\pi_2$ is larger than that of $\pi^*$,  which is a contradiction with assumption that $\pi^*$ is optimal. 
Therefore, we have  $\sum_{i\in B_1\setminus B_2}\mu_i(N^{\pi^*}_{i,T}- T^{\pi^*}_{S_1}+1) \geq \sum_{i\in B_2\setminus B_1}\mu_i(N^{\pi^*}_{i,T})$.
Combining this observation with the previous observation, we have $\sum_{i\in B_1 \setminus B_2}\mu_i(N^{\pi^*}_{i,T}-T^{\pi^*}_{S_1}+1) = \sum_{i\in B_1 \setminus B_2}\mu_i(N^{\pi^*}_{i,T})$. This result implies that rewards of all base arms in $S_1$ are flat after pulling for $N^{\pi^*}_{i,T}-T^{\pi^*}_{S_1}$ times.
Since we can apply the same logic to any arbitrary super arm pair, we conclude the claim.
\end{proof}

\paragraph{Induction.}
Lastly, we construct constant policy inductively.
As before, we choose two arbitrary two super arm and consider $S_1$ and $S_2$ without loss of generality.
we revisit $\pi_2$. 
By Claim~1 and Claim~2 the difference between $\pi^*$ and $\pi_2$ equals 0, which means that $\pi_2$ is also an optimal policy.
We remark that $\pi_2$ plays $m-1$ distinct super arms.
Applying preceding logic inductively, we can construct the optimal policy pulls only one super arm, which completes proof for Lemma~\ref{lem:additive_optiaml}.

Then, we are ready to prove Theorem~\ref{thm:optimal_approx}.
\begin{proof}
For the proof, we define $S'_\text{const}$ and $\pi'_\text{const}$ as follows:
\begin{align}
        S'_{\text{const}} &:= \argmax_{S} \sum_{t\in [T]} \sum_{i\in S} \mu_{i}(t-1)\;,\\
        S^*_{\text{const}} &:= \argmax_{S} \sum_{t\in [T]}r(S,\vec{\mu}_{S}(t-1)) \;,\\
        \pi'_{\text{const}}(t) &:= S'_{const} \quad \forall t\in[T] \;,\\
        \pi^*_{\text{const}}(t) &:= S^*_{const} \quad \forall t\in[T]\;,
\end{align}
where $\vec{\mu}_{S}(t-1):=\{\mu_i(t-1): i\in S\}$.
Intuitively, $\pi'_{\text{const}}$ indicates the optimal constant policy when the reward function is additive and $\pi^*_{\text{const}}$ indicates the optimal constant policy when the reward function is given $r(\cdot)$. 

Let $\pi^*$ be optimal policy, and denote the selected super arm and expectation of base arm at time $t$ under $\pi^*$ as $S^*_t$ and $\vec{\mu}^*_t$ respectively.
Then, we have:
\begin{align}
    \mathbb{E}_{\pi^*}\!\left[\sum_{t \in [T]} R_t\right] &= \sum_{t\in [T]} r(S^*_t,\vec{\mu}^*_{t-1}) \\
    &= \sum_{t\in [T]} \left( r(S^*_t,\vec{\mu}^*_{t-1}) - r(S^*_t,\vec{0}) + r(S^*_t,\vec{0})  \right) \;\\
    &\leq B_U \sum_{t\in [T]} \sum_{i\in S^*_t} \mu^*_i(t-1) \; \label{eq:optimal_sum}\\
    &\leq B_U \sum_{t\in [T]} \sum_{i\in S'_\text{const}} \mu_i(t-1) \; . \label{eq:const_optimal_sum}
\end{align} 
From Lemma~\ref{lem:additive_optiaml}, we know that the reward under the optimal policy is bounded by the reward under a constant arm selection, which leads to the inequality in \eqref{eq:const_optimal_sum}.

Now, consider the reward of $\pi'_{\text{const}}$. Then, we have:
\begin{align}
    \mathbb{E}_{\pi'_{\text{const}}}\!\left[\sum_{t \in [T]} R_t\right] &=\sum_{t\in[T]}r(S'_\text{const},\vec{\mu}_{S'_\text{const}}(t-1)) \\
    &= \sum_{t\in [T]} \left( r(S'_\text{const},\vec{\mu}_{S'_\text{const}}(t-1)) - r(S'_\text{const},\vec{0}) + r(S'_\text{const},\vec{0})  \right) \;\\
    &\geq B_L\sum_{t\in [T]} \sum_{i\in S'_\text{const}} \mu_i(t-1) \; . \label{eq:const_lower_bound}
\end{align}

From the inequalities \eqref{eq:const_optimal_sum} and \eqref{eq:const_lower_bound}, we can derive the following ratio:

\begin{align}
    \frac{\mathbb{E}_{\pi^*}\!\left[\sum_{t \in [T]} R_t\right]}
         {\mathbb{E}_{\pi^*_{\text{const}}}\!\left[\sum_{t \in [T]} R_t\right]} 
&\leq \frac{B_U\sum_{t\in [T]} \sum_{i\in S'_\text{const}} \mu_i(t-1)}{B_L\sum_{t\in [T]} \sum_{i\in S'_\text{const}} \mu_i(t-1)} \\
    &= \frac{B_U}{B_L} \;.
\end{align}

Since $S^*_\text{const}$ is defined to maximize the reward we have the inequality:
\begin{align}
\mathbb{E}_{\pi^{*}_{\text{const}}}\!\left[\sum_{t \in [T]} R_t\right] > \mathbb{E}_{\pi'_{\text{const}}}\!\left[\sum_{t \in [T]} R_t\right] \;.
\end{align}

Finally, combining all the inequalities, we conclude:
\begin{align}
    \frac{\mathbb{E}_{\pi^*}\!\left[\sum_{t \in [T]} R_t\right]}
         {\mathbb{E}_{\pi^*_{\text{const}}}\!\left[\sum_{t \in [T]} R_t\right]}
    \leq 
    \frac{\mathbb{E}_{\pi^*}\!\left[\sum_{t \in [T]} R_t\right]}
         {\mathbb{E}_{\pi'_{\text{const}}}\!\left[\sum_{t \in [T]} R_t\right]}
    \leq \frac{B_U}{B_L} \;.
\end{align}

\end{proof}

\subsection{Proof of Theorem~\ref{thm:upper_bound}}
\label{sec:proof_thm_upper_bound}
\begin{proof}
    We rewrite the regret as follows.
    \begin{align}
        \textit{Reg}(\pi,T)  &= \sum_{t\in [T]}\EXP_{\pi^*} \left[R_t\right] - \sum_{t\in [T]}\EXP_{\pi} \left[R_t\right] \;\\
        &= \sum_{t\in [T]} r(S_t^{\pi^*},\vec{\mu}_{S_t^{\pi^*}}) - \sum_{t\in [T]}\EXP_{\pi} \left[r(S_t^{\pi},\vec{\mu}_{S_t^{\pi}})\right] \; \label{eq:thm4_1}\\ 
        &= \sum_{t\in [T]}\EXP_{\pi} \left[r(S_t^{\pi^*},\vec{\mu}_{S_t^{\pi^*}}) -r(S_t^{\pi},\vec{\mu}_{S_t^{\pi}})\right]\;,
    \end{align}
    where \eqref{eq:thm4_1} holds since in semi-bandit feedback setting, the optimal policy is characterized as a deterministic policy.
    
    To define well-estimated event, we define $\hat{\mu}_i(t)$ and $\widetilde{\mu}_i(t)$ as follows:
        \begin{align}
            &\hat{\mu}_i(t) := \frac{1}{h_i}\sum_{l=N_{i,t-1}-h_i+1}^{N_{i,t-1}}\left(X_i(l)+(t-l)\frac{X_i(l)-X_i(l-h_i)}{h_i}\right)\; \\
            &\widetilde{\mu}_i(t) := \frac{1}{h_i}\sum_{l=N_{i,t-1}-h_i+1}^{N_{i,t-1}}\left(\mu_i(l)+(t-l)\frac{\mu_i(l)-\mu_i(l-h_i)}{h_i}\right)\; \\
            &\beta_i(t) := \sigma\left(t-N_{i,t-1}+h_i-1\right) \sqrt{\frac{10\log t^3}{h_i^3}} \;\\
            &\Acute{\mu}_i(t) := \hat{\mu}_i(t) + \beta_i(t) \;.
        \end{align}
        We define well-estimated event $\mathcal{E}_{t}$ as follows:
        \begin{align}
            &\mathcal{E}_{i,t}:=\left\{|\hat{\mu}_i(t)-\widetilde{\mu}_i(t)|\leq \beta_i(t) \right\}\; ,\\
            &\mathcal{E}_t:=\cap_{i\in [K]}\mathcal{E}_{i,t}\;.
        \end{align}

    We decompose the regret with well-estimated event $\mathcal{E}_{t}$ as follows:
    \begin{align}
        &\EXP_{\pi} \left[r(S_t^{\pi^*},\vec{\mu}_{S_t^{\pi^*}}) -r(S_t^{\pi},\vec{\mu}_{S_t^{\pi}})\right]\; \\
        = &\underbrace{\EXP_{\pi} \left[\left(r(S_t^{\pi^*},\vec{\mu}_{S_t^{\pi^*}}) -r(S_t^{\pi},\vec{\mu}_{S_t^{\pi}})\right) \ind\{\lnot\mathcal{E}_t\}\right]}_{(A)} + \underbrace{\EXP_{\pi} \left[\left(r(S_t^{\pi^*},\vec{\mu}_{S_t^{\pi^*}}) -r(S_t^{\pi},\vec{\mu}_{S_t^{\pi}})\right) \ind\{\mathcal{E}_t\}\right]}_{(B)} \; .       
    \end{align}
    
    Firstly, we bound term (A):
    \begin{align}
        \EXP_{\pi} \left[\left(r(S_t^{\pi^*},\vec{\mu}_{S_t^{\pi^*}}) -r(S_t^{\pi},\vec{\mu}_{S_t^{\pi}})\right) \ind\{\lnot\mathcal{E}_t\}\right] &\leq
        L\sum_{t\in [T]}\EXP_{\pi}\left[\ind\{\lnot\mathcal{E}_t\}\right] \;\\
        &= L\sum_{t\in [T]}\Pr(\lnot\mathcal{E}_{t}) \;\\
        &\leq \sum_{t\in [T]} \frac{2LK}{t^2}  \;\label{eq:thm4_2}\\ 
        &\leq \frac{LK\pi^2}{3}\;. \label{eq:thm4_3}
    \end{align}
    where \eqref{eq:thm4_2} holds by Lemma~\ref{lem:concen} and \eqref{eq:thm4_3} holds since $\sum_{t=1}^{\infty}\frac{1}{t^2}=\frac{\pi^2}{6}$.

    \begin{lemma} \citep{metelli2022stochastic} \label{lem:concen}
    For every round $K < t < T$, and window size $1\leq h_i \leq \varepsilon N_{i,t-1}$, we have:
    \begin{align}
        \Pr(\lnot\mathcal{E}_{t}) \leq \frac{2K}{t^2}\;.
    \end{align}
    \end{lemma}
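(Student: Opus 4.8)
The plan is to reduce the statement to a standard sub-Gaussian tail bound applied to each base arm separately, and then to combine the per-arm estimates by a union bound. Writing $\xi_i(l) := X_i(l) - \mu_i(l)$ for the centered noise of the $l$-th pull of arm $i$, the key observation is that, by construction, $\hat{\mu}_i(t) - \widetilde{\mu}_i(t)$ is exactly the version of $\hat{\mu}_i(t)$ in which every outcome $X_i(\cdot)$ is replaced by its noise $\xi_i(\cdot)$. Since the outcomes of successive pulls of a fixed arm are independent and each $\xi_i(l)$ is centered and $\sigma^2$-sub-Gaussian, the deviation $\hat{\mu}_i(t) - \widetilde{\mu}_i(t)$ is a weighted sum of independent sub-Gaussian variables, hence itself sub-Gaussian with variance proxy $\sigma^2 V_i$, where $V_i$ is the sum of its squared coefficients.

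First I would make the coefficients explicit. Substituting $m = l - h_i$ in the slope term and writing $\Delta := t - N_{i,t-1} \ge 1$, the deviation becomes $\sum_{l} \xi_i(l)\bigl(\tfrac{1}{h_i} + \tfrac{t-l}{h_i^2}\bigr) - \sum_m \xi_i(m)\tfrac{t-m-h_i}{h_i^2}$ over the two \emph{disjoint} index blocks $\{N_{i,t-1}-h_i+1,\dots,N_{i,t-1}\}$ and $\{N_{i,t-1}-2h_i+1,\dots,N_{i,t-1}-h_i\}$, so that the $2h_i$ noise terms are all distinct and independent. Bounding the $h_i$ coefficients of the first block by $(\Delta+2h_i-1)/h_i^2$ and those of the second block by $(\Delta+h_i-1)/h_i^2$, the sum of squared coefficients satisfies $V_i \le \bigl[(\Delta+2h_i-1)^2 + (\Delta+h_i-1)^2\bigr]/h_i^3 \le 5(\Delta+h_i-1)^2/h_i^3$, where the last inequality uses $\Delta \ge 1$ to absorb $(\Delta+2h_i-1)^2 \le 4(\Delta+h_i-1)^2$.

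With this variance bound in hand, the sub-Gaussian tail inequality gives $\Pr(|\hat{\mu}_i(t)-\widetilde{\mu}_i(t)| > \beta_i(t)) \le 2\exp\bigl(-\beta_i(t)^2/(2\sigma^2 V_i)\bigr)$; substituting the definition of $\beta_i(t)$ together with $\log t^3 = 3\log t$ shows the exponent is at least $3\log t$, whence each such probability is at most $2\exp(-3\log t) = 2/t^3$. The remaining step, requiring care, is adaptivity: the window endpoint $N_{i,t-1}$ is itself random and correlated with the noise, so the displayed bound cannot be applied directly to the realized window. I would resolve this by a union bound over all possible window positions: for each fixed hypothetical pull count $n \in \{1,\dots,t-1\}$ the corresponding deviation is a fixed linear combination of independent noises, contributing at most $2/t^3$, and since $\lnot\mathcal{E}_{i,t}$ is contained in the union of these at most $t$ events, $\Pr(\lnot\mathcal{E}_{i,t}) \le 2/t^2$. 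A final union bound over the $K$ arms yields $\Pr(\lnot\mathcal{E}_t) \le 2K/t^2$.

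I expect the main obstacle to be twofold: getting the coefficient and variance computation right so that the tuned constant $10$ inside $\beta_i(t)$ produces a per-event bound of $2/t^3$ rather than merely $2/t^2$ (the split bound $V_i \le 5(\Delta+h_i-1)^2/h_i^3$ is what makes the constant match, and a uniform coefficient bound would be too loose), and arguing the adaptive-window issue rigorously. The cube in $\log t^3$ exists precisely to pay for the union over the $t$ possible window positions, leaving the clean factor $2K/t^2$ after summing over arms.
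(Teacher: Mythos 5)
The paper itself does not prove this lemma --- it imports it verbatim from \citet{metelli2022stochastic} --- so the only meaningful comparison is with that source, and your reconstruction is correct and follows essentially the same route: center the noises, split the estimator's deviation into two disjoint blocks of $h_i$ independent sub-Gaussian terms, bound the squared coefficients by $V_i \le \bigl[(\Delta+2h_i-1)^2+(\Delta+h_i-1)^2\bigr]/h_i^3 \le 5(\Delta+h_i-1)^2/h_i^3$ so that the constant $10$ in $\beta_i(t)$ yields exponent exactly $\log t^3 = 3\log t$ and a per-event tail of $2/t^3$, then union-bound over the at most $t$ possible realized values of $N_{i,t-1}$ (with $h_i$ a deterministic function of the pull count, as in CRUCB, so fixing $n$ fixes the window) and over the $K$ arms to land on $2K/t^2$. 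Your two flagged subtleties are exactly the right ones: a uniform coefficient bound would give only $V_i \le 8(\Delta+h_i-1)^2/h_i^3$, hence exponent $\tfrac{15}{8}\log t$ and a tail too weak to survive the union over window positions, and the adaptivity of $N_{i,t-1}$ is precisely what the cube in $\log t^3$ is there to pay for.
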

    
    Next, we bound term (B).
    Firstly, we utilize Lipschitz continuity.
    \begin{align}
    &r(S_t^{\pi^*},\vec{\mu}_{S_t^{\pi^*}}) -r(S_t^{\pi},\vec{\mu}_{S_t^{\pi}}) \\
    = &r(S_t^{\pi^*},\vec{\mu}_{S_t^{\pi^*}}) -r(S_t^{\pi},\vec{\mu}_{S_t^{\pi}}) +  r(S^{\pi}_t,\vec{\Acute{\mu}}_{S^{\pi}_t}) \!-\!  r(S^{\pi}_t,\vec{\Acute{\mu}}_{S^{\pi}_t}) + r(S^{\pi^*}_t, \vec{\Acute{\mu}}_{S_t^{\pi^*}})  \!-\! r(S^{\pi^*}_t,\vec{\Acute{\mu}}_{S_t^{\pi^*}}) \\
    \leq & -r(S_t^{\pi},\vec{\mu}_{S_t^{\pi}}) +  r(S^{\pi}_t,\vec{\Acute{\mu}}_{S^{\pi}_t}) -  r(S^{\pi}_t,\vec{\Acute{\mu}}_{S^{\pi}_t}) + r(S^{\pi^*}_t, \vec{\Acute{\mu}}_{S_t^{\pi^*}}) \label{eq:thm4_4}\\
    \leq &r(S^{\pi}_t,\vec{\Acute{\mu}}_{S^{\pi}_t}) -  r(S^{\pi}_t,\vec{\mu}_{S^{\pi}_t}) \label{eq:thm4_5}\\
    \leq & B\sum_{i\in S_t^{\pi}}|\Acute{\mu}_i(t)-\mu_i(t)| \label{eq:thm4_6}\; \\
    \leq & \underbrace{B\sum_{i\in S_t^{\pi}}\widetilde{\mu}_i(t)-\mu_i(t)}_{(B1)} + \underbrace{2B\sum_{i\in S_t^{\pi}}\beta_i(t)}_{(B2)} \;, \label{eq:thm4_7}
    \end{align}
    where \eqref{eq:thm4_4} holds by Assumption~\ref{asu:monotone} and \eqref{eq:thm4_5} holds by definition of CRUCB and \eqref{eq:thm4_6} holds by Lipschitz assumption and \eqref{eq:thm4_7} holds by well-estimated event $\gE_t$.
    
    We bound the term (B1) defining $t_{i,n}$ as the time step where the base arm $i$ is pulled the $n^{\text{th}}$ time:
    \begin{align}
        \sum_{t\in [T]}\sum_{i\in S_t^{\pi}}\widetilde{\mu}_i(t)-\mu_{i}(t) &\leq 2K+\sum_{i\in [K]}\sum_{n=3}^{N_{i,t}}\min\left\{\widetilde{\mu}_i(t_{i,n})-\mu_{i}(t),1\right\} \\
        &\leq 2K+\sum_{i\in [K]}\sum_{n=3}^{N_{i,t}}\min\left\{\frac{1}{2}(2t_{i,n} \!-\! 2n + h_i \!-\!1) \gamma_i(n\!-\!2h_i+1) ,1\right\} \label{eq:concen4} \\
        &=2K+\sum_{i\in [K]}\sum_{n=3}^{N_{i,t}}\min\left\{T\gamma_i((1-2\varepsilon)n) ,1\right\} \\
        &\leq 2K+T^{q}\sum_{i\in [K]}\sum_{n=3}^{N_{i,t}}\gamma_i((1-2\varepsilon)n)^q \label{eq:concen5} \\ 
        &\leq 2K+KT^{q}\left(\frac{1}{1-2\varepsilon}\right)\Upsilon\left((1-2\varepsilon)\frac{LT}{K},q\right), \label{eq:concen6}
    \end{align}
    where \eqref{eq:concen4} follows from the Lemma~A.3, in \citep{metelli2022stochastic}, \eqref{eq:concen5} follows from the fact $\min(s, 1)\leq \min(s,1)^q \leq s^q$ for $q\in[0,1]$, and \eqref{eq:concen6} follows from the Lemma~C.2. in \citep{metelli2022stochastic}.
    Now, we bound the term (B2). 
    \begin{align}
        \sum_{t\in[T]}\sum_{i\in S_t^{\pi}}2B\min\left\{\beta_i(t),1\right\} &= \sum_{t\in[T]}\sum_{i\in S_t} 2B\min\left\{\sigma\left(t-N_{i,t-1}+h_i-1\right)\sqrt{\frac{2\log 4t^3}{h^3}},1 \right\} \\
        &\leq \sum_{t\in[T]}\sum_{i\in S_t^{\pi}} 2B\min\left\{T\sigma\sqrt{\frac{6\log 4T}{\left(\varepsilon\lfloor N_{i,t}\rfloor\right)^3}},1\right\} \\
        &= \sum_{i\in [K]}\sum_{n\in[N_{i,t}]} 2B\min\left\{T\sigma\sqrt{\frac{6\log 4T}{\left(\varepsilon\lfloor n\rfloor\right)^3}},1\right\} \;.
    \end{align}
    Choose $n'=\frac{(2B\sigma T)^{\frac{2}{3}}\left(6\log(4T)\right)^{\frac{1}{3}}}{\varepsilon}$. Then for $n> n'$
    \begin{align}
        2B\sigma T\sqrt{\frac{6\log 4T}{\left(\varepsilon\lfloor n\rfloor\right)^3}} \leq 1 \;.
    \end{align}
    Thus, we have:
    \begin{align}
        \sum_{i\in [K]}\sum_{n=1}^{N_{i,t}} 2B\min\left\{\sigma T\sqrt{\frac{6\log 4T}{\left(\varepsilon\lfloor n\rfloor\right)^3}},1\right\} &\leq \sum_{i\in [K]} \left(n'+\sum_{n=n'+1}^{T}2B\sigma T\sqrt{\frac{6\log 4T}{\left(\varepsilon\lfloor n\rfloor\right)^3}}\right) \\
        &\leq K\left(n'+2B\sigma T\sqrt{\frac{6\log 4T}{\varepsilon^{3}}}\int_{n'}^{\infty} x^{-\frac{3}{2}}dx \right) \label{eq:increasing_comb}\\
        &\leq \frac{3K}{\epsilon}\left((2B\sigma T)^{\frac{2}{3}}(6\log 4T)^{\frac{1}{3}}\right) \;, \label{eq:beta}
    \end{align}
    where \eqref{eq:increasing_comb} comes from the fact that the sum of monotone decreasing function can be upper bounded. 
    Combining the results from \eqref{eq:thm4_3}, \eqref{eq:concen6}, and \eqref{eq:beta}, we conclude the proof.
\end{proof}

\subsection{Proof of Theorem~\ref{thm:lower_bound}}
\label{sec:proof_thm_lower_bound}

\begin{proof}
Firstly, we consider non-combinatorial case, which means that every super arm has only one base arm.
We construct two different problems and show that no policy can achieve sub-linear regret.

\begin{lemma} \label{lem:lower_bound_non_comb}
    Let $\mathcal{I'}$ be the set of all available two-armed rising bandit problem. 
    For sufficiently large time $T$, any policy $\pi$ suffers regret:
    \begin{align} 
        \min_{\pi}\max_{\vec{\mu}\in\mathcal{I'}}\textit{Reg}_{\vec{\mu}}(\pi,T) \geq \frac{T}{16}\;,
     \end{align}
\end{lemma}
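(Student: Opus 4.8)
The plan is to prove the two-armed lower bound by a two-point (Le Cam--style) argument, but one driven by the \emph{delay} inherent to rested rising rewards rather than by estimation noise. First I would fix $\tau=\lfloor T/4\rfloor$ and build two symmetric instances over arms $\{1,2\}$ from the concave, non-decreasing profiles $\mu_H(n)=\min(2n/T,1)$ and $\mu_L(n)=\min(2n/T,1/2)$: in instance $A$ arm $1$ follows $\mu_H$ and arm $2$ follows $\mu_L$, and in instance $B$ the roles are swapped. Both profiles lie in $[0,1]$, are non-decreasing, and have non-increasing increments, so they are valid members of $\mathcal{I}'$ under Assumption~\ref{asu:rising}; crucially $\mu_H(n)=\mu_L(n)$ for all $n\le\tau$, so the two instances are \emph{exactly} indistinguishable until some arm is pulled more than $\tau$ times. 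Since the minimax risk dominates the Bayes risk, it suffices to lower bound $\tfrac12\big(R_{\vec\mu^A}(\pi,T)+R_{\vec\mu^B}(\pi,T)\big)$ under the uniform prior on $\{A,B\}$.

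Next I would reduce the reward to pull counts. By Assumption~\ref{asu:additive} the expected cumulative reward depends only on the numbers $N_1,N_2$ of pulls of each arm (the order being irrelevant), so writing $F_f(N):=\sum_{n=1}^{N}f(n)$, the optimal value is $F_{\mu_H}(T)$ (playing the rising arm throughout, consistent with Theorem~\ref{thm:optimal_policy}), and the expected regret on a given instance equals $F_{\mu_H}(T)-F_{\mu_H}(N^{\mathrm{opt}})-F_{\mu_L}(N^{\mathrm{sub}})$, where $N^{\mathrm{opt}},N^{\mathrm{sub}}$ are the counts of the optimal and suboptimal arms. I note that $F_{\mu_H}$ is convex because $\mu_H$ is non-decreasing, which is what makes \emph{under}-pulling the rising arm expensive.

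The heart of the argument handles the information. Let $\rho$ be the first time some arm reaches $\tau+1$ pulls and let $a\in\{1,2\}$ be the arm that does so; because the observation law before $\rho$ is identical under $A$ and $B$, the law of $(a,\rho)$ is the same under both instances, and hence under the uniform prior the event ``$a$ is the rising arm'' has probability exactly $\tfrac12$. To convert this into regret I would \emph{grant} the policy a free oracle revealing the true instance immediately after time $\rho$; this only lowers achievable regret, so any bound obtained remains valid. On the half of the prior mass where $a$ is the suboptimal arm, the policy has already committed $N^{\mathrm{sub}}=\tau+1$ pulls to it, so even the best continuation yields final counts $N^{\mathrm{opt}}=T-\tau-1$ and $N^{\mathrm{sub}}=\tau+1$; evaluating $F_{\mu_H}(T)-F_{\mu_H}(T-\tau-1)-F_{\mu_L}(\tau+1)$ with the explicit profiles gives $\Omega(T)$ (a short computation yields $\approx 3T/16$), convexity of $F_{\mu_H}$ guaranteeing this is the minimum over admissible continuations. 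Combining with the trivial bound $\ge 0$ on the other half, the Bayes regret is at least $\tfrac12\cdot\tfrac{3T}{16}\ge \tfrac{T}{32}$, which is the claim.

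The main obstacle is precisely this information step: the usual KL/Pinsker two-point bound does \emph{not} give a linear lower bound here, because under concavity any reward gap large enough to create $\Omega(T)$ regret is also large enough to make $A$ and $B$ statistically separable within $T$ pulls, so estimation alone forces only sublinear regret. The way around this is the delay structure — one exploits \emph{exact} (distribution-level) indistinguishability before the first $\tau$-crossing together with the oracle-after-$\rho$ relaxation, so that the unavoidable cost stems from a forced wrong commitment occurring with probability $\tfrac12$ rather than from statistical estimation error. Care is also needed to verify that the constructed profiles satisfy Assumptions~\ref{asu:additive} and~\ref{asu:rising}, that a crossing occurs almost surely (which holds since $T=4\tau$ forces some arm past $\tau$ pulls), and to track floor/$\pm1$ effects so that the final constant is at least $1/32$.
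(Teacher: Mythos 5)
Your proof is correct, and it rests on the same core mechanism as the paper's: two instances whose reward curves coincide exactly until an arm has been pulled a constant fraction of $T$ times, so that no policy can distinguish them before committing, and a wrong commitment costs $\Omega(T)$ on one of the two instances. Where you diverge is in how this indistinguishability is converted into regret. The paper's construction is asymmetric: arm $1$ is constant at $1/2$ in both instances, while arm $2$ either rises to $1$ (kink at $2T/3$ pulls) or plateaus at $1/2$ (kink at $T/3$ pulls); the proof then introduces the scalar $s := \EXP[N_{S_1,T/3}]$, which is identical under both instances, lower-bounds $R_{\vec{\mu}^A} \geq s/2$ and $R_{\vec{\mu}^B} \geq -\tfrac{3s^2}{4T}+\tfrac{3s}{4T}+\tfrac{T}{16}$, and minimizes the average over $s\in[0,T/3]$ to extract $T/32$. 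Your symmetric mirror-image construction with the stopping time $\rho$ (first $\tau$-crossing) and the oracle-after-$\rho$ relaxation replaces that optimization entirely: since the law of $(a,\rho)$ agrees under $A$ and $B$, the wrong-commitment probability under the uniform prior is exactly $\tfrac12$, and the per-event cost $F_{\mu_H}(T)-F_{\mu_H}(T-\tau-1)-F_{\mu_L}(\tau+1)\approx 3T/16$ is a fixed computation, giving $3T/32 \geq T/32$ without any trade-off curve. Your approach buys a cleaner, parameter-free argument (and makes explicit why KL/Pinsker cannot yield a linear bound here, which the paper leaves implicit by setting $\sigma=0$); the paper's $s$-parametrization buys a tight closed-form constant and transfers more directly to the constrained lower bound of Theorem~\ref{thm:lower_bound_2}, where the same template is reused with $P=(2-c)^{\frac{1}{c-1}}T$. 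One small repair: your appeal to ``convexity of $F_{\mu_H}$'' does not by itself show the continuation reward $F_{\mu_H}(T-N)+F_{\mu_L}(N)$ is maximized at $N=\tau+1$ over $N\geq\tau+1$; the increment $\mu_L(N+1)-\mu_H(T-N)$ is negative up to $N=3T/4$ but turns positive beyond it, so you must also compare the endpoint $N=T$, where the value $F_{\mu_L}(T)\approx 7T/16$ is still below the $\approx 9T/16$ attained at $N=\tau+1$ --- a one-line check, but it belongs in the proof.
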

\begin{proof}
For simplicity, we consider the deterministic problem, that is, $\sigma=0$.
Let $Rew_{\vec{\mu}}(\pi,T)$ be the cumulative reward of policy $\pi$ up to time $T$ with respect to the problem instance $\vec{\mu}$.
Define two problem $\vec{\mu^A}$ and $\vec{\mu^B}$ as follows:
\begin{align*}
	& \mu^A_{1}(n) = \mu^B_{1}(n) = \frac{1}{2} \; \\
	& \mu^A_{2}(n) = \begin{cases}
			\frac{3n}{2T} & \text{if } n \le \frac{2T}{3} \\ 
			1 & \text{otherwise}
		\end{cases} \;  \\
    & \mu^B_{2}(n) = \begin{cases}
			\frac{3n}{2T} & \text{if } n \le \frac{T}{3}  \\ 
			\frac{1}{2} & \text{otherwise}
		\end{cases}  \; . 
\end{align*}

\begin{center} \label{fig:lower_reward}
\begin{tikzpicture}[
  declare function={
    func(\x)= (\x < 3) * (0) + (\x >= 3) * (1);
  }
]
\begin{groupplot}[
  group style={
    group size=1 by 2,
    vertical sep=0pt,
    horizontal sep=4cm,
    group name=G},
     width=6cm,height=4cm,
  axis lines=middle,
  legend style={at={(1.2,1)},anchor=north,legend cell align=left} 
  ]
  \nextgroupplot[xmin=0, xmax=15, ymax=1.1, ymin=-0.02, domain=0:14, samples=1000, xtick = {0,4,8,14}, ytick={0,0.4,0.8},xticklabels = {$0$, $\frac{T}{3} $, $\frac{2T}{3}$,$T$}, xlabel={$n$},
  yticklabels = {$0$, $\frac{1}{2}$, $1$}] \addplot[red,  ultra thick] (x,.4);\addlegendentry{$\mu^A_{1}$}
  \addplot[blue, ultra thick] coordinates {
        (0,0)
        (8,0.8)
        (14,0.8)
    }; \addlegendentry{$\mu^A_{2}$}
  
  \nextgroupplot[xmin=0, xmax=15, ymax=1.1, ymin=-0.02, domain=0:14, samples=1000, xtick = {0,4,8,14}, ytick={0,0.4,0.8},xticklabels = {$0$, $\frac{T}{3} $, $\frac{2T}{3}$,$T$}, xlabel={$n$},
  yticklabels = {$0$, $\frac{1}{2}$, $1$}] \addplot[red,  ultra thick] (x,.4);\addlegendentry{$\mu^B_{1}$}
  \addplot[blue, ultra thick] coordinates {
        (0,0)
        (4,0.4)
        (14,0.4)
    }; \addlegendentry{$\mu^B_{2}$}
  
\end{groupplot}
\end{tikzpicture}
\end{center}
In this setting, we define $\mathcal{S}$ as follows:
\begin{align}
    \mathcal{S} = \left\{S_1, S_2\right\} \;.
\end{align}
The main idea of the proof is that for any arbitrary policy $\pi'$, the policy receives the same rewards for both $\vec{\mu^A}$ and $\vec{\mu^B}$ at least until $\frac{T}{3}$, indicating that:
\begin{align}
   \textit{Rew}_{\vec{\mu}^A}\left(\pi', \frac{T}{3}\right) = \textit{Rew}_{\vec{\mu}^B}\left(\pi', \frac{T}{3}\right)\;.
\end{align}
Fix some arbitrary policy $\pi$ and define $M$ as follows:
\begin{align}
    M:= \EXP_{\vec{\mu}^A, \pi}[N_{S_1, \frac{T}{3}}] = \EXP_{\vec{\mu}^B, \pi}[N_{S_1, \frac{T}{3}}]
\end{align}
We compute the cumulative regret of policy $\pi$ in $\vec{\mu^A}$ and $\vec{\mu^B}$.

\paragraph{Problem (A)}
For $\vec{\mu^A}$, the optimal policy $\pi^*_A$ selects $S_2$ for every time.
The corresponding cumulative reward is given by:
\begin{align}
	\textit{Rew}_{\vec{\mu}^A}(\pi^*_A, T) = \sum_{n=1}^{\lceil\frac{2T}{3}\rceil}\frac{3n}{2T} + \frac{T}{3} \;.
\end{align}
For the given policy $\pi$, the cumulative reward is upper bounded as follows:
\begin{align}
    \textit{Rew}_{\vec{\mu}^A}(\pi, T) &=  \frac{1}{2}\EXP_{\vec{\mu}_A,\pi}[N_{S_1, T}] + \sum_{n=1}^{T-\EXP_{\vec{\mu}_A,\pi}[N_{S_1, T}]}\mu^A_{2}(n) \\
    &\leq \frac{M}{2} + \sum_{n=1}^{T-M}\mu^A_{2}(n) \label{eq:lbd1} \\
    &= \frac{M}{2} + \sum_{n=1}^{\lceil\frac{2T}{3}\rceil}\frac{3n}{2T} + \left(\frac{T}{3}-M\right) \\
    &= \sum_{n=1}^{\lceil\frac{2T}{3}\rceil}\frac{3n}{2T} + \frac{T}{3} -\frac{M}{2}\; ,
\end{align}
where \eqref{eq:lbd1} holds since the cumulative reward is maximized as $\EXP_{\vec{\mu}^A, \pi}[N_{S_1, T}]$ minimized and it is guaranteed that $\EXP_{\vec{\mu}^A, \pi}[N_{S_1, T}] \geq \EXP_{\vec{\mu}^A, \pi}[N_{S_1, \frac{T}{3}}]=M$.

The cumulative regret is lower bounded by:
\begin{align}
    \textit{Reg}_{\vec{\mu^A}}(\pi, T) &\geq \sum_{n=1}^{\lceil\frac{2T}{3}\rceil}\frac{3n}{2T} + \frac{T}{3} -\left(\sum_{n=1}^{\lceil\frac{2T}{3}\rceil}\frac{3n}{2T} + \frac{T}{3} -\frac{M}{2}\right) \\
    &= \frac{M}{2}\;.
\end{align}

\paragraph{Problem (B)}
For $\vec{\mu^B}$, the optimal policy $\pi^*_B$ is selecting $S_1$, for every time.
The corresponding cumulative reward is given by:
\begin{align}
	\textit{Rew}_{\vec{\mu}^B}(\pi^*_B, T) = \frac{T}{2} \;.
\end{align}
For the given policy $\pi$, the cumulative reward is upper bounded as follows:
\begin{align}
    \textit{Rew}_{\vec{\mu}^B}(\pi, T) &=  \frac{1}{2}\EXP_{\vec{\mu}_B,\pi}[N_{S_1, T}] + \sum_{n=1}^{T-\EXP_{\vec{\mu}_B,\pi}[N_{S_1, T}]}\mu^B_{2}(n) \\
    &\leq \frac{(\frac{2T}{3}+M)}{2} + \sum_{n=1}^{\lceil\frac{T}{3}-M\rceil}\mu^B_{2}(n)\label{eq:lbd2} \\
    &= \sum_{n=1}^{\lceil\frac{T}{3}-M\rceil}\frac{3n}{2T} + \frac{T}{3} +\frac{M}{2}\\
    &= \frac{3}{4T}\left(\frac{T}{3}-M\right)\left(\frac{T}{3}-M+1\right) + \frac{T}{3} +\frac{M}{2} \\
    &= \frac{3M^2}{4T}-\frac{3M}{4T}+\frac{5T}{12}+\frac{1}{4}\; ,
\end{align}
where \eqref{eq:lbd1} holds since the cumulative reward is maximized as $\EXP_{\vec{\mu}_B,\pi}[N_{S_1, T}]$ maximized and it is guaranteed that $\EXP_{\vec{\mu}_B,\pi}[N_{S_1, T}] \leq \frac{2T}{3} +\EXP_{\vec{\mu}_B,\pi}[N_{S_1, \frac{T}{3}}] = \frac{2T}{3}+M$.

The cumulative regret is lower bounded by:
\begin{align}
    \textit{Reg}_{\vec{\mu^B}}(\pi, T) &\geq \frac{T}{2}-\left(\frac{3M^2}{4T}-\frac{3M}{4T}+\frac{5T}{12}+\frac{1}{4}\right) \\
    &= -\frac{3M^2}{4T}+\frac{3M}{4T}+\frac{T}{12}-\frac{1}{4} \\
    &\geq  -\frac{3M^2}{4T}+\frac{3M}{4T}+\frac{T}{16} \label{eq:lbd3} \;,
\end{align}
where \eqref{eq:lbd3} holds since we assume sufficiently large $T$. 

From previous results, the worst-case regret can be lower bounded as follows:
\begin{align}
    \inf_{\pi} \sup_{\vec{\mu}}  \textit{Reg}_{\vec{\mu}}(\pi,T) &\ge \inf_{\pi} \max \left\{\textit{Reg}_{\vec{\mu}^A}(\pi, T), \textit{Reg}_{\vec{\mu}^B}(\pi, T) \right\} \\
     &=\inf_{M \in \left[ 0,\frac{T}{3} \right]} \max \left\{\frac{M}{2},
    -\frac{3M^2}{4T}+\frac{3M}{4T}+\frac{T}{16}\right\} \\
     &\geq \inf_{M \in \left[ 0,\frac{T}{3} \right]} \frac{-12M^2+(8T+12)M+T^2}{16T}\label{eq:lbd4} \\
     &\geq \frac{T}{16} \label{eq:lbd5}\;,
\end{align}
where \eqref{eq:lbd4} holds since $\max(a,b) \geq \frac{a+b}{2}$ and \eqref{eq:lbd5} holds since it is easily verified that \eqref{eq:lbd4} is minimized when $M=0$, which completes the proof.
\end{proof}

Now, we expand Lemma~\ref{lem:lower_bound_non_comb} to general combinatorial setting.
Let $L$ be an arbitrary constant.
We define two problem $\vec{\mu^{A,L}}$ and $\vec{\mu^{B,L}}$ construct super arm set $\mathcal{S}_L$ as follows:
\begin{align}
    & \mu^{A,L}_i(n) = \mu^{B,L}_i(n) = \frac{1}{2} \quad i\in[1, L]\\
    & \mu^{A,L}_i(n) = \begin{cases}
                     \frac{3n}{2T} & \text{if } n \le \frac{2T}{3} \\ 
                     1 & \text{otherwise}
                  \end{cases} \quad i\in[L+1, 2L] \\
    & \mu^{B,L}_i(n) = \begin{cases}
                     \frac{3n}{2T} & \text{if } n \le \frac{T}{3} \\ 
                     \frac{1}{2} & \text{otherwise}
                  \end{cases} \quad i\in[L+1, 2L] \; \\
    & S_L := \left\{(a_1, a_2, \ldots, a_L): a_i \in \{i, L+i\} \quad i\in[L] \right\}\;.
\end{align}
Since it can be interpreted as solving $L$ independent problems, we have:
\begin{align}
    \inf_{\pi} \sup_{\vec{\mu}}  Reg_{\vec{\mu}}(\pi,T) \geq \frac{LT}{16} \;.
\end{align}

\end{proof}

\subsection{Proof of Theorem~\ref{thm:lower_bound_2}}
\label{sec:proof_thm_lower_bound_2}

\begin{proof}    
    We apply similar logic given in the Appendix~\ref{sec:proof_thm_lower_bound} to show that for the worst-case lower bound is $\Omega\!\left(\!\max\!\left\{L\sqrt{T},LT^{2-c}\right\}\!\right)\!\;$.
    Firstly, we consider non-combinatorial case.

    \begin{lemma} \label{lem:lower_bound2_non_comb}
        Let $\mathcal{A}'_c$ be the subset of two-armed rising bandit problem with constraints given in \eqref{eq:constraint_set} . 
        For sufficiently large time $T$, any policy $\pi$ suffers regret:
        \begin{align} 
            \min_{\pi}\max_{\vec{\mu}\in\mathcal{A}'_c}\textit{Reg}_{\vec{\mu}}(\pi,T) \geq LT^{2-c}\;,
         \end{align}
    \end{lemma}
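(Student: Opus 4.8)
The plan is to mirror the structure of the proof of Lemma~\ref{lem:lower_bound_non_comb}: I will construct, for a switching count $\tau$ to be tuned, two deterministic ($\sigma=0$) two-armed instances $\vec{\mu}^A,\vec{\mu}^B\in\mathcal{A}'_c$ that return identical observations until the rising arm has been pulled $\tau$ times, yet have different optimal arms (by Theorem~\ref{thm:optimal_policy} the optimum is a constant policy, so ``optimal arm'' is well defined). I will then argue that a policy which explores the rising arm little pays $\Omega(T^{2-c})$ regret on $\vec{\mu}^A$, while one that explores it a lot pays $\Omega(T^{2-c})$ regret on $\vec{\mu}^B$; taking the worse instance and choosing $\tau=\Theta(T)$ yields the bound. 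The factor $L$ is then obtained exactly as in the proof of Theorem~\ref{thm:lower_bound}, by stacking $L$ independent copies of this two-armed construction into disjoint super arms.

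\textbf{Construction.} Let the safe arm be constant, $\mu_1(n)=\tfrac12$. Fix $\tau=\lfloor T/2\rfloor$ and slope $\beta=(\tau+1)^{-c}$, and let the rising arm grow linearly up to the switch, $\mu_2(n)=\tfrac12-\beta(\tau-n)$ for $1\le n\le\tau$, so that $\mu_2(\tau)=\tfrac12$ and $\mu_2(1)=\tfrac12-\beta(\tau-1)\in[0,1]$. Since $(n+1)^{-c}$ is decreasing, the constant slope $\beta=(\tau+1)^{-c}$ respects $\gamma_2(n)\le(n+2)^{-c}$ for every $n\le\tau-1$. After the switch the instances diverge: in $\vec{\mu}^A$ the arm keeps rising at the maximal admissible rate $\gamma_2(n)=(n+1)^{-c}$ (so $\mu^A_2(n)>\tfrac12$), whereas in $\vec{\mu}^B$ it plateaus, $\mu^B_2(n)=\tfrac12$. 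A short estimate shows the cumulative surplus $\sum_{n=\tau+1}^T(\mu^A_2(n)-\tfrac12)\approx\tfrac{(T-\tau)\tau^{1-c}}{c-1}$ exceeds the cumulative deficit $\sum_{n=1}^\tau(\tfrac12-\mu^A_2(n))\approx\tfrac{\tau^{2-c}}{2}$ whenever $1<c<3$, so the rising arm is optimal in $\vec{\mu}^A$ with margin $\Theta(T^{2-c})$, while the safe arm is optimal in $\vec{\mu}^B$.

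\textbf{Regret tradeoff.} Let $E$ be the event that the policy pulls the rising arm more than $\tau$ times within the horizon. Because the first $\tau$ pulls of the rising arm return identical values under $\vec{\mu}^A$ and $\vec{\mu}^B$, the policy's behaviour up to (and including the decision to make) the $(\tau+1)$-th such pull is law-identical on the two instances, so $\Pr_{A}(E)=\Pr_{B}(E)=:p$. On $E^c$ the rising arm is pulled $m\le\tau$ times, and since rewards are rested and additive the regret on $\vec{\mu}^A$ equals $\sum_{n=m+1}^{T}(\mu^A_2(n)-\tfrac12)$, which is nondecreasing in $m$ on $[0,\tau]$ (each removed term is $\le 0$) and hence at least its value at $m=0$, namely the optimality margin $\Theta(T^{2-c})$. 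On $E$ the rising arm's first $\tau$ pulls each lie below $\tfrac12$, so the regret on $\vec{\mu}^B$ is at least $\sum_{n=1}^{\tau}(\tfrac12-\mu^B_2(n))=\beta\tfrac{\tau(\tau-1)}{2}=\Theta(T^{2-c})$. Combining, $\max\{\EXP_{A}[R],\EXP_{B}[R]\}\ge\max\{1-p,p\}\cdot\Theta(T^{2-c})\ge\tfrac12\,\Theta(T^{2-c})$, which gives the desired $\Omega(T^{2-c})$.

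I expect the main obstacle to be making the indistinguishability clean rather than the arithmetic: I must argue that the probability of the over-exploration event $E$ is genuinely equal on both instances, which is immediate in the deterministic regime $\sigma=0$ (identical observations for the first $\tau$ pulls force identical action laws up to that point) but would need a Bretagnolle--Huber / KL argument if one insisted on $\sigma>0$. The remaining work is bookkeeping: checking feasibility in $\mathcal{A}'_c$ and $\mu_2\in[0,1]$, confirming the optimality margin in $\vec{\mu}^A$ is positive and of order $T^{2-c}$ (restricting to the relevant range $c<3/2$, where $T^{2-c}$ dominates $\sqrt T$ anyway), verifying the monotonicity of $\mathrm{regret}_A(m)$ in the exploration count, and finally the $L$-fold product step that lifts the two-armed bound to the stated $LT^{2-c}$.
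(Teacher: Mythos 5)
Your high-level plan is the same as the paper's: two deterministic two-armed instances that coincide until the rising arm has been pulled a prescribed number of times, a max-of-two-regrets tradeoff driven by the shared exploration statistic (your event $E$ plays the role of the paper's $s=\EXP[N_{S_1,P}]$), and an $L$-fold stacking step identical to the one the paper uses. The paper differs only in the construction details: its rising arm climbs at the maximal admissible rate from the very first pull ($\mu^A_2(n)=\mu(n)=\sum_{k=1}^{n}(k+1)^{-c}$), and it keeps the safe arm's level as a free parameter $\mu(P)-\varepsilon$ with $P=(2-c)^{\frac{1}{c-1}}T$, tuning $\varepsilon$ at the end to equalize the two endpoint margins. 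You instead pin the safe level to the crossing value $\tfrac12$ and use a linear ramp, which removes that tuning freedom --- and this is exactly where your argument breaks.

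The concrete gap is your claim that with $\tau=\lfloor T/2\rfloor$ the post-switch surplus exceeds the pre-switch deficit ``whenever $1<c<3$.'' Your surplus estimate $\frac{(T-\tau)\tau^{1-c}}{c-1}$ drops a same-order transient term: since $\mu^A_2(n)-\tfrac12\approx\frac{\tau^{1-c}-n^{1-c}}{c-1}$ for $n>\tau$, the true surplus is $\approx\frac{1}{c-1}\bigl[(T-\tau)\tau^{1-c}-\frac{T^{2-c}-\tau^{2-c}}{2-c}\bigr]$. Plugging $\tau=T/2$ and measuring in units of $T^{2-c}$: at $c=1.5$ the surplus is $\approx 0.24$ while the deficit $\beta\tau(\tau-1)/2\approx 2^{c-3}\approx 0.35$; at $c=1.1$ it is $\approx 0.20$ versus $\approx 0.27$. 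So throughout the relevant range $c\in(1,\tfrac32)$ the safe arm is optimal in \emph{both} instances, there is no pair with opposite optima, and the tradeoff argument collapses as stated. The repair is local rather than structural: take $\tau=aT$ with $a=a(c)$ a sufficiently small constant --- as $a\to 0^{+}$ the surplus grows like $\frac{a^{1-c}}{c-1}T^{2-c}$ while the deficit $\frac{a^{2-c}}{2}T^{2-c}$ vanishes in these units, so some fixed $a(c)>0$ makes both margins $\Theta_c(T^{2-c})$ --- after which your event-$E$ argument (which is correct as written: monotonicity of regret in the pull count $m\le\tau$ on instance $A$, the deficit bound on instance $B$, and $\max\{p,1-p\}\ge\tfrac12$) and the product step go through. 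Two minor points: with the paper's indexing $\gamma_i(n-1)\le(n+1)^{-c}$ in \eqref{eq:constraint_set}, the maximal post-switch rate is $(n+2)^{-c}$ rather than $(n+1)^{-c}$ (immaterial asymptotically); and your restriction to $c<\tfrac32$ is acceptable since, as in the paper, the stationary $\Omega(L\sqrt{T})$ bound covers $c\ge\tfrac32$, though the lemma as stated nominally claims all $c>1$.
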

    \begin{proof}
    For convention, we define $\mu(m)$ and $F(m)$ as follows:
    \begin{align}
        &\mu(m) := \sum_{n=1}^{m}(n+1)^{-c} \; \\
        &F(m) := \sum_{n=1}^{m}\mu(n) \;.
    \end{align}
    Let $\vec{\mu^A}$ and $\vec{\mu^B}$ be two rising bandit instances. which are defined as:
    \begin{align}
    	& \mu^A_1(n) = \mu^B_2(n) = \mu(P) - \varepsilon \; \\
    	& \mu^A_2(n) = \mu(n) \;; \\
            & \mu^B_2(n) = \begin{cases}
        			         \mu(n) & \text{if } n \le P \\ 
        			         \mu(P) & \text{otherwise}
        		          \end{cases} 
                     \;,
    \end{align}
    where $P=(2-c)^{\frac{1}{c-1}}T$ and $0<\varepsilon<\mu(P)$ will be specified later.
    In this setting, we define $\mathcal{S}$ as follows:
        \begin{align}
            \mathcal{S} = \left\{S_1, S_2\right\} \;, 
        \end{align}
    where $S_1=\{1\}$ and $S_2=\{2\}$.
    Similar to Theorem~\ref{thm:lower_bound}, we define:
    \begin{align}
        M:= \EXP_{\vec{\mu}^A, \pi}[N_{S_1, P}] = \EXP_{\vec{\mu}^B, \pi}[N_{S_1, P}]
    \end{align}
    We note that $\vec{\mu^A}$ and $\vec{\mu^B}$ belongs to $\mathcal{A}'_c$.
    We firstly assume that the optimal super arm for $\vec{\mu}^A$ is $S_2$ and the optimal super arm for $\vec{\mu}^B$ is $S_1$.
    We will show that it is true after $\varepsilon$ is specified.
    \begin{center} \label{fig:lower_reward_2}
    \begin{tikzpicture}[
      declare function={
        func1(\x) = 1-(\x+1)^(-0.5);
        func2(\x) = (1-(\x+1)^(-0.5)) * (\x < 5) + (1-6^(-0.5))*(\x >= 5);
      }
    ]
    \begin{groupplot}[
      group style={
        group size=1 by 2,
        vertical sep=0pt,
        horizontal sep=4cm,
        group name=G},
         width=6cm,height=4cm,
      axis lines=middle,
      legend style={at={(1.2,1)},anchor=north,legend cell align=left} 
      ]
      \nextgroupplot[xmin=0, xmax=15, ymax=1.1, ymin=-0.02, domain=0:14, samples=1000, xtick = {0,5,14}, ytick={0,0.53,0.8},xticklabels = {$0$, $P$,$T$}, xlabel={$n$},
      yticklabels = {$0$, $\mu(P)-\varepsilon$, $1$}] \addplot[red,  ultra thick] (x,0.53);\addlegendentry{$\mu^A_1$}
      \addplot[blue, ultra thick] {func1(x)};
      \addlegendentry{$\mu^A_{2}$}
      
      \nextgroupplot[xmin=0, xmax=15, ymax=1.1, ymin=-0.02, domain=0:14, samples=1000, xtick = {0,5,14}, ytick={0,0.53,0.8},xticklabels = {$0$, $P$ ,$T$}, xlabel={$n$},
      yticklabels = {$0$, $\mu(P)-\varepsilon$, $1$}] \addplot[red,  ultra thick] (x,.53);\addlegendentry{$\mu^B_1$}
      \addplot[blue, ultra thick] {func2(x)};
      \addlegendentry{$\mu^B_{2}$}
      
    \end{groupplot}
    \end{tikzpicture}
    \end{center}

    The main idea of the proof is that for any arbitrary policy $\pi'$, the agent receives the same rewards for both $\vec{\mu^A}$ and $\vec{\mu^B}$ at least until $P$, indicating that:
\begin{align}
   \textit{Rew}_{\vec{\mu}^A}(\pi', P) = \textit{Rew}_{\vec{\mu}^B}(\pi',  P)\;.
\end{align}

\paragraph{Problem (A)}
For $\vec{\mu^A}$, the optimal policy $\pi^*_A$ is selecting $S_2$, for every time.
The corresponding cumulative reward is given by:
\begin{align}
	\textit{Rew}_{\vec{\mu}^A}(\pi^*_A, T) = F(T)\ \;.
\end{align}
For the given policy $\pi$, the cumulative reward is upper bounded as follows:
\begin{align}
    \textit{Rew}_{\vec{\mu}^A}(\pi, T) &=  \left(\mu(P)-\varepsilon\right)\EXP_{\vec{\mu}^A, \pi}[N_{S_1, T}] + \sum_{n=1}^{T-\EXP_{\vec{\mu}^A, \pi}[N_{S_1, T}]}\mu^A_{2}(n) \\
    &\leq \left(\mu(P)-\varepsilon\right)M+F(T-M), \label{eq:lbd21}
\end{align}
where \eqref{eq:lbd21} holds since the cumulative reward is maximized as $\EXP_{\vec{\mu}^A, \pi}[N_{S_1, T}]$ minimized and it is guaranteed that $\EXP_{\vec{\mu}^A, \pi}[N_{S_1, T}] \geq \EXP_{\vec{\mu}^A, \pi}[N_{S_1,P}]$.

With The cumulative regret is lower bounded by:
\begin{align}
    \textit{Reg}_{\vec{\mu^A}}(\pi, T) &\geq F(T)-\left(\mu(P)-\varepsilon\right)M-F(T-M) \;.
\end{align}

\paragraph{Problem (B)}
For $\vec{\mu^B}$, the optimal policy $\pi^*_B$ is selecting $S_1$, for every time.
The corresponding cumulative reward is given by:
\begin{align}
	\textit{Rew}_{\vec{\mu}^B}(\pi^*_B, T) = T\left(\mu(P)-\varepsilon\right)  \;.
\end{align}
For the given policy $\pi$, the cumulative reward is upper bounded as follows:
\begin{align}
    \textit{Rew}_{\vec{\mu}^B}(\pi, T) &=  \left(\mu(P)-\varepsilon\right)\EXP_{\vec{\mu}^B, \pi}[N_{S_1, T}] + \sum_{n=1}^{T-\EXP_{\vec{\mu}^B, \pi}[N_{S_1, T}]}\mu^B_{2}(n) \\
    &\leq \left(\mu(P)-\varepsilon\right)\left(T-P+M\right) + \sum_{n=1}^{P-M}\mu^B_{2}(n) \label{eq:lbd22} \\
    &= \left(\mu(P)-\varepsilon\right)\left(T-P+M\right) + F(P-M)\;,
\end{align}
where \eqref{eq:lbd22} holds since the cumulative reward is maximized as $\EXP_{\vec{\mu}^B, \pi}[N_{S_1, T}]$ maximized and it is guaranteed that $\EXP_{\vec{\mu}^B, \pi}[N_{S_1, T}] \leq T-P +\EXP_{\vec{\mu}^B, \pi}[N_{S_1, P}] = T-P+M$.

The cumulative regret is lower bounded by:
\begin{align}
    \textit{Reg}_{\vec{\mu^B}}(\pi, T) &\geq  \left(\mu(P)-\varepsilon\right)T - \left(\mu(P)-\varepsilon\right)\left(T-P+M\right) - F(P-M)\\
    &= \left(\mu(P)-\varepsilon\right)\left(P-M\right) - F(P-M) \; \;.
\end{align}

From previous results, the worst-case regret can be lower bounded as follows:
\begin{align}
    &\inf_{\pi} \sup_{\vec{\mu}}  \textit{Reg}_{\vec{\mu}}(\pi,T) \\&\ge \inf_{\pi} \max \left\{\textit{Reg}_{\vec{\mu}^A}(\pi, T), \textit{Reg}_{\vec{\mu}^B}(\pi, T) \right\} \\
     &=\inf_{M \in \left[ 0,P\right]} \max \left\{F(T)\!-\!\left(\mu(P)\!-\!\varepsilon\right)M\!-\!F(T\!-\!M),
    \left(\mu(P)\!-\!\varepsilon\right)\left(P\!-\!M\right) \!-\! F(P\!-\!M)\right\} \\
     &\geq \inf_{M \in \left[ 0,P\right]} \frac{F(T)-F(T-M)-F(P-M)+\left(\mu(P)-\varepsilon\right)\left(P-2M\right)}{2}\;, \label{eq:lbd23}
\end{align}
where \eqref{eq:lbd23} holds since $\max(a,b) \geq \frac{a+b}{2}$.
We observe that \eqref{eq:lbd23} is unimodal over $P$, which means that it increases to a maximum value and then decreases.
More precisely, let $A(n):=F(T) - F(T-n) -F(P-n) + (\mu(P)-\varepsilon)(P-2n)$.
Then, we have:
\begin{align}
    A(n+1) - A(n) &= F(T-n) - F(T-n+1) + F(P-n) - F(P-n+1) -2(\mu(P)-\varepsilon)\\
    &= \mu(T-n+1) + \mu(P-n+1) - 2(\mu(P)-\varepsilon),
\end{align}
which means that $A(n)$ is concave, which means that $A(n)$ is unimodal. 
It implies that:
\begin{align}
    &\inf_{M\in[0,P]} \left(\frac{F(T)-F(T-M)-F(P-M)+(\mu(P)-\varepsilon)(P-2M)}{2}\right) \\
    \geq &\min\left\{\frac{(\mu(P)-\varepsilon)P-F(P)}{2},\frac{F(T)-F(T-P)-(\mu(P)-\varepsilon)P}{2}\right\} \;. \label{eq:lbd24}
\end{align}
\eqref{eq:lbd24} consists of two terms: the first term is obtained by setting $M=0$ and the second term is obtained by setting $M=P$.

To calculate two terms, we use the property of monotone functions.
\begin{proposition} \label{proposition:monotone}
    If $a$ and $b$ are integers with $a<b$ and $f$ is some real-valued function monotone on $[a,b]$, we have:
    \begin{align}
        \min\{f(a), f(b)\} \leq \sum_{n=a}^{b} f(n) - \int_{a}^{b} f(t) \, dt   \leq \max\{f(a), f(b)\} \;.
    \end{align}
\end{proposition}
Proposition~\ref{proposition:monotone} indicates that we can bound $\mu(n)$ and $F(n)$ as follows:
\begin{align}
    \mu(n) &\leq \int_{x=1}^{n}(x+1)^{-c}dx + 2^{-c}\\
    &= \frac{1}{c-1}\left(2^{1-c}-(n+1)^{1-c}\right) + 2^{-c} \;.
\end{align} 
For simplicity, we denote $(2-c)^{\frac{1}{c-1}}$ by $a$ so that $P=aT$.
Then, we have:
    \begin{align}
        & (\mu(P)-\varepsilon)P-F(P) \\
        = &P\sum_{n=1}^{P}(n+1)^{-c} - P\varepsilon - \sum_{n=1}^{P}(P+1-n)(n+1)^{-c} \\
        = &\sum_{n=1}^{P}(n-1)(n+1)^{-c} - P\varepsilon \\
        \geq &P^{2-c} - P\varepsilon \\
        =& (aT)^{2-c} - (aT)\varepsilon \;. \label{eq:lbd3_1}
    \end{align}
    Similarly, we have:
    \begin{align}
        &F(T)-F(T-P)-(\mu(P)-\varepsilon)P \\
        = &\sum_{n=1}^{T}(T+1-n)(n+1)^{-c} - \sum_{n=1}^{T-P}(T-P+1-n)(n+1)^{-c} -(\mu(P)-\varepsilon)P \\
        = &\sum_{n=T-P+1}^{T}(T+1-n)(n+1)^{-c} + \sum_{n=1}^{T-P}P(n+1)^{-c} -(\mu(P)-\varepsilon)P \\
        \geq &P(T+1)^{-c} + (T-P)P(T+P-1)^{-c}-(\mu(P)-\varepsilon)P
        \\= & aT(T+1)^{-c} + (T-aT)aT(T+aT-1)^{-c} - \left(\frac{2^{1-c}}{c-1}-\frac{(aT+1)^{1-c}}{c-1}+2^{-c}-\varepsilon \right)aT \\
        = &c_2T^{2-c} + o(T^{2-c}) +\varepsilon aT \;,  \label{eq:lbd3_2}
    \end{align}
    where 
    Now, we define $\varepsilon$ so that \eqref{eq:lbd3_1} equals \eqref{eq:lbd3_2}:
    \begin{align}
        2aT\varepsilon = (c_2 + a^{2-c})T^{2-c} +o(T^{2-c}) 
    \end{align}
    Then, by substituting $\varepsilon$ to \eqref{eq:lbd3_1} and \eqref{eq:lbd3_2}, we have:
    \begin{align}
        Reg_{\vec{\mu}}(\pi,T) \geq \Omega\left(T^{2-c}\right).
    \end{align}
    \end{proof}
Now, we expand Lemma~\ref{lem:lower_bound2_non_comb} to general combinatorial setting.
Let $L$ be an arbitrary constant.
As before, we define two problem $\vec{\mu^{A,L}}$ and $\vec{\mu^{B,L}}$ construct super arm set $\mathcal{S}_L$ as follows:
\begin{align}
    & \mu^{A,L}_i(n) = \mu^{B,L}_i(n) = \mu(P) - \varepsilon, \quad i\in[L],\\
    & \mu^{A,L}_i(n) = \mu(n),  \quad i\in[L+1, 2L], \\
    & \mu^{B,L}_i(n) = \begin{cases}
                     \mu(n) & \text{if } n \le P \\ 
                     \mu(P) & \text{otherwise} 
                  \end{cases} \quad i\in[L+1, 2L] \;, \\
    & S_L := \left\{(a_1, a_2, \ldots, a_L): a_i \in \{i, L+i\} \quad i\in[L] \right\}\;.
\end{align}
Due to same reason in Appendix~\ref{sec:proof_thm_lower_bound} we have:
\begin{align}
    \inf_{\pi} \sup_{\vec{\mu}\in\mathcal{A}_c}  \textit{Reg}_{\vec{\mu}}(\pi,T) \geq \Omega(LT^{2-c}) \;.
\end{align}
Now, we note that any stationary bandit problem is included in $\mathcal{A}_c$, since $\gamma_i(n)=0$ for all base arm $i\in [K]$. 
Previous literature has proven that for stationary bandit problem, the worst-case regret lower bound is $\Omega(\sqrt{KT})$ \citep{lattimore2020bandit}. 
Similarly, we can extend this setting to combinatorial setting:
\begin{align}
\inf_{\pi} \sup_{\vec{\mu}\in\mathcal{A}_c}  \textit{Reg}_{\vec{\mu}}(\pi,T) \geq \Omega(L{\sqrt{T}}) \;.
\end{align}
Combining these results, we conclude:
    \begin{align} 
        \min_{\pi}\max_{\vec{\mu }\in\mathcal{A}_c}\textit{Reg}_{\vec{\mu}}(\pi,T) \geq \Omega\left(\max\left\{L\sqrt{T},LT^{2-c}\right\}\right)\;.
     \end{align}

\end{proof}

\clearpage

\section{Pseudocode and description of baselines}
\label{sec:baselines}
In Section~\ref{sec:experiments}, we have considered 5 baseline algorithms to evaluate CRUCB's performance.
Each algorithm is carefully chosen to highlight different aspects of the bandit problem, such as rising rewards and combinatorial settings. 
In this section, we provide the pseudocode and detailed descriptions for each baseline algorithm.

\subsection{\texorpdfstring{R-ed-UCB \citep{metelli2022stochastic}}{R-ed-UCB}}
R-ed-UCB is a rising bandit algorithm that employs a sliding-window approach combined with UCB-based optimistic reward estimation algorithm, specifically designed for rising rewards.
While it shares the core estimation method ($\Acute{\mu}_i(t)$) with CRUCB, R-ed-UCB applies this method directly to super arms and selects the maximum one, rather than applying it to base arms and solving the combinatorial problem as in CRUCB.
R-ed-UCB would be less effective in complex environments where the number of super arms significantly exceeds the number of base arms, as it does not benefit from the shared exploration of common base arms, leading to reduced exploration efficiency.
\begin{algorithm}[!ht]
\caption{\texttt{Rested UCB\!\! (R-ed-UCB)\!\!}}\label{alg:R-ed-UCB}
	\begin{algorithmic}
	\STATE \textbf{Input} $N_{i,0} \leftarrow 0$ for all $i \in [|\gS|]$, Sliding window parameter $\varepsilon$.
	\STATE \textbf{Initialize} Play each super arm $S_{i}$ two times for each $i \in [|\gS|]$.
	\FOR{$t \in ( 1, \dots, T)$}
        \STATE Calculate Future-UCB $\Acute{\mu}_i(t)$ for each super arm.
        \STATE $S_t \leftarrow \texttt{Solver}(\acute\mu_1(t), \acute\mu_2(t), \cdots, \acute\mu_{|\gS|}(t))$.
        \STATE Play $S_t$ and observe reward $R_t$.
        \STATE Update $\gF_t$ and $N_{i,t}$.
	\ENDFOR
	\end{algorithmic}
\end{algorithm}

\subsection{\texorpdfstring{SW-UCB \citep{garivier2011upper}}{Sw-UCB}}
SW-UCB is a non-stationary bandit algorithm that uses a sliding-window approach with UCB algorithm.
It estimates the reward of each super arm and confidence bounds using the following expressions:
\begin{align}
    \hat{\mu}^{\texttt{SW-UCB}}_i(t)      &:= \frac{1}{h} \sum_{l=N_{i,t-1}-h+1}^{N_{i,t-1}} X_i(l) \\
    \beta^{\texttt{SW-UCB}}_i(t)          &:= \sqrt{\frac{3\log{t}}{2N_{i,t-1}}}\; \\
    \Acute{\mu}^{\texttt{SW-UCB}}_i(t)    &:= \hat{\mu}^{\texttt{SW-UCB}}_i(t) + \beta^{\texttt{SW-UCB}}_i(t)\;.
\end{align}
While the SW-UCB algorithm is similar to R-ed-UCB, it differs slightly in the values it estimates.
Additionally, SW-UCB uses a fixed sliding window size, in contrast to the dynamic sliding window size employed by R-ed-UCB.
Similar to R-ed-UCB, SW-UCB would be less effective in complex environments.
\begin{algorithm}[!ht]
\caption{\texttt{Sliding Window-UCB\!\! (SW-UCB)\!\!}}\label{alg:SW-UCB}
	\begin{algorithmic}
	\STATE \textbf{Input} $N_{i,0} \leftarrow 0$ for all $i \in [|\gS|]$, Sliding window size $h$.
	\STATE \textbf{Initialize} Play each super arm $S_{i}$ two times for each $i \in [|\gS|]$.
	\FOR{$t \in ( 1, \dots, T)$}
            \STATE For each super arm $S_{i}$, set $\Acute{\mu}_{i}^\texttt{SW-UCB}(t)=\hat{\mu}_{i}^\texttt{SW-UCB}(t)+\beta_{i}^\texttt{SW-UCB}(t)$.
		\STATE $S_t \leftarrow$ $\argmax$($\Acute{\mu}_{1}^\texttt{SW-UCB}(t), \Acute{\mu}_{2}^\texttt{SW-UCB}(t), \cdots, \Acute{\mu}_{|\gS|}^\texttt{SW-UCB}(t)$).
		\STATE Play $S_t$ and observe reward $X_{S_t}(t)$.
            \STATE Update $\hat{\mu}_{i}^\texttt{SW-UCB}(t)$ and $N_{i,t}$.
	\ENDFOR
	\end{algorithmic}
\end{algorithm}

\subsection{\texorpdfstring{SW-CUCB \citep{chen2021combinatorial}}{SW-CUCB}}
SW-CUCB is a non-stationary combinatorial bandit algorithm that uses a sliding-window approach with UCB algorithm for combinatorial setting.
It estimates the values $\hat{\mu}^{\texttt{SW-CUCB}}_{i}(t)$ and $\beta^{\texttt{SW-CUCB}}_{i}(t)$, which are nearly identical to those used in SW-UCB but specifically adapted for base arms.
SW-CUCB then utilizes Solver to address the combinatorial problem.

\begin{algorithm}[!ht]
\caption{\texttt{Sliding Window-Combinatorial UCB\!\! (SW-CUCB)\!\!}}\label{alg:SW-CUCB}
	\begin{algorithmic}
	\STATE \textbf{Input} $N_{i,0} \leftarrow 0$ for all $i \in [K]$, Sliding window size $h$.
	\STATE \textbf{Initialize} Play arbitrary super arm including base arm $i$ two times for each $i \in [K]$.
	\FOR{$t \in ( 1, \dots, T)$}
            \STATE For each base arm $i$, set $\Acute{\mu}_{i}^\texttt{SW-CUCB}(t)=\hat{\mu}_{i}^\texttt{SW-CUCB}(t)+\beta_{i}^\texttt{SW-CUCB}(t)$.
		\STATE $S_t \leftarrow$ Solver($\Acute{\mu}_{1}^\texttt{SW-CUCB}(t), \Acute{\mu}_{2}^\texttt{SW-CUCB}(t), \cdots, \Acute{\mu}_{K}^\texttt{SW-CUCB}(t)$).
		\STATE Play $S_t$ and observe reward $X_{S_t}(t)$.
            \STATE Update $\hat{\mu}_{i}^\texttt{SW-CUCB}(t)$ and $N_{i,t}$.
	\ENDFOR
	\end{algorithmic}
\end{algorithm}

\subsection{\texorpdfstring{SW-TS \citep{trovo2020sliding}}{SW-TS}}
SW-TS is a non-stationary bandit algorithm that uses a sliding-window approach with Thompson Sampling.
Since outcomes are bounded, the algorithm updates the parameters by adds $X_{S_t}(t)$ to $\alpha$ and $1-X_{S_t}(t)$ to $\beta$ based on the observed output $X_{S_t}(t)$.
SW-TS also utilizes a fixed sliding window size similar to SW-UCB.
Similar to R-ed-UCB and SW-UCB, SW-TS also operates directly on super arms, it may suffer from reduced exploration efficiency in complex environments.
\begin{algorithm}[!ht]
\caption{\texttt{Sliding Window Thompson Sampling\!\! (SW-TS)\!\!}}\label{alg:SW-TS}
	\begin{algorithmic}
	\STATE \textbf{Input} Sliding window size $h$.
	\STATE \textbf{Initialize} Play each super arm $S_{i}$ two times for each $i \in [|\gS|]$.
	\FOR{$t \in ( 1, \dots, T)$}
        \STATE For each super arm $S_i$, set ${\theta}_i(t)\sim Beta(\alpha_{i}+1, \beta_{i}+1)$.
            \STATE $S_t \leftarrow$ $\argmax$($\theta_{1}(t), \theta_{2}(t), \cdots, \theta_{K}(t)$).
            \STATE Play $S_t$ and observe reward $X_{S_t}(t)$.
            \STATE Update $\alpha_{i}$ and $\beta_{i}$.
	\ENDFOR
	\end{algorithmic}
\end{algorithm}

\subsection{SW-CTS}
SW-CTS is a non-stationary combinatorial bandit algorithm that uses a sliding-window approach with Thompson Sampling for combinatorial setting.
While it operates similarly to SW-TS, the key difference is that SW-CTS performs estimation at base arms then solves the combinatorial problem using Solver. 
\begin{algorithm}[!ht]
\caption{\texttt{Sliding Window-Combinatorial Thompson Sampling\!\! (SW-CTS)\!\!}}\label{alg:SW-CTS}
	\begin{algorithmic}
	\STATE \textbf{Input} Sliding window size $h$.
	\STATE \textbf{Initialize} Play arbitrary super arm including base arm $i$ two times for each $i \in [K]$.
	\FOR{$t \in ( 1, \dots, T)$}
            \STATE For each base arm $i$, set ${\theta}_i(t)\sim Beta(\alpha_{i}+1, \beta_{i}+1)$.
            \STATE $S_t \leftarrow$ Solver($\theta_{1}(t), \theta_{2}(t), \cdots, \theta_{K}(t)$).
            \STATE Play $S_t$ and observe reward $X_{S_t}(t)$.
            \STATE Update $\alpha_{i}$ and $\beta_{i}$.
	\ENDFOR
	\end{algorithmic}
\end{algorithm}

\clearpage

\section{Experimental details}
\label{sec:exp_details}

\begin{table}[!ht]
\centering
\caption{
\textbf{Overview of task specifications.} We summarizes the details of each task, including the number of base arms $K$, the number of super arms $|\mathcal{S}|$, and the maximal size of a super arm $L$.
}
\begin{tabular}{c c c c c c}
\toprule
Environment & Experiment & difficulty& $K$ & $|\mathcal{S}|$ & $L$ \\
\toprule
\multirow[c]{8}{*}{\shortstack{Synthetic environments \\(Section~\ref{sec:simul} \& Appendix~\ref{sec:add_ex})}}
& \multirow{3}{*}{Online shortest path} & toy & 3 & 2 & 2 \\
\cmidrule(lr){3-6}
& & easy & 12 & 6 & 4 \\
\cmidrule(lr){3-6}
& & complex & 60 & 252 & 10 \\
\cmidrule(lr){2-6}
& \multirow{2}{*}{Maximum weighted matching} & easy & 8 & 12 & 2 \\
\cmidrule(lr){3-6}
& & complex & 28 & 840 & 4 \\
\cmidrule(lr){2-6}
& \multirow{2}{*}{Minimum spanning tree} & easy & 6 & 8 & 3 \\
\cmidrule(lr){3-6}
& & complex & 15 & 1296 & 5 \\
\cmidrule(lr){2-6}
& $k$-MAX & - & 5 & 10 & 2 \\
\midrule
\multirow[c]{2}{*}{\shortstack{Deep reinforcement learning \\ (Section~\ref{sec:deeprl})}}& AntMaze-easy & - & 7 & 3 & 5 \\
\cmidrule(lr){2-6}
& AntMaze-complex & - & 48 & 178 & 15 \\
\bottomrule
\end{tabular}
\label{table:environment}
\vspace{-0.3cm}
\end{table}

\begin{table*}[!h]
\centering
\caption{
\textbf{Hyperparameters for AntMaze Tasks.}
}
\begin{tabular}{c|c|c}
\toprule
 & AntMaze-easy & AntMaze-complex
\\
\cmidrule(lr){1-3}
number of graph nodes                   & 6 & 16           \\
fail condition                          & 100 & 100     \\
maximum length of episode               & 500 & 1000    \\
$T$                                     & 2000 & 3000   \\
hidden layer                            & (256, 256) & (256, 256)\\
actor lr                                & 0.0001 & 0.0001    \\
critic lr                               & 0.001 & 0.001    \\
$\tau$                                  & 0.005 & 0.005     \\
$\gamma$                                & 0.99 & 0.99      \\
batch size                              & 1024 & 1024      \\

\bottomrule
\end{tabular}
\label{table:hyperparameter}
\end{table*}

In Section~\ref{sec:experiments}, we conduct experiments in two distinct environments: synthetic environments and deep reinforcement learning settings.
This section provides a detailed description of each environment, including their design and hyperparameters. 
The specifications for each experiment are summarized in Table~\ref{table:environment}.

\subsection{Synthetic environments}
In the synthetic environments, we have the flexibility to design reward functions by choosing arbitrary values. Here, we set $c = 1.2$, which lies in the range between 1 and 1.5. This choice is motivated by the theoretical reasoning discussed in Section~\ref{sec:analysis}. To be specific, $\gamma(n) = \left(\left[\frac{n}{1000} + 1\right] \cdot 1000 + 1\right)^{-1.2}$ for $n < 20000$ and $0$ for $n \geq 20000$ with $\sigma = 0.01$. In the simpler environments, we use $\gamma(n) = \left(\left[\frac{n}{250} + 1\right] \cdot 250 + 1\right)^{-1.2}$ for $n < 5000$ and $0$ for $n \geq 5000$. As depicted in Figure~\ref{figure:bound_gap}, the regret upper bound for these environments is $O(T^{\frac{1}{1.2}})$, and the regret lower bound is $O(T^{0.8})$. Therefore, while the regret observed in Figure~\ref{figure:regret_path_complex} appears nearly linear, which aligns with the theoretical bounds, it still demonstrates superior performance compared to other baseline algorithms.

\subsection{Deep reinforcement learning}
\label{sec:deep rl description}
In the deep reinforcement learning environments, we conducted experiments using the AntMaze environment.
AntMaze is a hierarchical goal conditioned reinforcement learning task where an ant robot navigates to a predefined goal hierarchically.
The ant robot in this environment has four legs, each with two joints, resulting in an action space that controls a total of eight joints.
The reward structure for the low-level agent is simple: the agent receives a reward of 0 when it reaches the goal or comes within a certain distance of it, and a reward of -1 otherwise.
Our experiments are carried out in the scenario depicted in Figure~5, which shows the map and corresponding graph structure used.
To ensure consistent and repeated exploration over the fixed graph, we utilized a code based on the algorithm described in \citep{yoon2024beag} without the adaptive grid refinement.
The hyperparameters used in these experiments are summarized in Table~\ref{table:hyperparameter}.

In each experiment, the algorithm generates a path that the ant robot follows, receiving feedback based on success or failure.
For combinatorial methods, the agent does not persist with a single edge until the episode ends; if the agent fails to reach the goal within 100 steps, the attempt is considered a failure.
In this case, the reward is set to 0, and the agent fails to attempt the next edge, which known as the cascading bandit setting.
If the agent successfully reaches the goal, the reward is proportional to the efficiency, calculated as the number of steps taken divided by 100.
For non-combinatorial methods, the reward for success is determined by the number of steps taken divided by the maximum length of the episode.
We note that while the reward function of AntMaze is non-concave, as depicted in Figure~\ref{figure:reward-easy}, and cascading bandit setting, we confirmed that RCUCB performs well, as illustrated in Figure~\ref{figure:regret_deeprl}.

\clearpage
\section{Additional experiments}
\label{sec:add_ex}
In this section, we present additional experiments to evaluate the performance of CRUCB in a broader set of environments.
Specifically, we test CRUCB on three representative combinatorial optimization problems, $k$-MAX (Section~\ref{sec:k-MAX}), maximum weighted matching (Section~\ref{sec:matching}), and minimum spanning tree (Section~\ref{sec:spanning}).
These experiments demonstrate that CRUCB maintains strong performance across diverse scenarios, further validating its robustness and adaptability.

\subsection{\texorpdfstring{$k$-MAX task}{k-MAX task}}
\label{sec:k-MAX}

We investigate the $k$-MAX setting, where the reward is determined by the maximum value among outcomes of the selected base arm.
As shown in Theorem~\ref{thm:no_optimal_policy}, the optimal policy for the $k$-MAX may not always involve consistently pulling a single super arm.
However, since the $k$-MAX satisfies the additive-bounded reward
assumption in Theorem~\ref{thm:optimal_approx} with $B_L \!=\! \frac{1}{\sqrt{k}}, B_U \!=\! 1$, we use an approximate optimal constant policy (consistntly pulling $(1, 5)$) to calculate regret.

\begin{figure}[!ht]
    \centering
    \begin{subfigure}{0.22\linewidth}
    \centering
    \includegraphics[width=0.99\linewidth, trim={4cm 0 4cm 0}, clip]{./figure/images/legend_6.pdf}
    \vspace{0.7cm}
    \end{subfigure}
    \hfill
    \centering
    \begin{subfigure}{0.26\linewidth}
        \centering
        \includegraphics[width=\linewidth, trim={1cm 0 1cm 0.8cm},clip]{./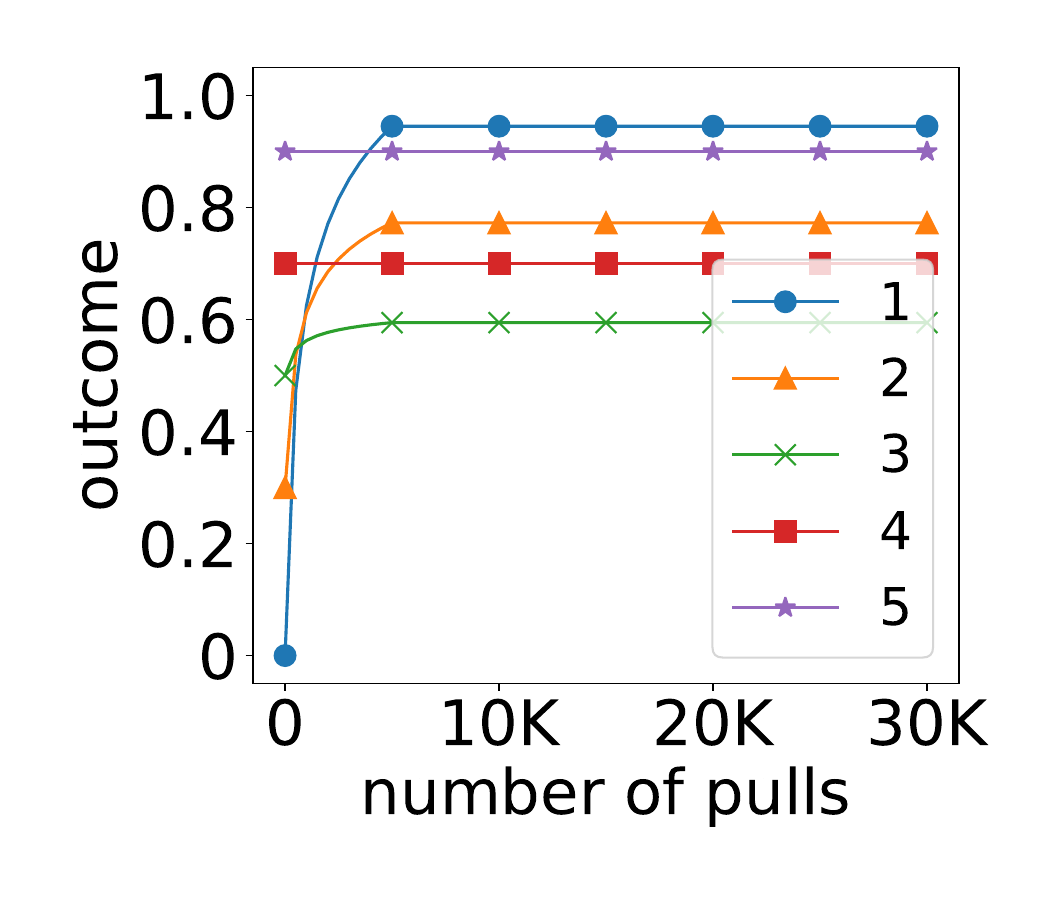}
        \vspace{-0.6cm}
        \caption{Outcome functions}
        \label{figure:reward_k-max}
    \end{subfigure}
    \hfill
    \begin{subfigure}{0.34\linewidth}
        \centering
        \includegraphics[width=\linewidth, trim={1cm 0 1cm 0.8cm},clip]{./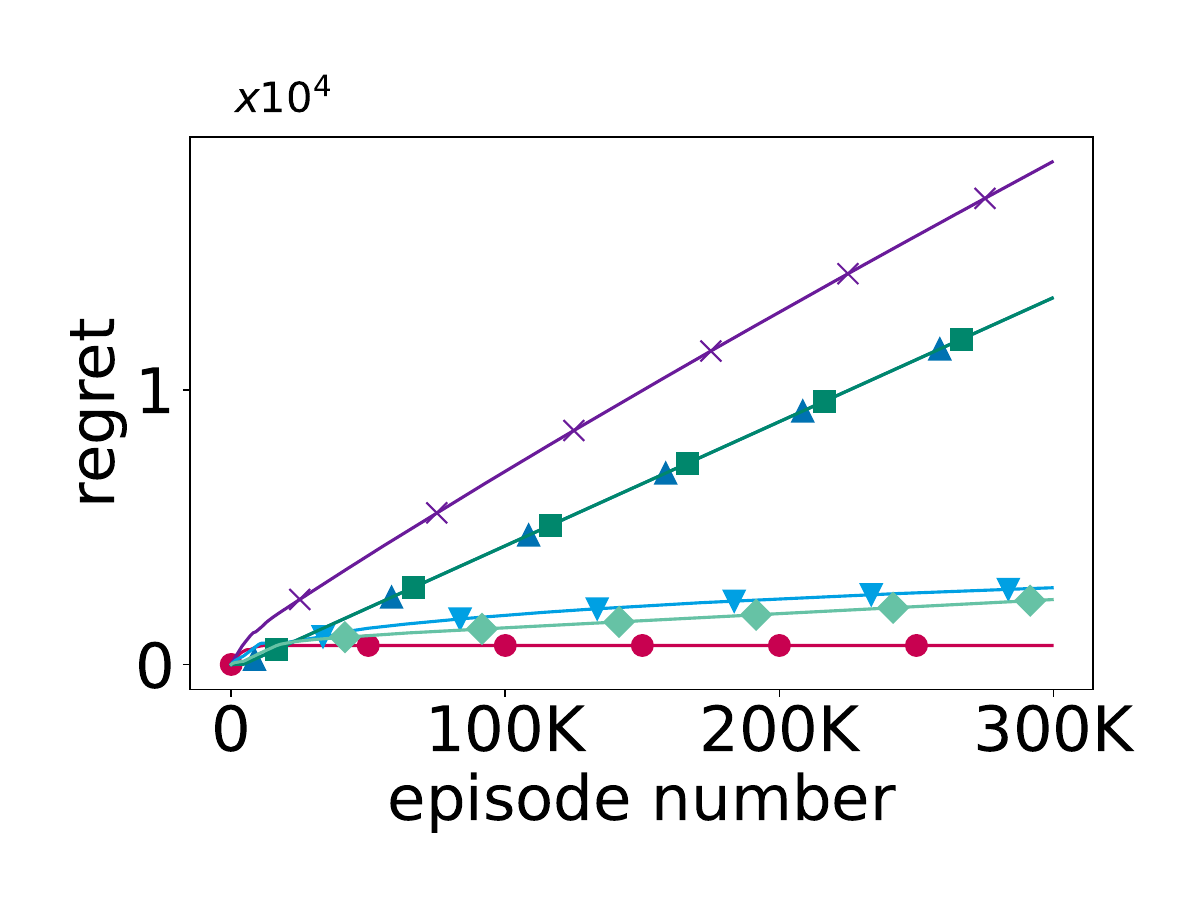}
        \vspace{-0.7cm}
        \caption{Cumulative regret}
        \label{figure:regret_k-max}
    \end{subfigure}
    \vspace{-0.1cm}
    \caption{\textbf{$k$-MAX task.} (a) Reward functions ($c = 1.2$) for base arms $1$–$5$, where $K = 5$ and $k = 2$. (b) Regret curves for $K$-MAX. Lines show average; shaded areas indicate 99\% confidence intervals over 5 runs.
    }
    \label{figure:k-max_easy}
    \vspace{-0.3cm}
\end{figure}

The results, as shown in Figure~\ref{figure:regret_k-max}, demonstrate that CRUCB consistently outperforms other algorithms. 
R-ed-UCB shows sub-optimal regret due to the \textit{partially shared enhancement}.
Notably, we observe that among non-stationary algorithms, combinatorial algorithms (SW-CUCB, SW-CTS) perform worse than non-combinatorial algorithms (SW-UCB, SW-TS).
Non-combinatorial algorithms select the early peaker $(5)$ frequently while evenly exploring other edges. On the other hand, combinatorial algorithms select early peakers $(4, 5)$, limiting exploration of late bloomers and preventing them from fully rising their potential.

\clearpage

\subsection{Maximum weighted matching}
\label{sec:matching}

We conduct experiments on maximum weighted matching task, 
a widely studied classic combinatorial optimization problem.
In this task, we are given two disjoint sets of nodes, $U$ and $V$, and the goal is to find a matching where each node $u_i \in U$ is paired with a unique node $v_j \in V$, ensuring no overlapping connections. The objective is to maximize the total reward by selecting the best set of edges between these nodes.

\begin{figure*}[!ht]
    \centering

    \begin{subfigure}{0.23\linewidth}
        \centering
        \includegraphics[width=0.75\linewidth]{./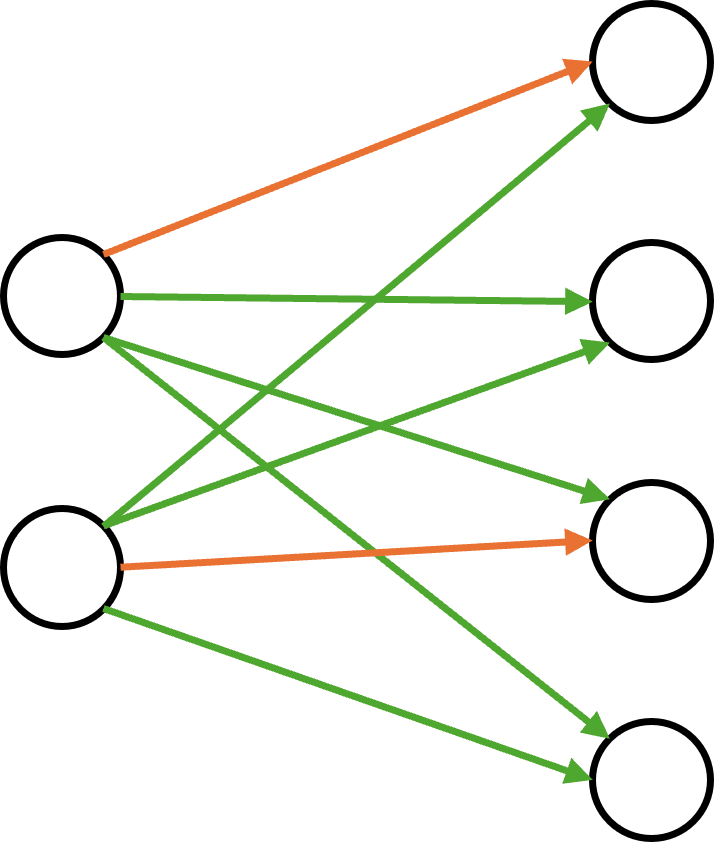}
        \caption{Matching-easy}
        \label{figure:graph_matching_simple}
    \end{subfigure}
    \hfill
    \begin{subfigure}{0.24\linewidth}
        \centering
        \includegraphics[width=0.95\linewidth, trim={1cm 0 1cm 0.8cm},clip]{./figure/images/reward_easy1.pdf}
        \vspace{-0.3cm}
        \caption{Outcome functions}
        \label{figure:reward_matching}
    \end{subfigure}
    \hfill
    \begin{subfigure}{0.25\linewidth}
        \centering
        \includegraphics[width=0.75\linewidth]{./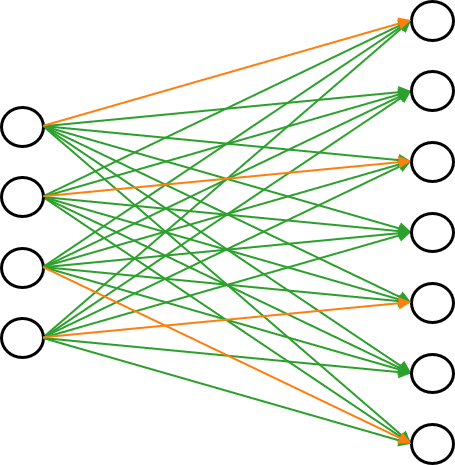}
        \caption{Matching-complex}
        \label{figure:graph_matching_complex}
    \end{subfigure}
    \hfill
    \begin{subfigure}{0.24\linewidth}
        \centering
        \includegraphics[width=0.95\linewidth, trim={1cm 0 1cm 0.8cm},clip]{./figure/images/reward_complex1.pdf}
        \vspace{-0.3cm}
        \caption{Outcome functions}
        \label{figure:reward_complex_matching}
    \end{subfigure}
    \caption{\textbf{Maximum weighted matching task.}
        (a, c) Graphs used to evaluate CRUCB and baselines.
        (b, d) Corresponding outcome functions for each task.
    }
    \label{figure:matching}
\end{figure*}

\begin{figure*}[!ht]
    \centering
    \begin{subfigure}{0.22\linewidth}
    \centering
    \includegraphics[width=0.99\linewidth, trim={4cm 0 4cm 0}, clip]{./figure/images/legend_6.pdf}
    \vspace{0.7cm}
    \end{subfigure}
    \hfill
    \begin{subfigure}{0.34\linewidth}
        \centering
        \includegraphics[width=0.99\linewidth, trim={1cm 0 1cm 0.8cm},clip]{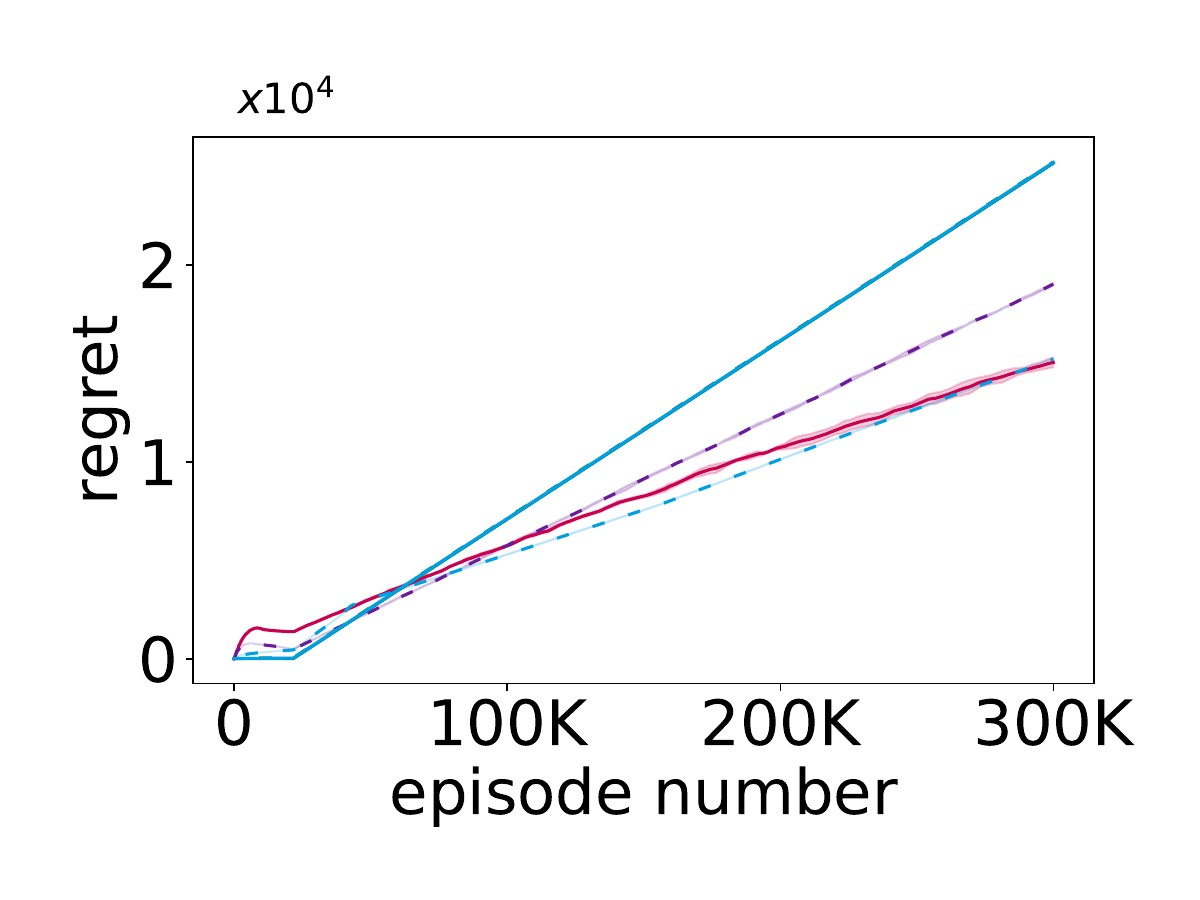}
        \vspace{-0.7cm}
        \caption{Matching-easy}
        \label{figure:regret_matching_simple}
    \end{subfigure}
    \hfill
    \begin{subfigure}{0.34\linewidth}
        \centering
        \includegraphics[width=0.99\linewidth, trim={1cm 0 1cm 0.8cm},clip]{./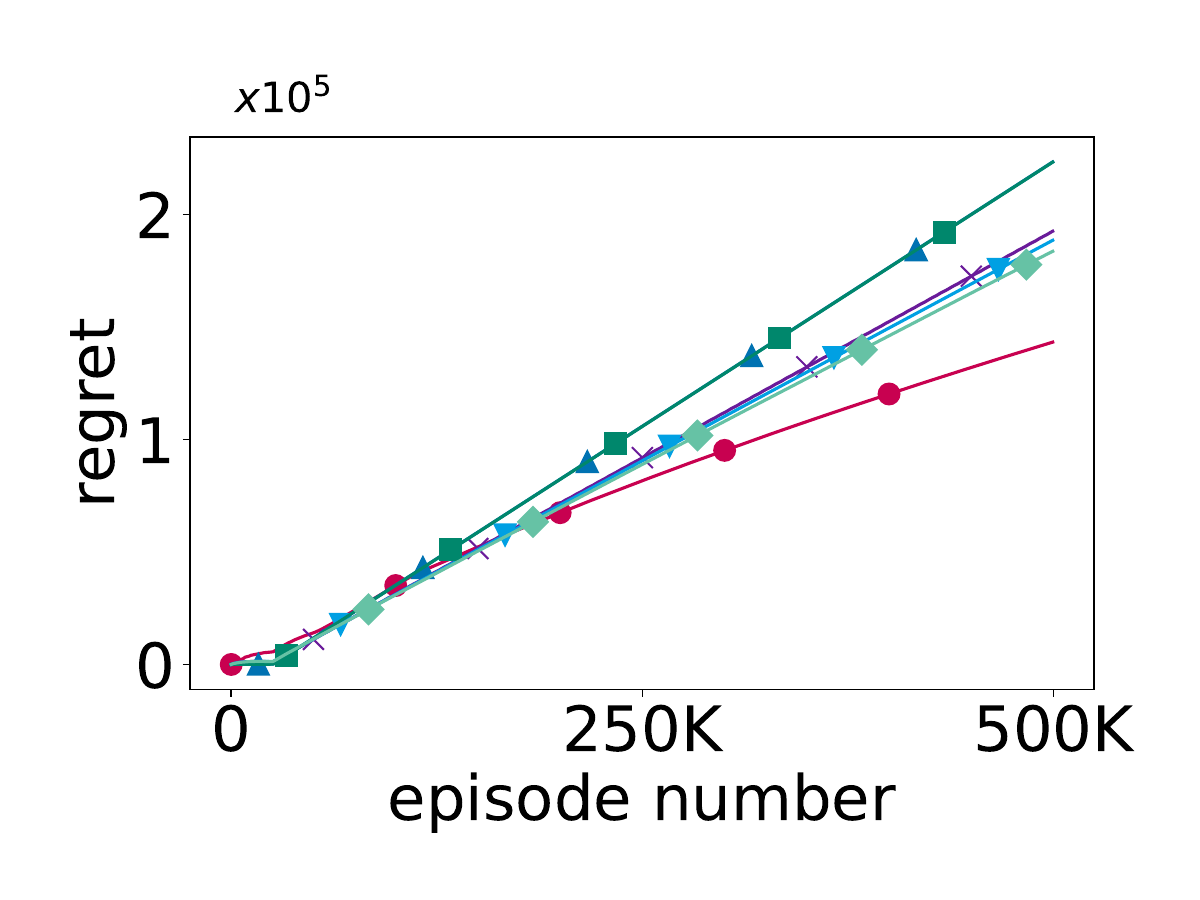}
        \vspace{-0.7cm}
        \caption{Matching-complex}
        \label{figure:regret_matching_complex}
    \end{subfigure}
    
    \caption{\textbf{Cumulative regret in maximum weighted matching task.}
        Regret curves for (a) Matching-easy and (b) Matching-complex. Lines show average; shaded areas indicate 99\% confidence intervals over 5 runs.
    }
    \label{figure:regret_matching}
\end{figure*}

We use the same outcome function as in the online shortest path problem, shown in Figure~\ref{figure:matching}\subref{figure:reward_matching} and \subref{figure:reward_complex_matching}. The graph structures are depicted in Figure~\ref{figure:matching}\subref{figure:graph_matching_simple} and \subref{figure:graph_matching_complex}. The regret results, shown in Figure~\ref{figure:regret_matching}, confirm that CRUCB outperforms the baseline algorithms in this task.

The task is particularly relevant in settings like job matching, where each job can be matched to a worker, and the reward might increase over time as workers gain experience. This makes the problem a perfect fit for combinatorial bandit settings, where the rewards of certain matches (such as experienced workers with higher skill levels) rise as more interactions occur, highlighting the rising aspect of the task.

\clearpage

\subsection{Minimum spanning tree}
\label{sec:spanning}

We conduct experiments on minimum spanning tree task, a fundamental problem in combinatorial optimization, where the objective is to find a subset of edges that connect all nodes in a graph with the minimum total edge weight, ensuring no cycles. However, in our setting, we treat the weight of each edge as a $1\!-\!\text{outcome}$, meaning we aim to maximize the total outcome, which is equivalent to minimizing the total edge weight.

\begin{figure*}[!ht]
    \centering
    \begin{subfigure}{0.23\linewidth}
        \centering
        \includegraphics[width=0.75\linewidth]{./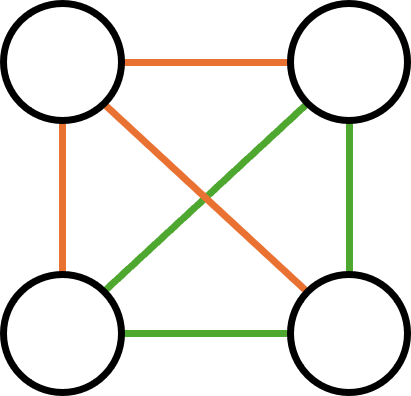}
        \caption{Spanning-easy}
        \label{figure:graph_spanning_simple}
    \end{subfigure}
    \hfill
    \begin{subfigure}{0.24\linewidth}
        \centering
        \includegraphics[width=0.95\linewidth, trim={1cm 0 1cm 0.8cm},clip]{./figure/images/reward_easy1.pdf}
        \vspace{-0.3cm}
        \caption{Outcome functions}
        \label{figure:reward_spanning}
    \end{subfigure}
    \hfill
    \begin{subfigure}{0.25\linewidth}
        \centering
        \includegraphics[width=0.75\linewidth]{./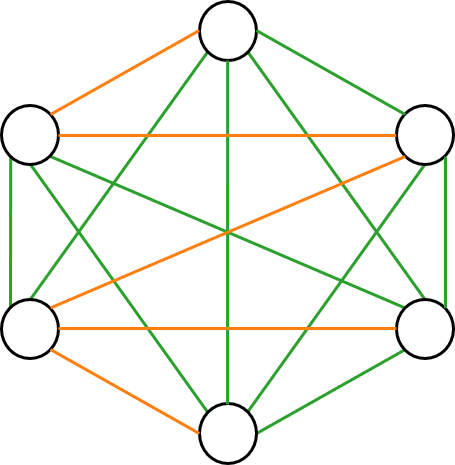}
        \caption{Spanning-complex}
        \label{figure:graph_spanning_complex}
    \end{subfigure}
    \hfill
    \begin{subfigure}{0.24\linewidth}
        \centering
        \includegraphics[width=0.95\linewidth, trim={1cm 0 1cm 0.8cm},clip]{./figure/images/reward_complex1.pdf}
        \vspace{-0.3cm}
        \caption{Outcome functions}
        \label{figure:reward_complex_spanning}
    \end{subfigure}
    \caption{\textbf{Minimum spanning tree task.}
        (a, c) Graphs used to evaluate CRUCB and baselines.
        (b, d) Corresponding outcome functions for each task.
    }
    \label{figure:spanning}
\end{figure*}

\begin{figure*}[!ht]
    \centering
    \begin{subfigure}{0.22\linewidth}
    \centering
    \includegraphics[width=0.99\linewidth, trim={4cm 0 4cm 0}, clip]{./figure/images/legend_6.pdf}
    \vspace{0.7cm}
    \end{subfigure}
    \hfill
    \begin{subfigure}{0.34\linewidth}
        \centering
        \includegraphics[width=0.99\linewidth, trim={1cm 0 1cm 0.8cm},clip]{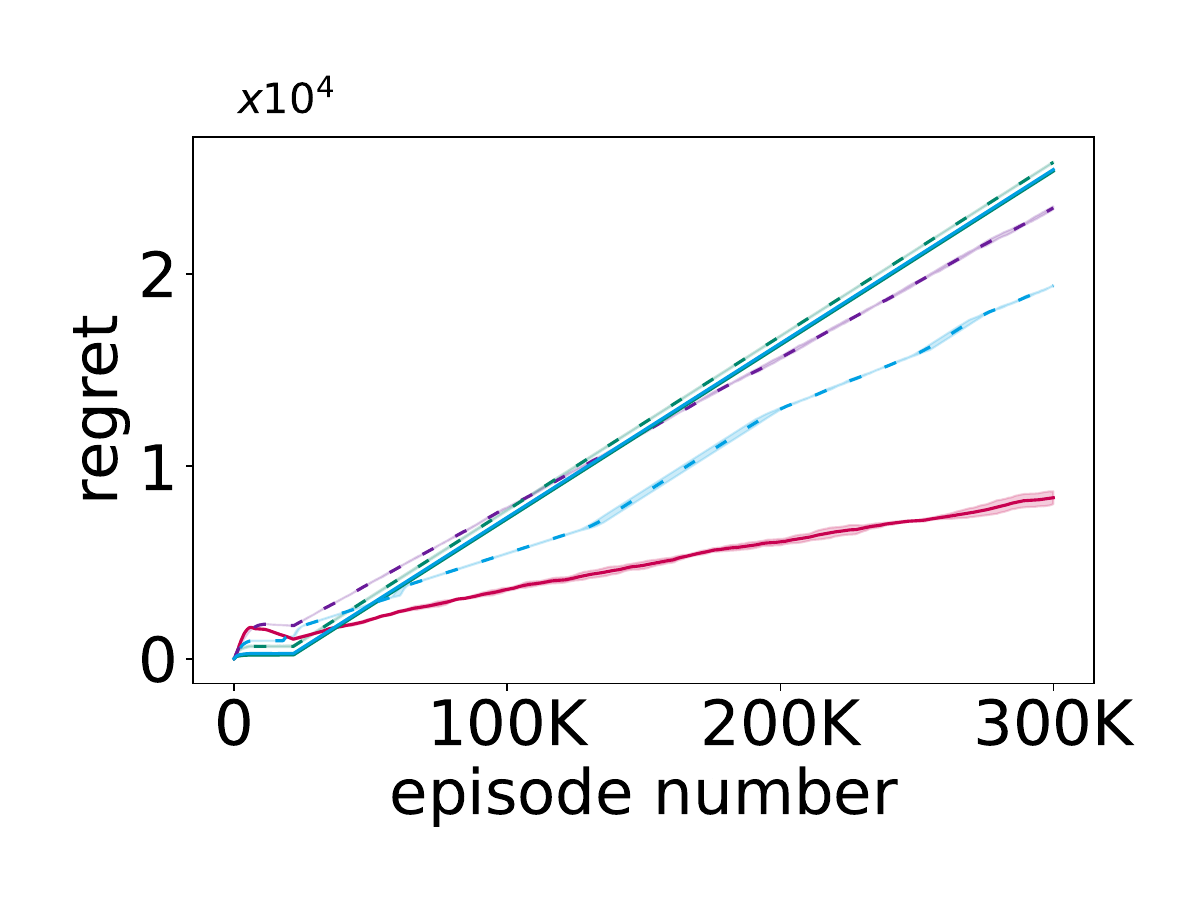}
        \vspace{-0.7cm}
        \caption{Simple graph}
        \label{figure:regret_spanning_simple}
    \end{subfigure}
    \hfill
    \begin{subfigure}{0.34\linewidth}
        \centering
        \includegraphics[width=0.99\linewidth, trim={1cm 0 1cm 0.8cm},clip]{./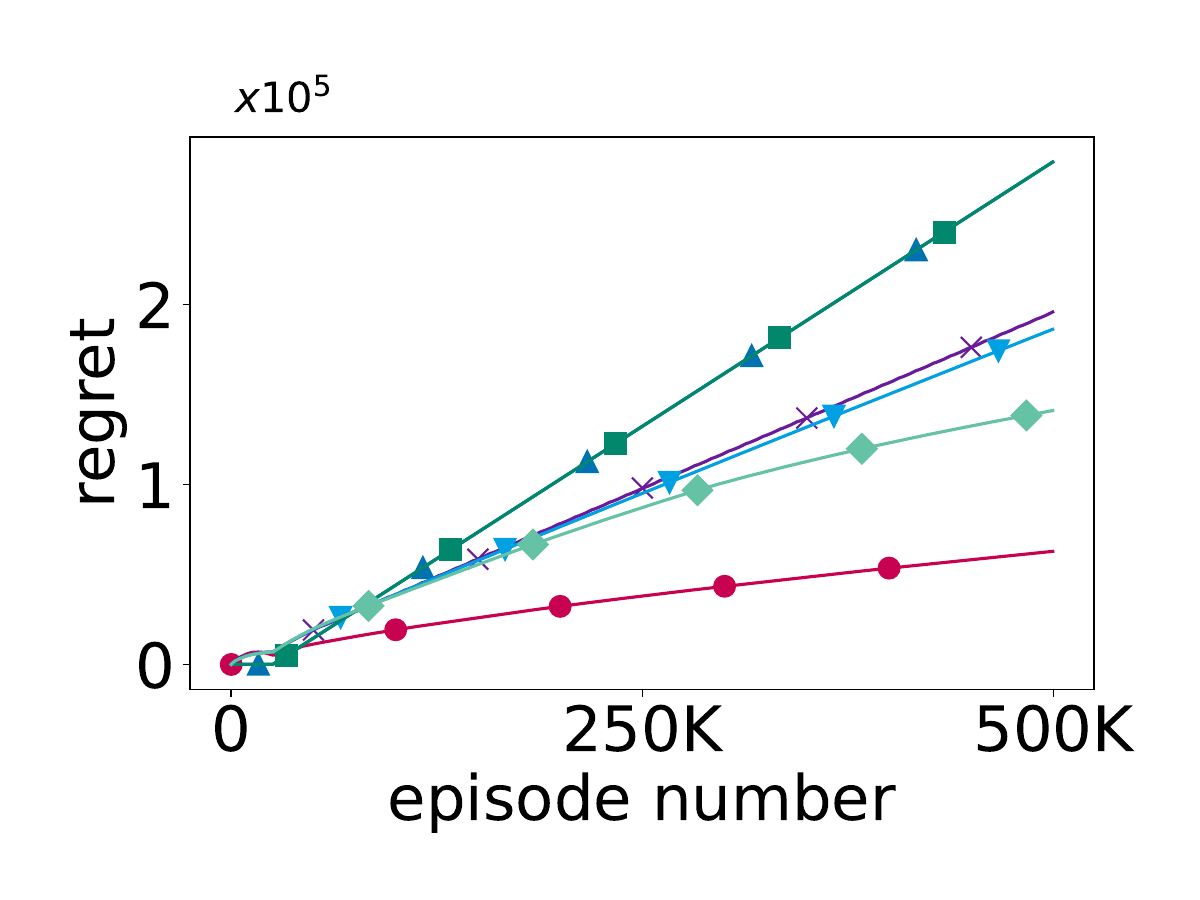}
        \vspace{-0.7cm}
        \caption{Complex graph}
        \label{figure:regret_spanning_complex}
    \end{subfigure}
    \caption{\textbf{Cumulative regret in minimum spanning tree task.}
        Regret curves for (a) Spanning-easy and (b) Spanning-complex. Lines show average; shaded areas indicate 99\% confidence intervals over 5 runs.
    }
    \label{figure:regret_spanning}
\end{figure*}

Similarly, we evaluate minimum spanning tree task with the same outcome function from Figure~\ref{figure:spanning}\subref{figure:reward_spanning} and \subref{figure:reward_complex_spanning}, applied to the graph structures in Figure~\ref{figure:spanning}\subref{figure:graph_spanning_simple} and \subref{figure:graph_spanning_complex}. The regret results, presented in Figure~\ref{figure:regret_spanning}, indicate that CRUCB consistently performs better than the baselines.

This formulation is particularly relevant in practical applications such as network routing, where the objective is to establish efficient communication across a distributed system. 
Over time, as certain paths are used more frequently, the network can adapt and optimize its behavior: caches warm up, congestion reduces through load balancing, and routing protocols fine-tune their decisions.
As a result, the effective cost of using the same edge decreases, which translates into a rising reward for that edge.

\clearpage

\subsection{Sensitivity to the Window Size Parameter} \label{sec:sensitivity}

To evaluate the robustness of CRUCB to the choice of the window size hyperparameter $\epsilon$, we conducted a sensitivity analysis in the Path-easy environment (described in Section 6.1). We compared the cumulative regret at different episode counts for $\epsilon$ values of 0.05, 0.125, 0.25, and 0.4. 

\begin{table}[h]
\centering
\caption{Cumulative regret at different episodes for various $\epsilon$ values in the Path-easy task.}
\begin{tabular}{lrrrr}
\toprule
\textbf{Regret} & \textbf{$\epsilon = 0.05$} & \textbf{$\epsilon = 0.125$} & \textbf{$\epsilon = 0.25$} & \textbf{$\epsilon = 0.4$} \\ \midrule
100K            & 8019.34               & 8020.53                & 8020.53               & 8019.64              \\
200K            & 13715.91              & 13717.10               & 13717.10              & 13716.21             \\
300K            & 19060.71              & 19061.90               & 19061.90              & 19061.01             \\ \bottomrule
\end{tabular}
\end{table}

The results, summarized in the table below, show that the performance of CRUCB is remarkably stable across this wide range of values, indicating that our algorithm is not overly sensitive to this hyperparameter choice in practice.

\clearpage
\section{Heatmaps in AntMaze-complex}
\label{sec:heatmap}
\begin{figure}[!ht]
\centering
    \begin{subfigure}{0.6\linewidth}
        \centering
        \includegraphics[width=\linewidth]{./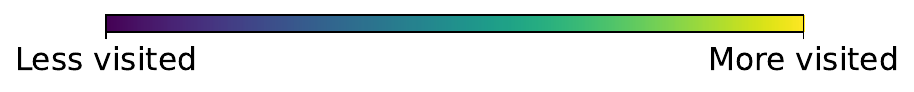}
    \end{subfigure}
    \vfill
    \begin{subfigure}{0.06\linewidth}
        \rotatebox{90}{$t$ = 100}
        \vspace{0.8cm}
        \vfill
        \rotatebox{90}{$t$ = 500}
        \vspace{0.9cm}
        \vfill
        \rotatebox{90}{$t$ = 3000}
        \vspace{0.8cm}
    \end{subfigure}
    \hspace{-0.7cm}
    \begin{subfigure}{0.133\linewidth}
        \centering
            \includegraphics[width=0.98\linewidth, trim={3.2cm 1cm 3.2cm 1cm},clip]{./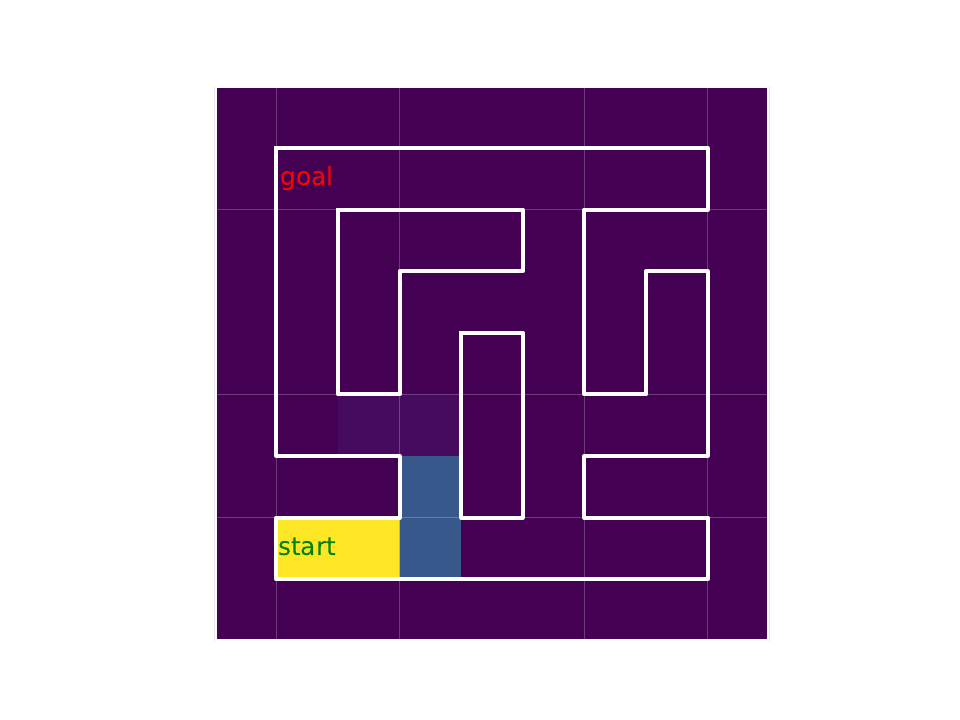}
            \includegraphics[width=0.98\linewidth, trim={3.2cm 1cm 3.2cm 1cm},clip]{./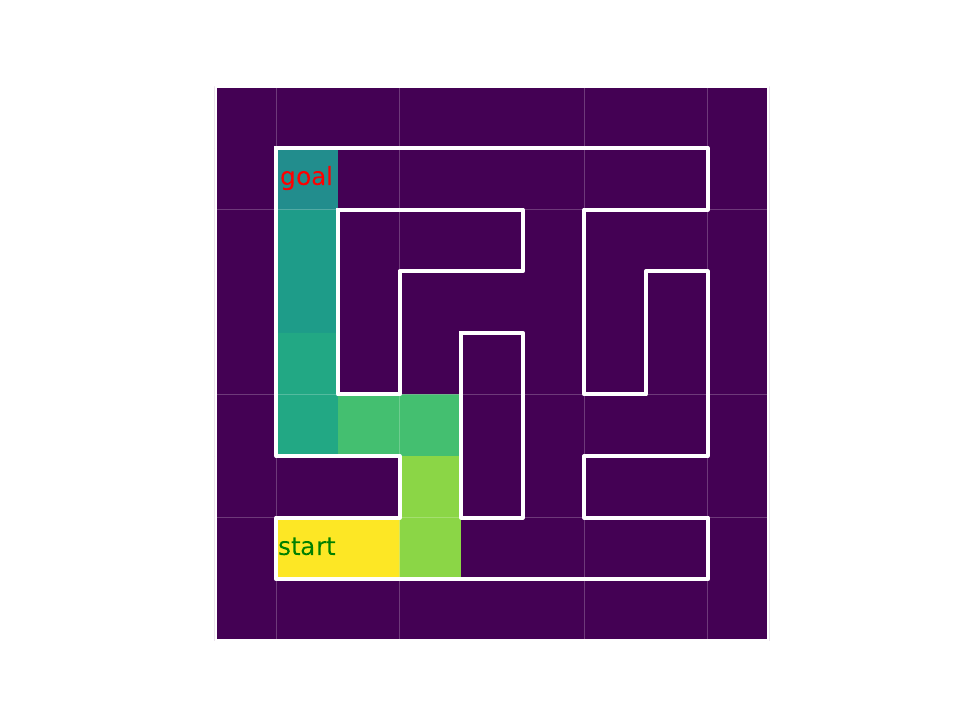}
            \includegraphics[width=0.98\linewidth, trim={3.2cm 1cm 3.2cm 1cm},clip]{./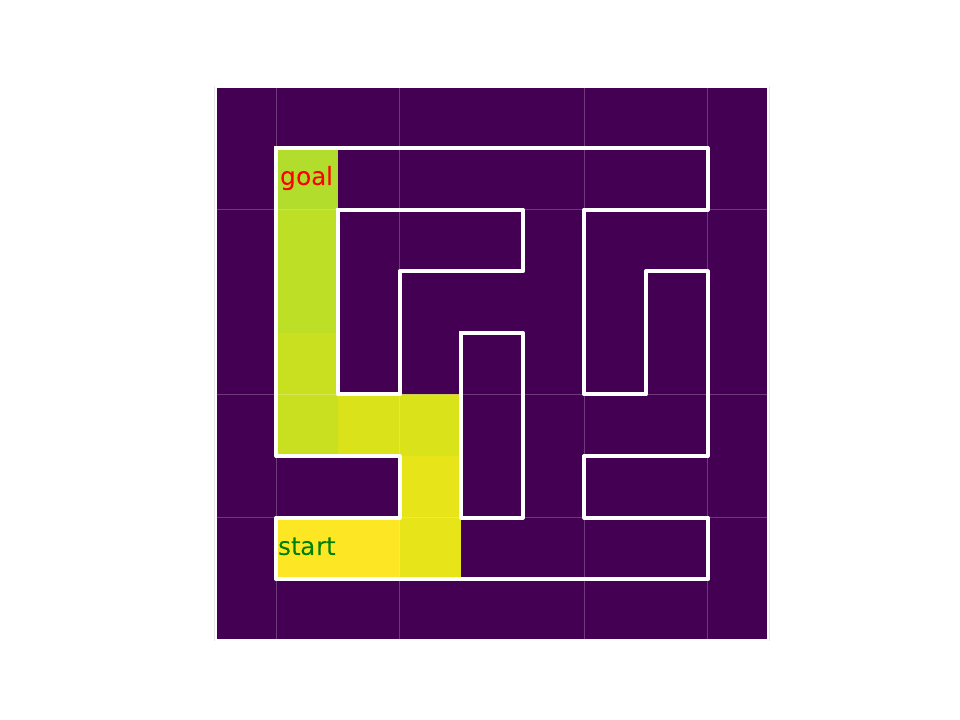}
        \caption{Optimal}
        \label{figure:heatmap-oracle-app}
    \end{subfigure}
    \begin{subfigure}{0.133\linewidth}
        \centering
            \includegraphics[width=0.98\linewidth, trim={3.2cm 1cm 3.2cm 1cm},clip]{./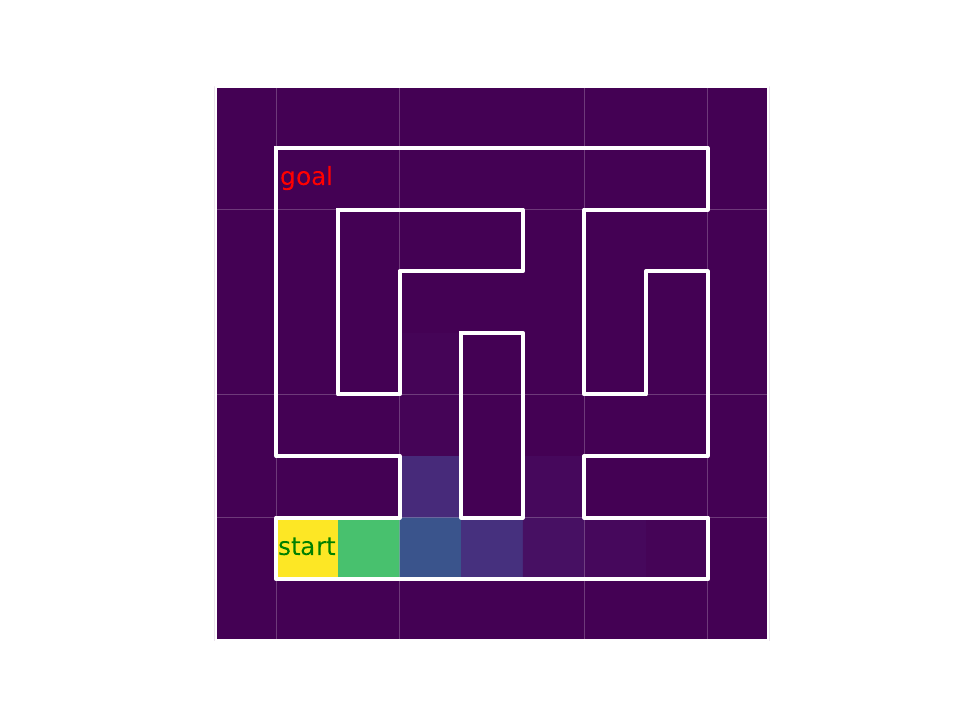}
            \includegraphics[width=0.98\linewidth, trim={3.2cm 1cm 3.2cm 1cm},clip]{./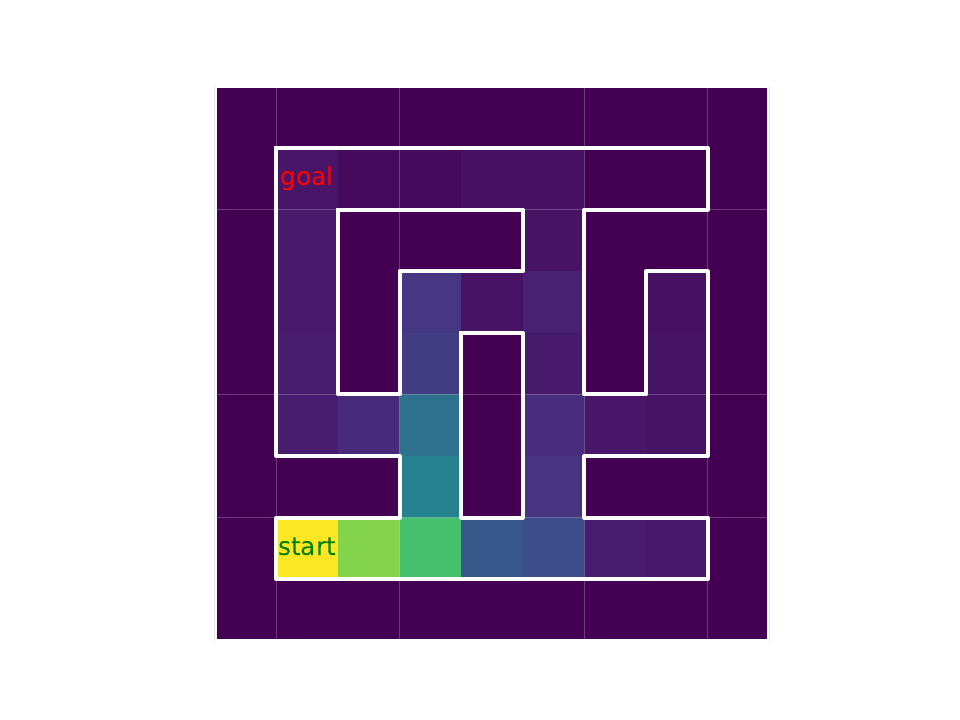}
            \includegraphics[width=0.98\linewidth, trim={3.2cm 1cm 3.2cm 1cm},clip]{./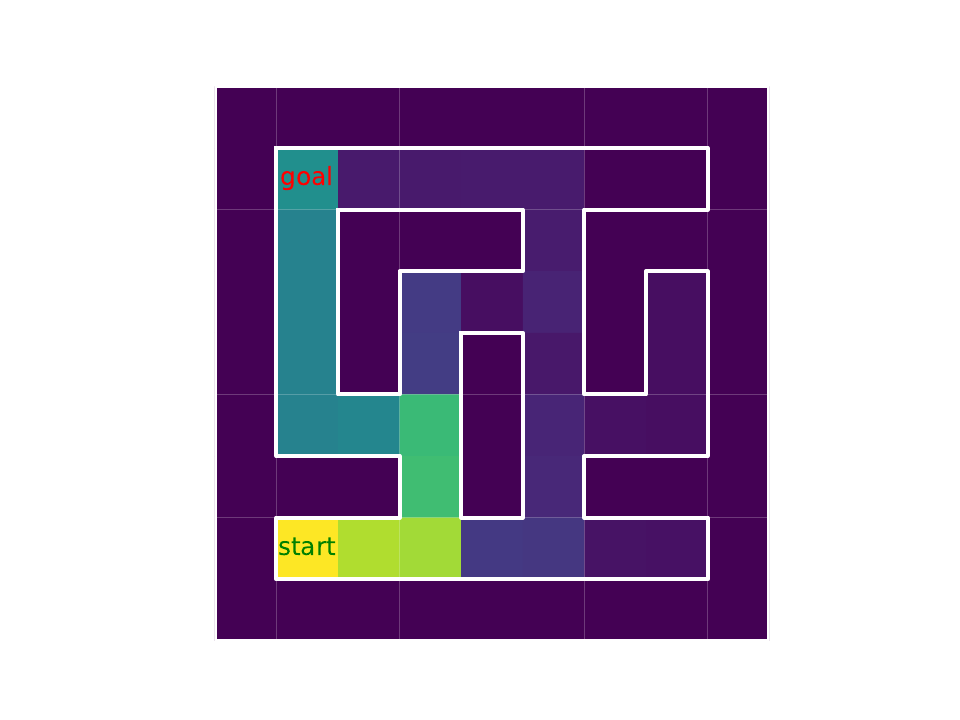}
        \caption{CRUCB}
        \label{figure:heatmap-RCUCB-app}
    \end{subfigure}
    \begin{subfigure}{0.133\linewidth}
        \centering
            \includegraphics[width=0.98\linewidth, trim={3.2cm 1cm 3.2cm 1cm},clip]{./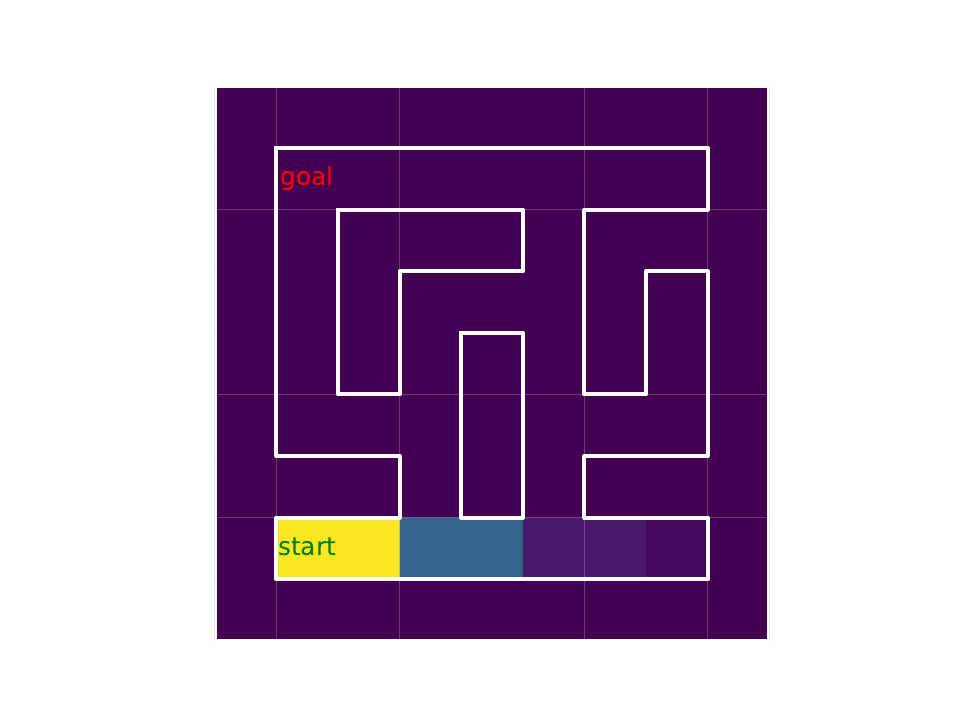}
            \includegraphics[width=0.98\linewidth, trim={3.2cm 1cm 3.2cm 1cm},clip]{./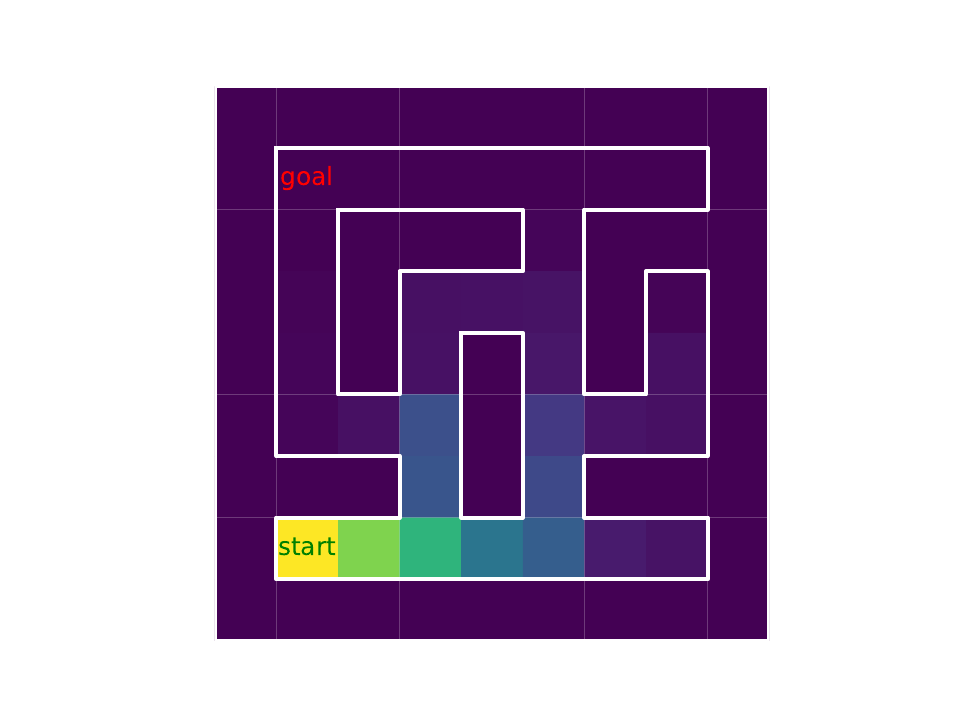}
            \includegraphics[width=0.98\linewidth, trim={3.2cm 1cm 3.2cm 1cm},clip]{./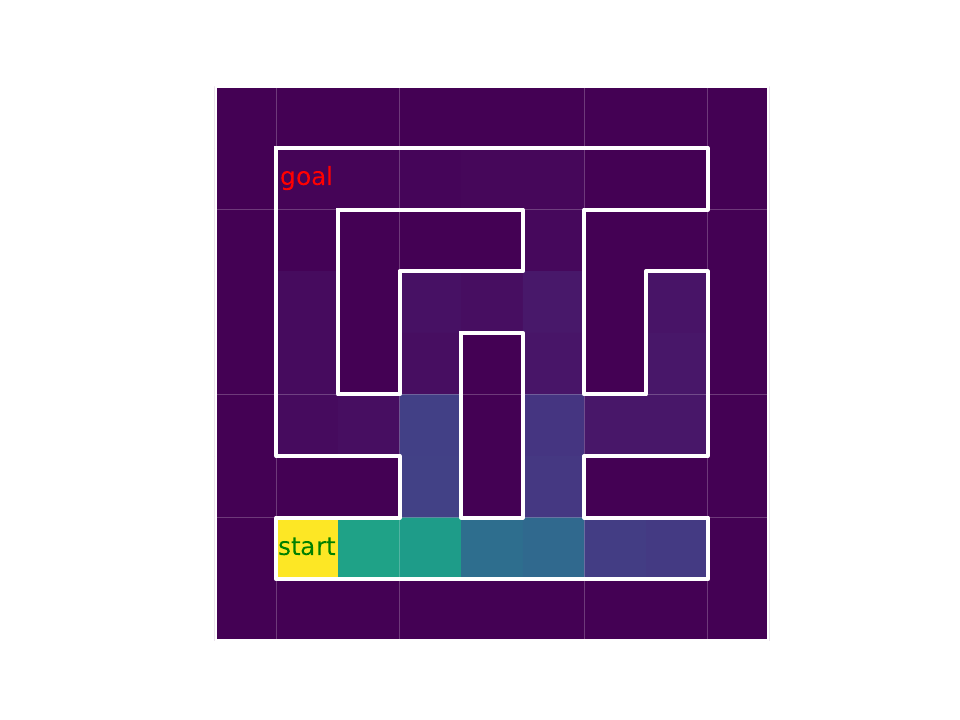}
        \caption{R-ed-UCB}
        \label{figure:heatmap-RUCB-app}
    \end{subfigure}
    \begin{subfigure}{0.133\linewidth}
        \centering
            \includegraphics[width=0.98\linewidth, trim={3.2cm 1cm 3.2cm 1cm},clip]{./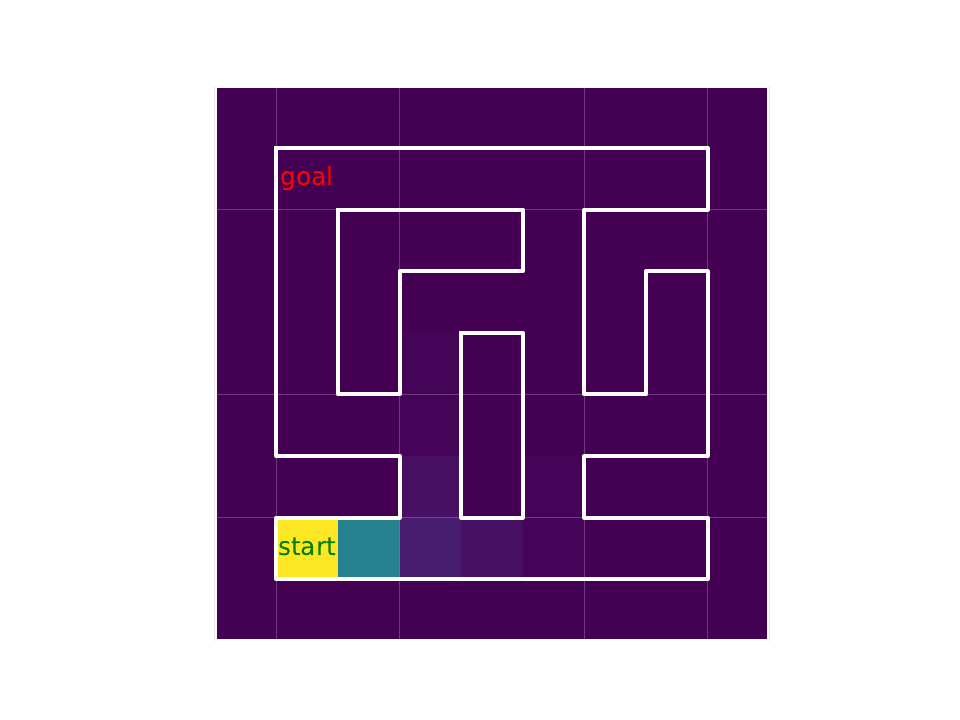}
            \includegraphics[width=0.98\linewidth, trim={3.2cm 1cm 3.2cm 1cm},clip]{./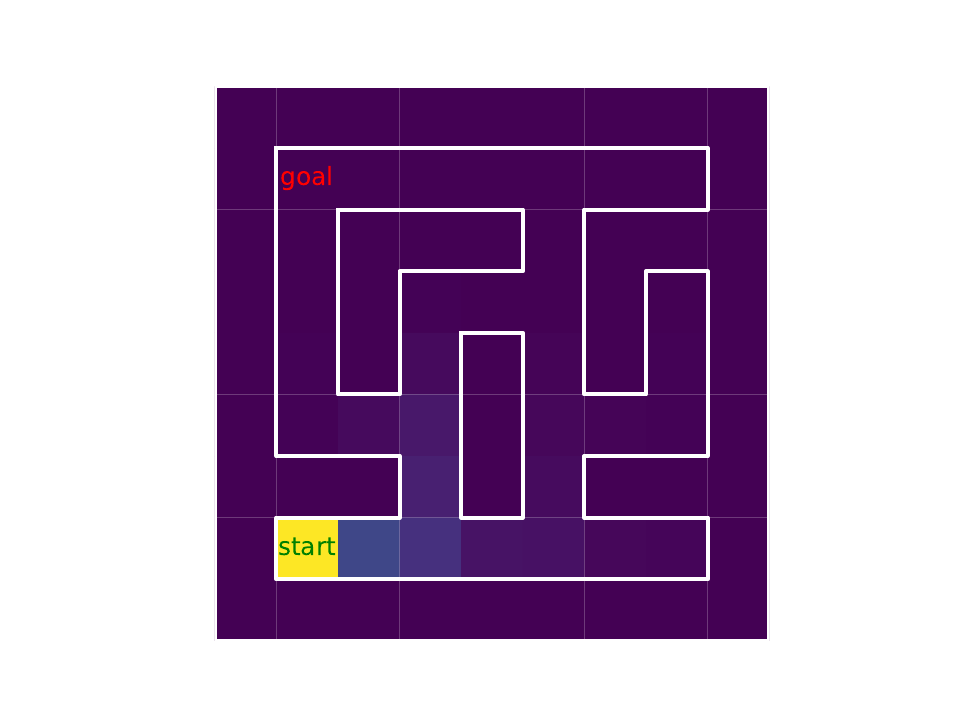}
            \includegraphics[width=0.98\linewidth, trim={3.2cm 1cm 3.2cm 1cm},clip]{./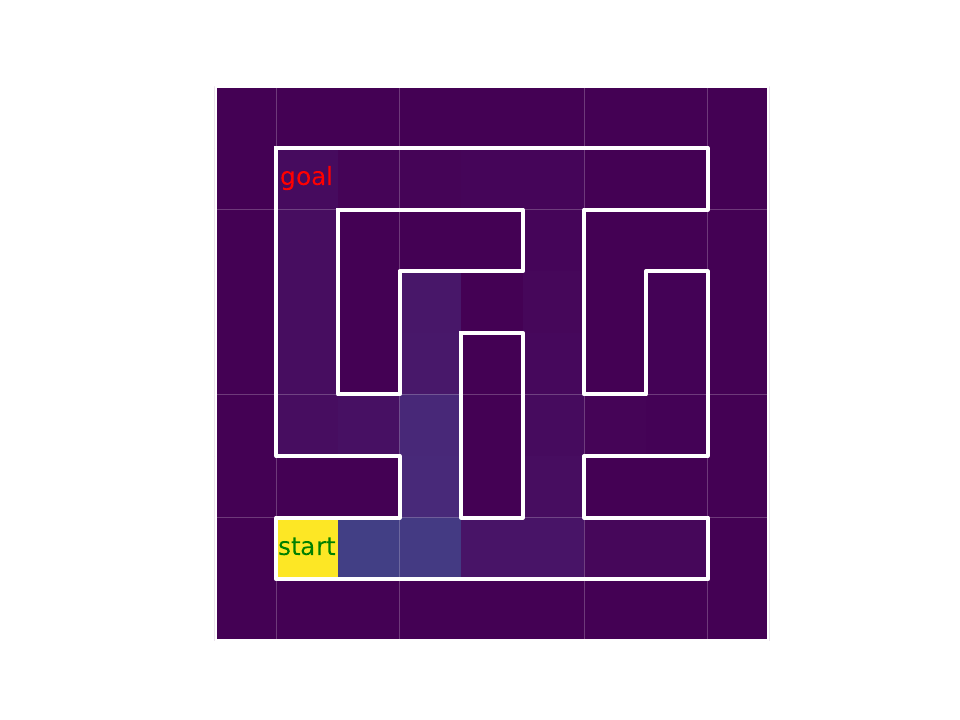}
        \caption{SW-CUCB}
        \label{figure:heatmap-SWCUCB-app}
    \end{subfigure}
    \begin{subfigure}{0.133\linewidth}
        \centering
            \includegraphics[width=0.98\linewidth, trim={3.2cm 1cm 3.2cm 1cm},clip]{./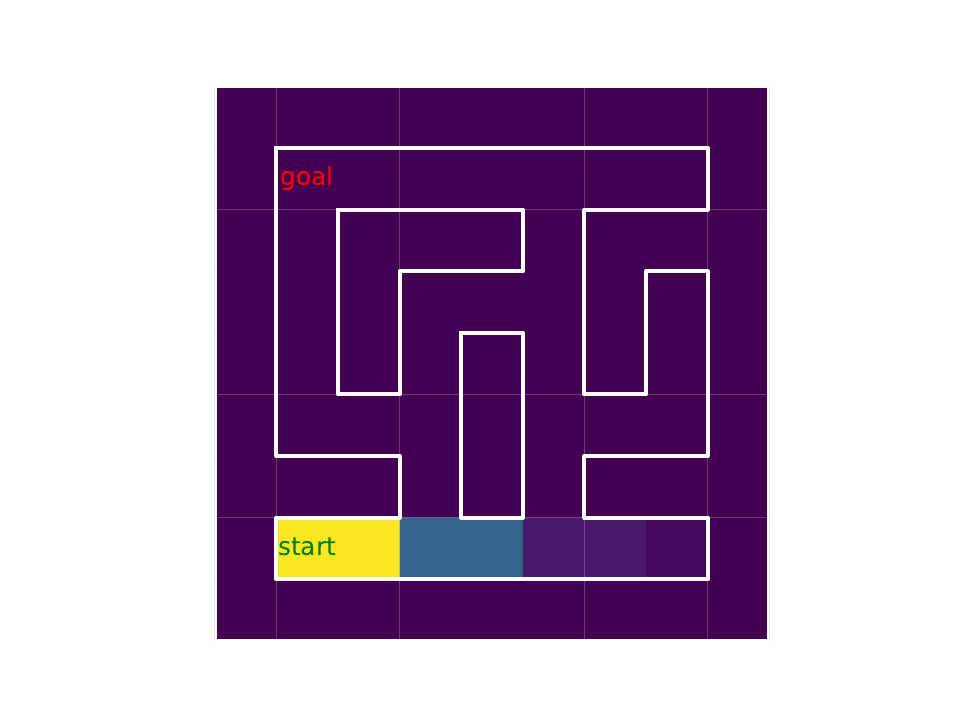}
            \includegraphics[width=0.98\linewidth, trim={3.2cm 1cm 3.2cm 1cm},clip]{./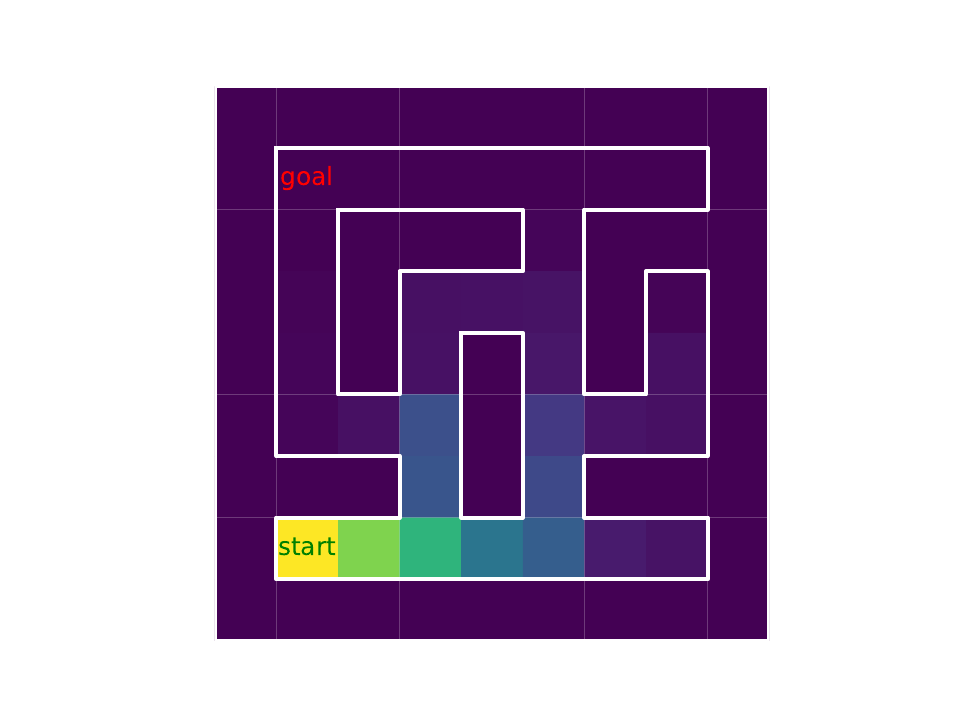}
            \includegraphics[width=0.98\linewidth, trim={3.2cm 1cm 3.2cm 1cm},clip]{./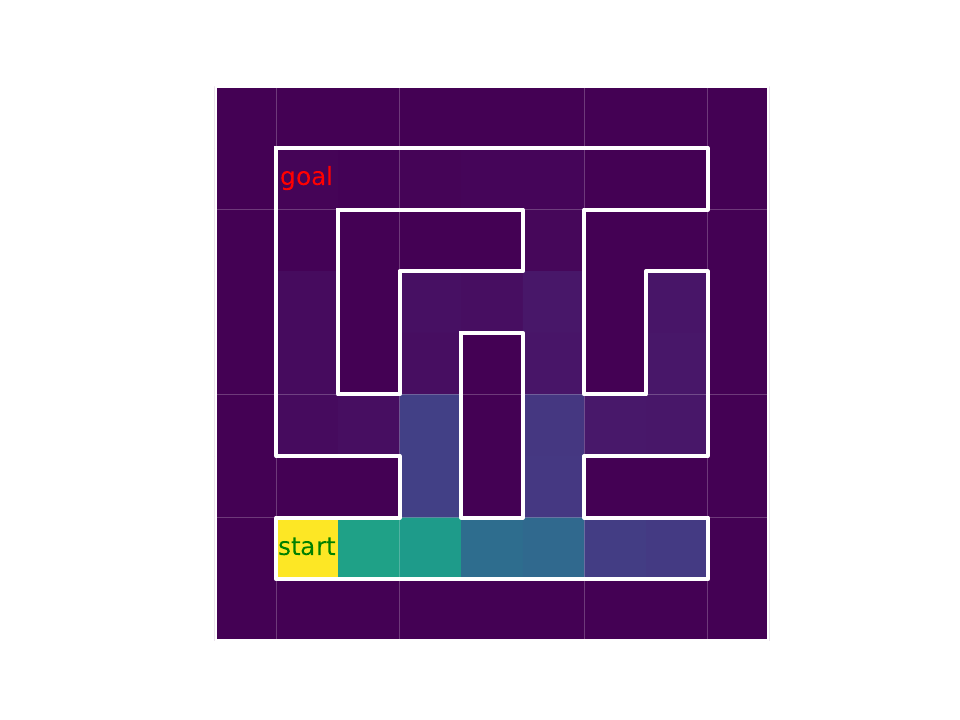}
        \caption{SW-UCB}
        \label{figure:heatmap-SWUCB-app}
    \end{subfigure}
    \begin{subfigure}{0.133\linewidth}
        \centering
            \includegraphics[width=0.98\linewidth, trim={3.2cm 1cm 3.2cm 1cm},clip]{./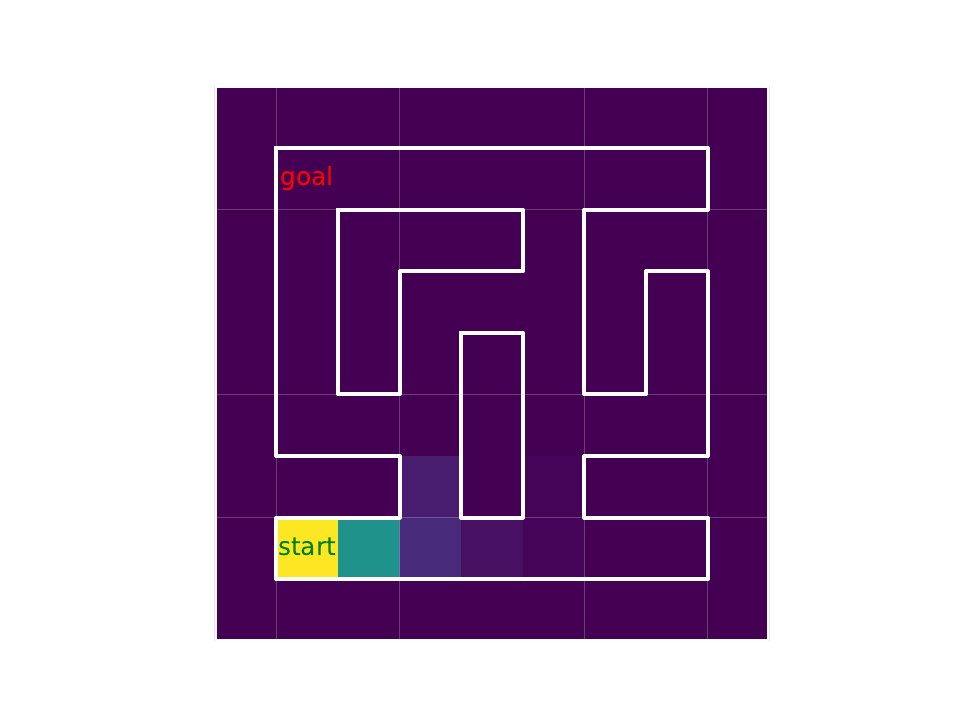}
            \includegraphics[width=0.98\linewidth, trim={3.2cm 1cm 3.2cm 1cm},clip]{./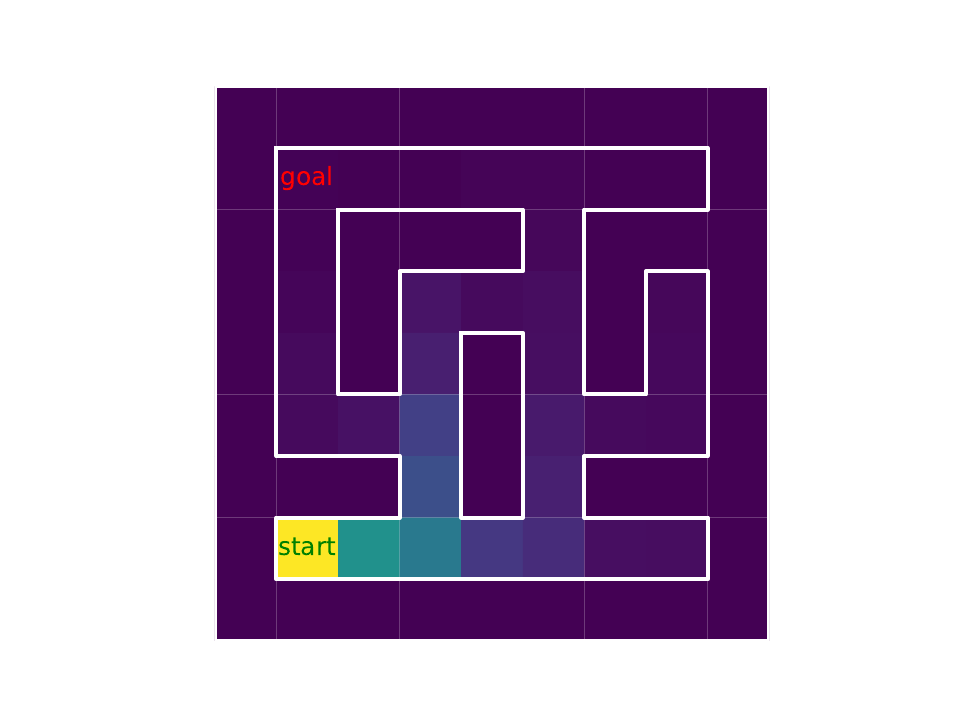}
            \includegraphics[width=0.98\linewidth, trim={3.2cm 1cm 3.2cm 1cm},clip]{./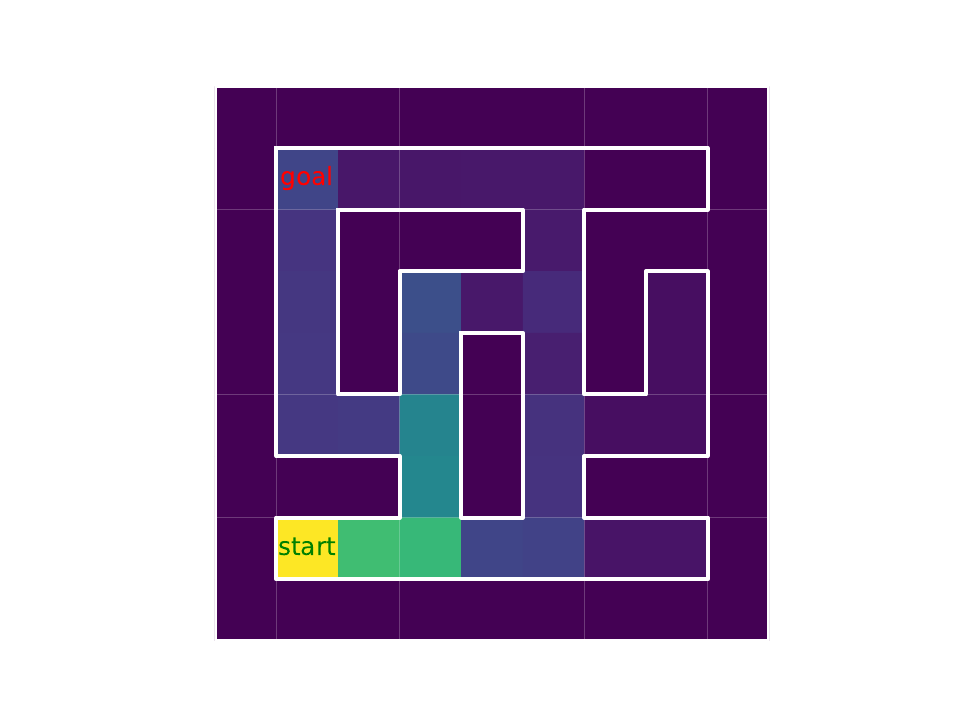}
        \caption{SW-CTS}
        \label{figure:heatmap-SWCTS-app}
    \end{subfigure}
    \begin{subfigure}{0.133\linewidth}
        \centering
            \includegraphics[width=0.98\linewidth, trim={3.2cm 1cm 3.2cm 1cm},clip]{./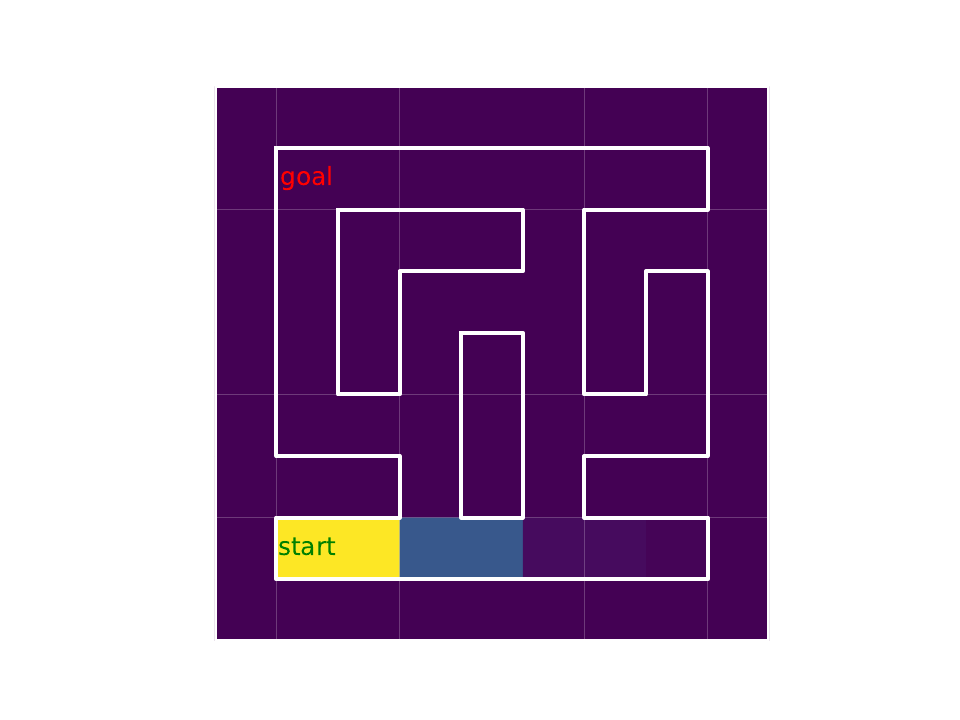}
            \includegraphics[width=0.98\linewidth, trim={3.2cm 1cm 3.2cm 1cm},clip]{./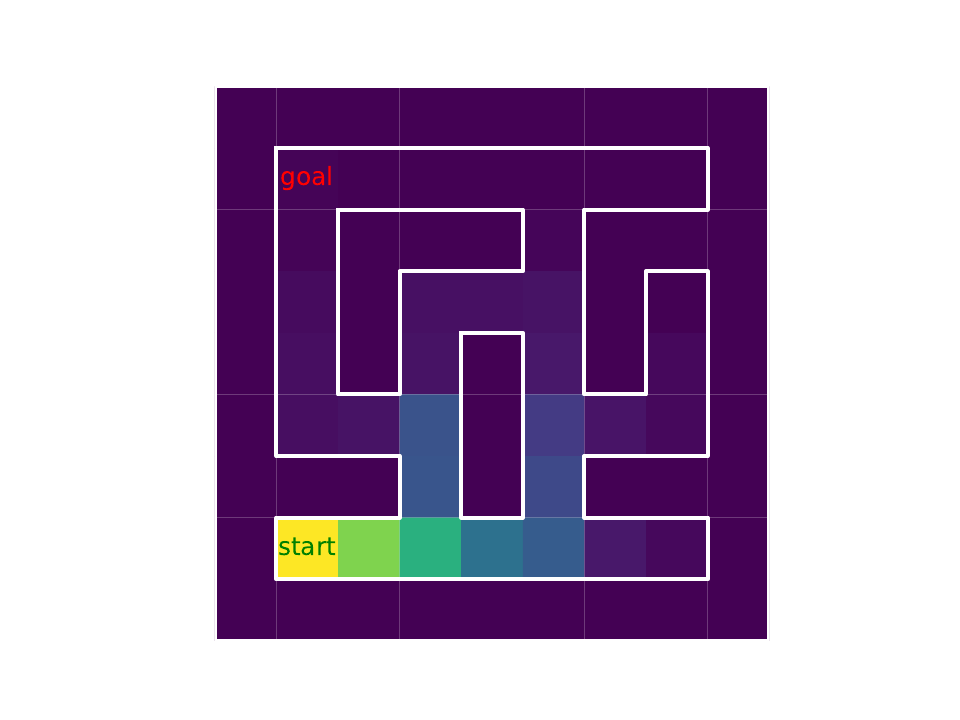}
            \includegraphics[width=0.98\linewidth, trim={3.2cm 1cm 3.2cm 1cm},clip]{./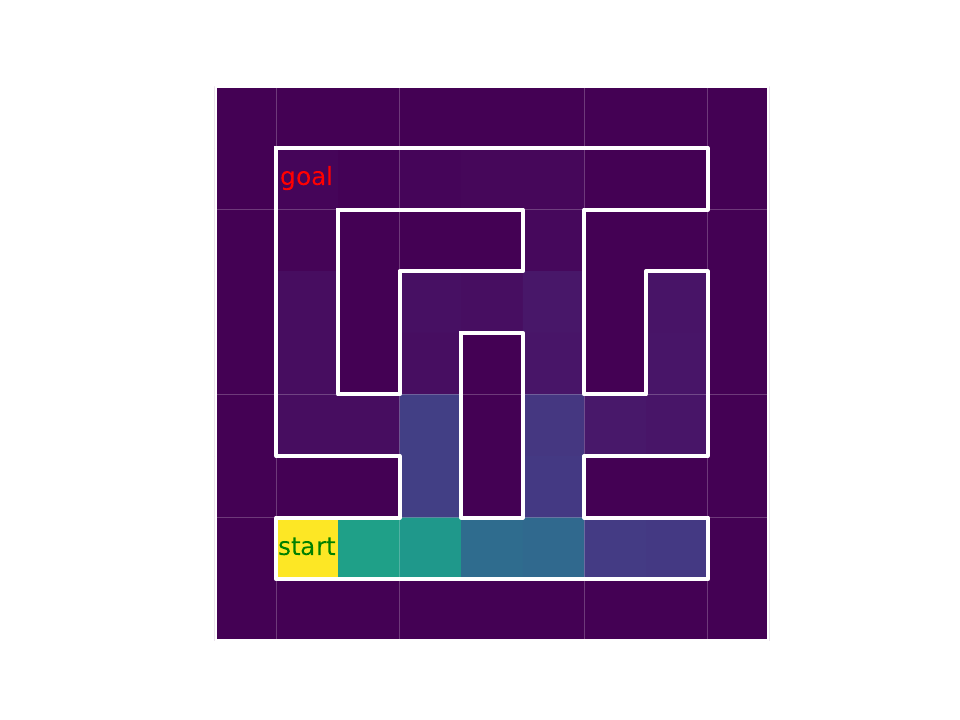}
        \caption{SW-TS}
        \label{figure:heatmap-SWTS-app}
    \end{subfigure}

    \vspace{-0.1cm}
    \caption{\textbf{Heatmap illustrating visit frequencies in AntMaze-complex.} 
    We visualize the visit frequencies for the optimal policy, CRUCB, and baseline algorithms. 
    The heatmap includes three rows representing visit frequencies until episode numbers 100, 500, and 3000.
    }
    \label{figure:heatmap-app}
\end{figure}

In Figure~\ref{figure:heatmap-app}, we provide a more comprehensive view by illustrating the exploration patterns across all baseline algorithms at various stages.
The optimal policy, which follows the oracle constant policy, only explores the optimal path from the start to the goal, resulting in highly focused exploration along this path, as depicted in Figure~\ref{figure:heatmap-oracle-app}.
In Figure~\ref{figure:heatmap-RCUCB-app}, CRUCB exhibits exploration patterns most similar to the optimal policy compared to other baselines, demonstrating its efficiency in targeting the goal effectively.
Among the baselines, SW-CTS notably aligns most closely with the optimal policy in the exploration patterns and is the only algorithm to show a significant difference in regret compared to the others, as seen in Figure~\ref{figure:regret_antmaze-complex}.
In comparison, algorithms not specifically designed for combinatorial settings, such as R-ed-UCB, SW-UCB, and SW-TS, suffer from less efficient exploration.
Their exploration resembles a breadth-first search pattern, as they must explore a broader range of super arms despite having a given goal.

\begin{figure}[!ht]
\centering
    \begin{subfigure}{0.6\linewidth}
        \centering
        \includegraphics[width=\linewidth]{./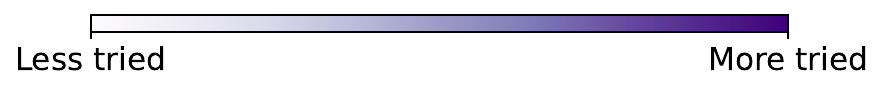}
    \end{subfigure}
    \vfill
    \begin{subfigure}{0.06\linewidth}
        \rotatebox{90}{$t$ = 100}
        \vspace{0.8cm}
        \vfill
        \rotatebox{90}{$t$ = 500}
        \vspace{0.9cm}
        \vfill
        \rotatebox{90}{$t$ = 3000}
        \vspace{0.8cm}
    \end{subfigure}
    \hspace{-0.7cm}
    \begin{subfigure}{0.133\linewidth}
        \centering
            \includegraphics[width=0.98\linewidth, trim={3.2cm 1cm 3.2cm 1cm},clip]{./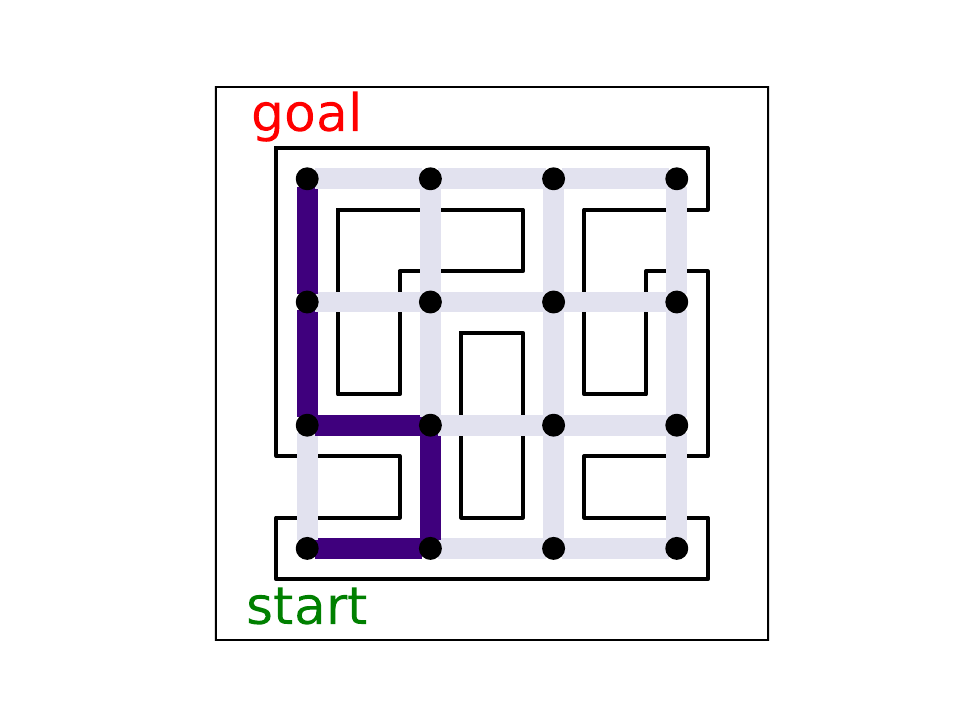}
            \includegraphics[width=0.98\linewidth, trim={3.2cm 1cm 3.2cm 1cm},clip]{./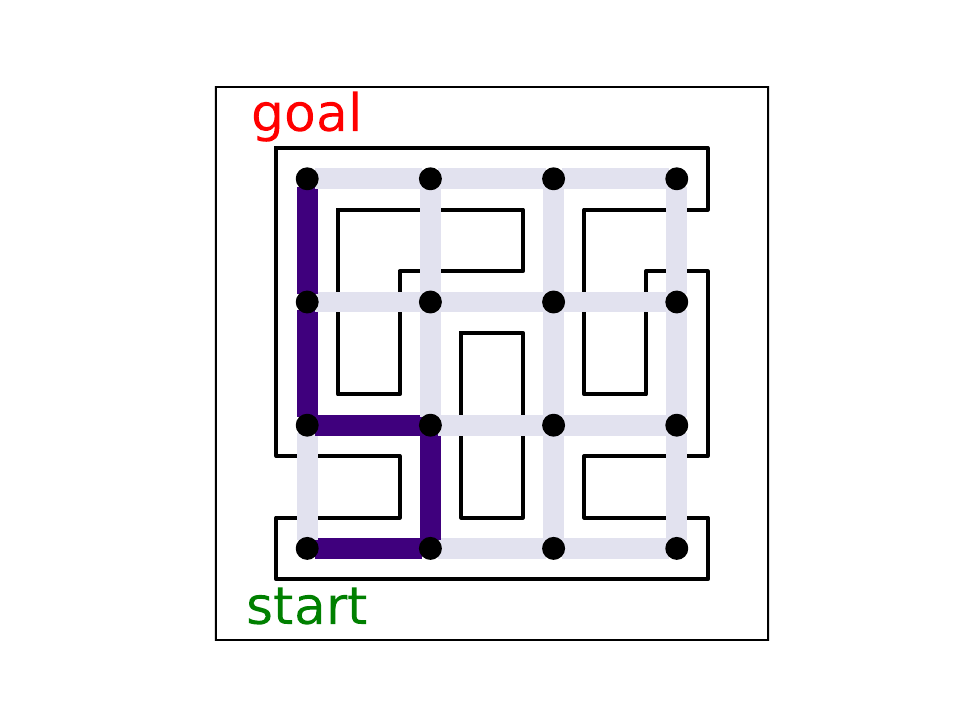}
            \includegraphics[width=0.98\linewidth, trim={3.2cm 1cm 3.2cm 1cm},clip]{./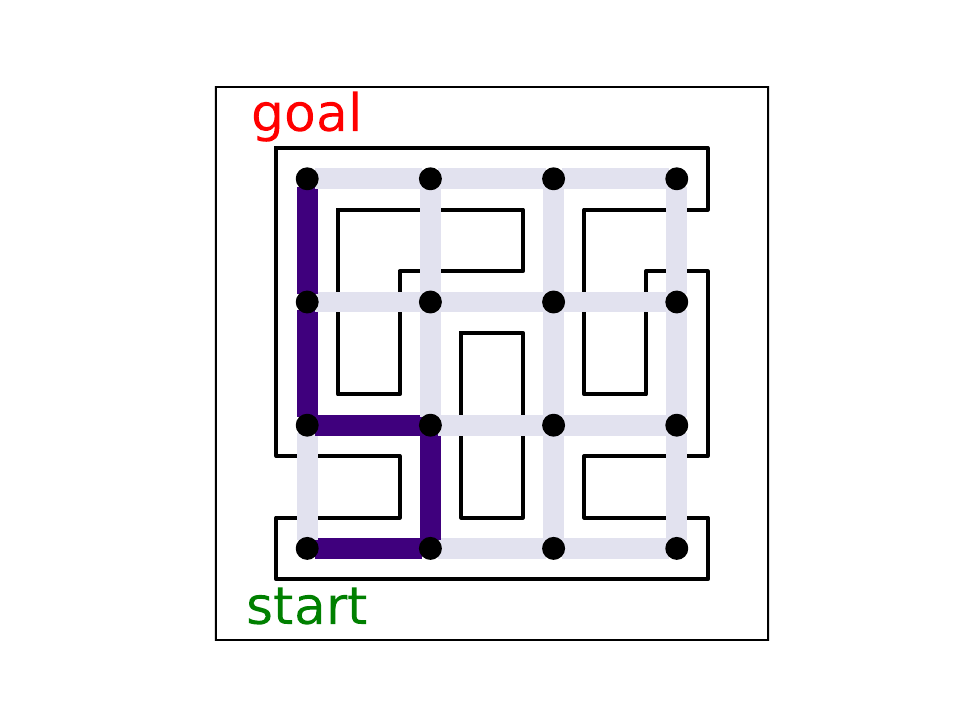}
        \caption{Optimal}
        \label{figure:heatmap-oracle-app2}
    \end{subfigure}
    \begin{subfigure}{0.133\linewidth}
        \centering
            \includegraphics[width=0.98\linewidth, trim={3.2cm 1cm 3.2cm 1cm},clip]{./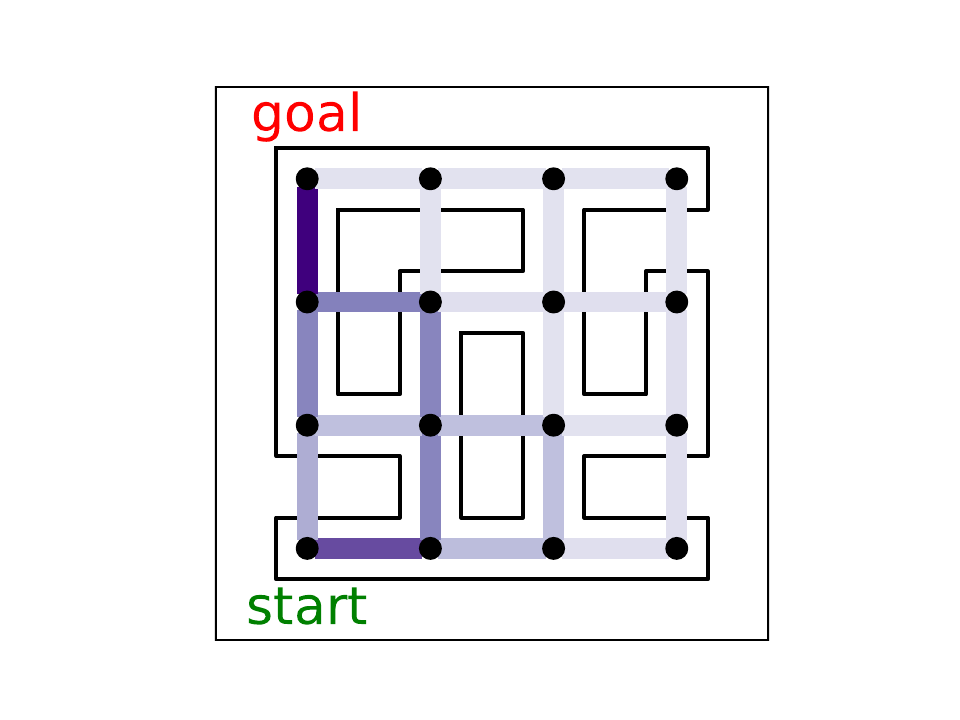}
            \includegraphics[width=0.98\linewidth, trim={3.2cm 1cm 3.2cm 1cm},clip]{./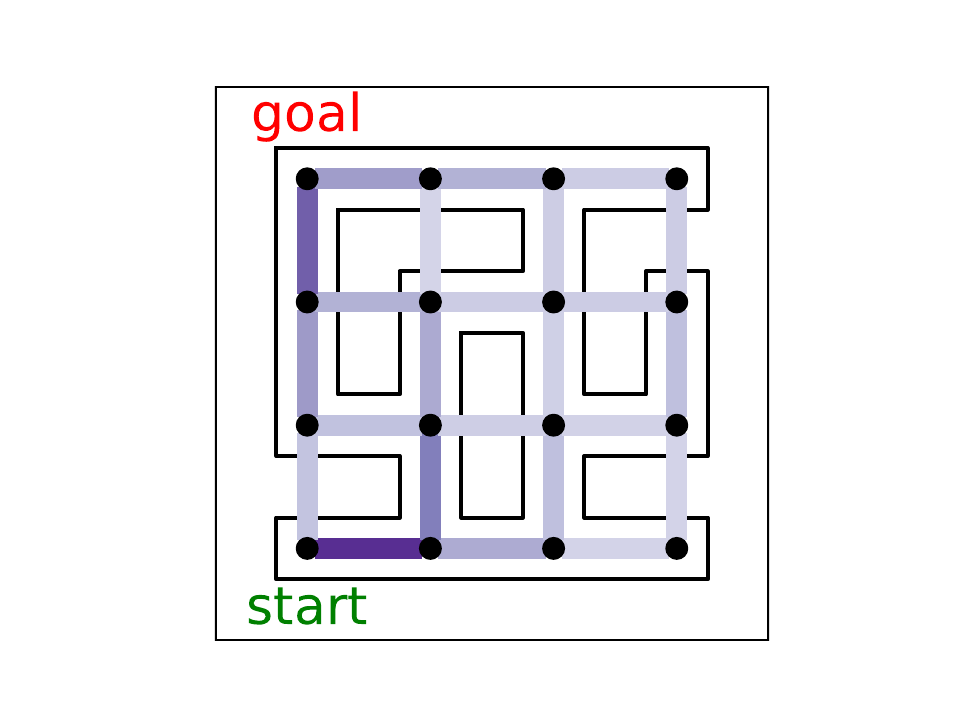}
            \includegraphics[width=0.98\linewidth, trim={3.2cm 1cm 3.2cm 1cm},clip]{./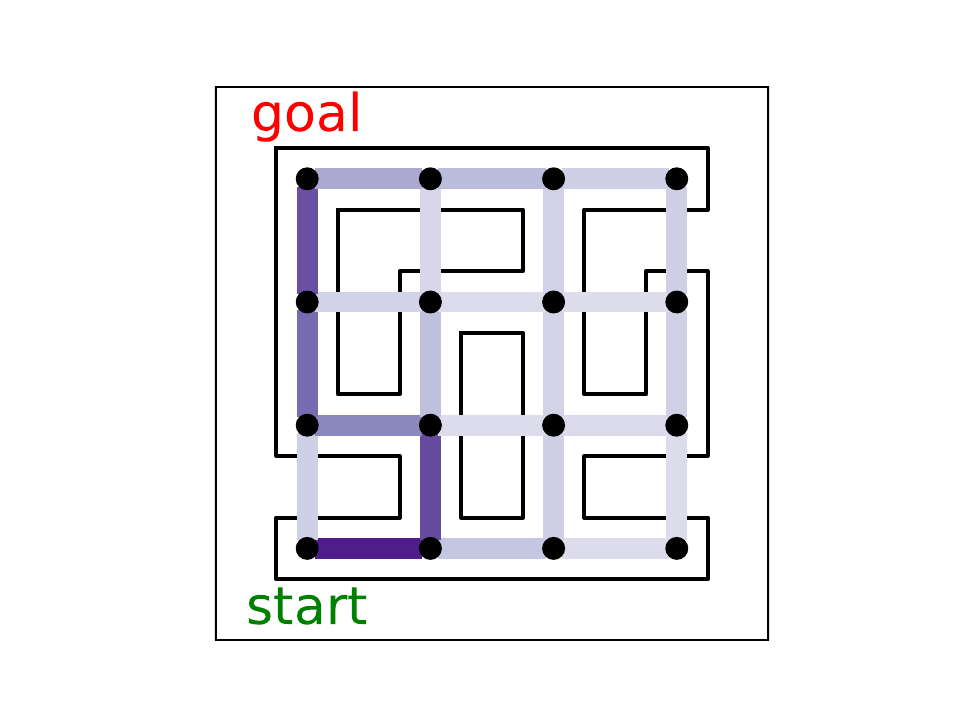}
        \caption{CRUCB}
        \label{figure:heatmap-RCUCB-app2}
    \end{subfigure}
    \begin{subfigure}{0.133\linewidth}
        \centering
            \includegraphics[width=0.98\linewidth, trim={3.2cm 1cm 3.2cm 1cm},clip]{./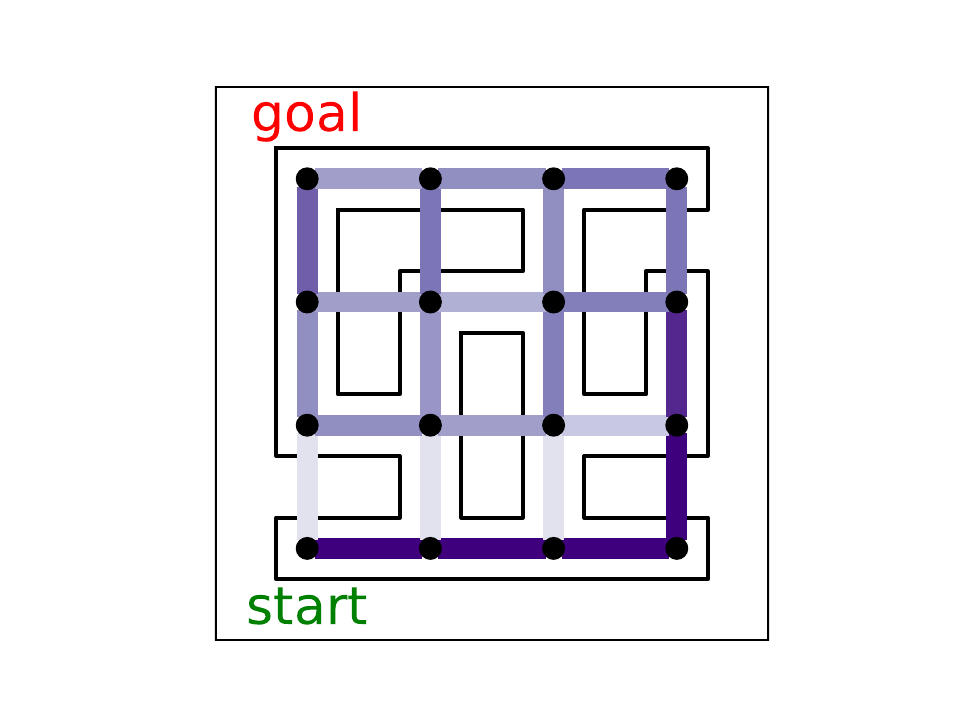}
            \includegraphics[width=0.98\linewidth, trim={3.2cm 1cm 3.2cm 1cm},clip]{./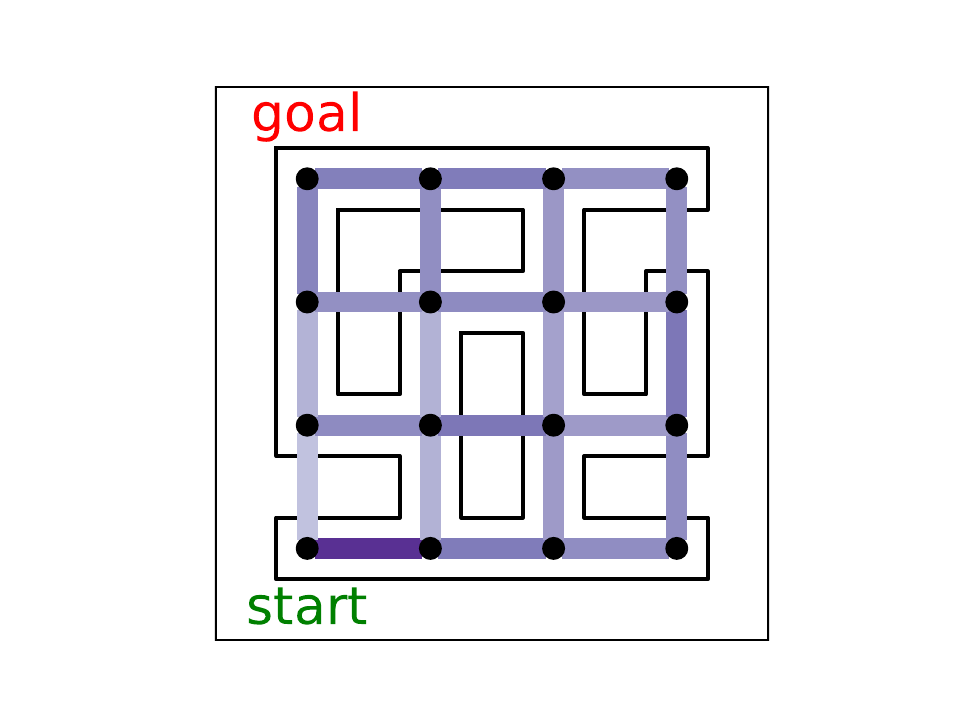}
            \includegraphics[width=0.98\linewidth, trim={3.2cm 1cm 3.2cm 1cm},clip]{./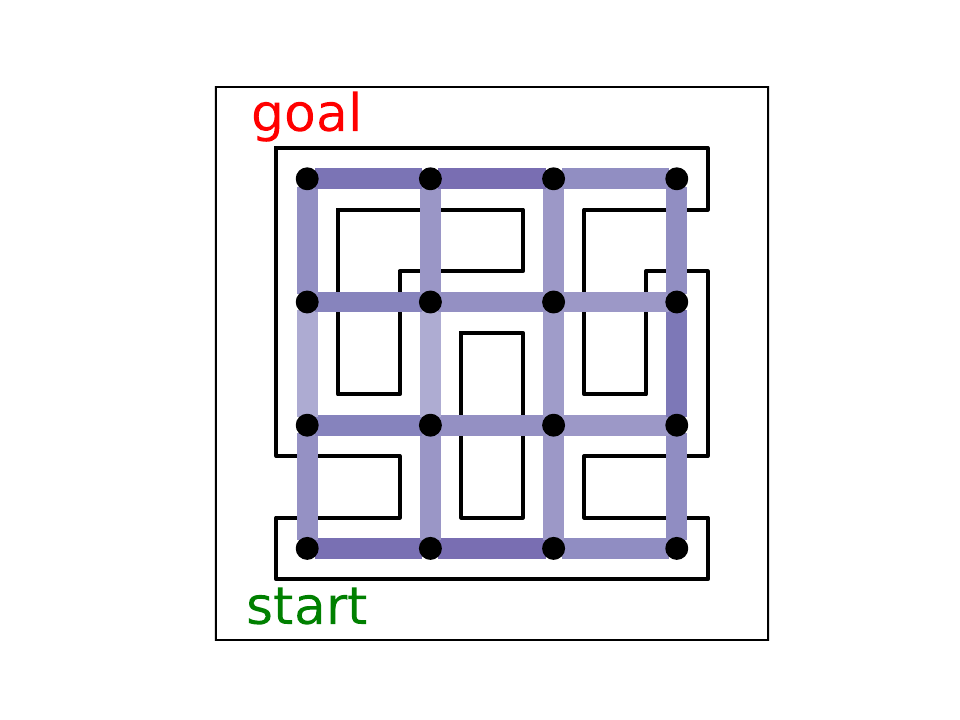}
        \caption{R-ed-UCB}
        \label{figure:heatmap-RUCB-app2}
    \end{subfigure}    
    \begin{subfigure}{0.133\linewidth}
        \centering
            \includegraphics[width=0.98\linewidth, trim={3.2cm 1cm 3.2cm 1cm},clip]{./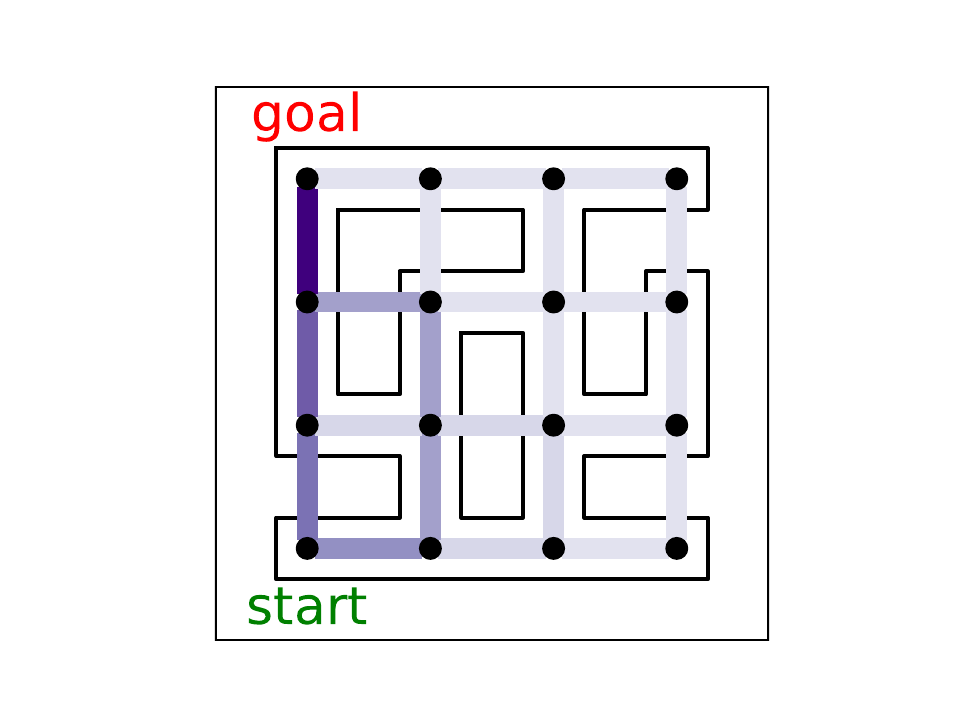}
            \includegraphics[width=0.98\linewidth, trim={3.2cm 1cm 3.2cm 1cm},clip]{./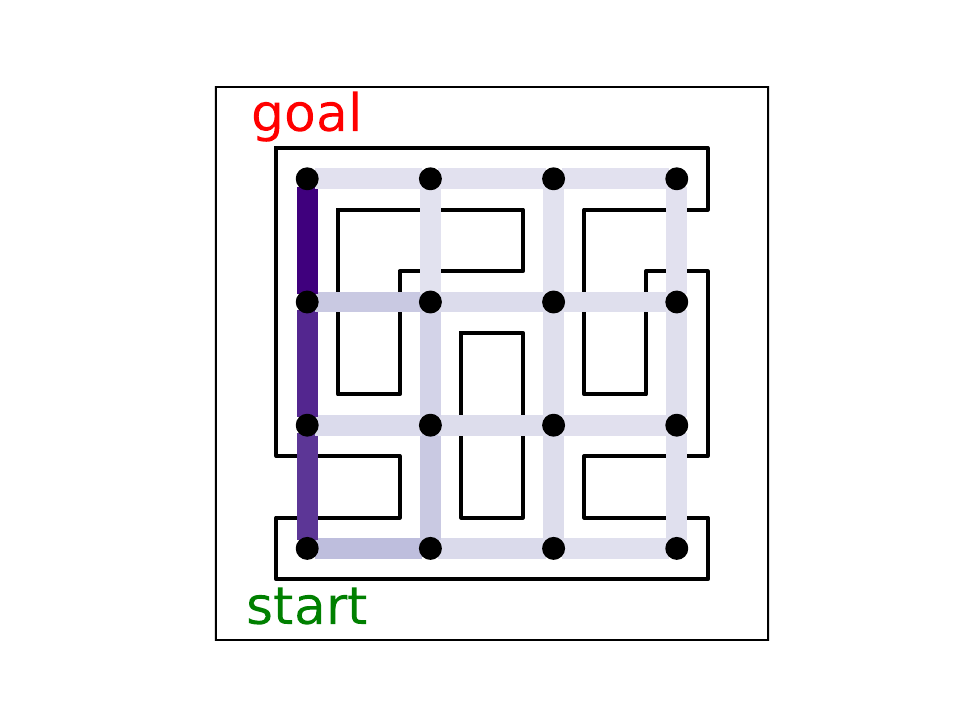}
            \includegraphics[width=0.98\linewidth, trim={3.2cm 1cm 3.2cm 1cm},clip]{./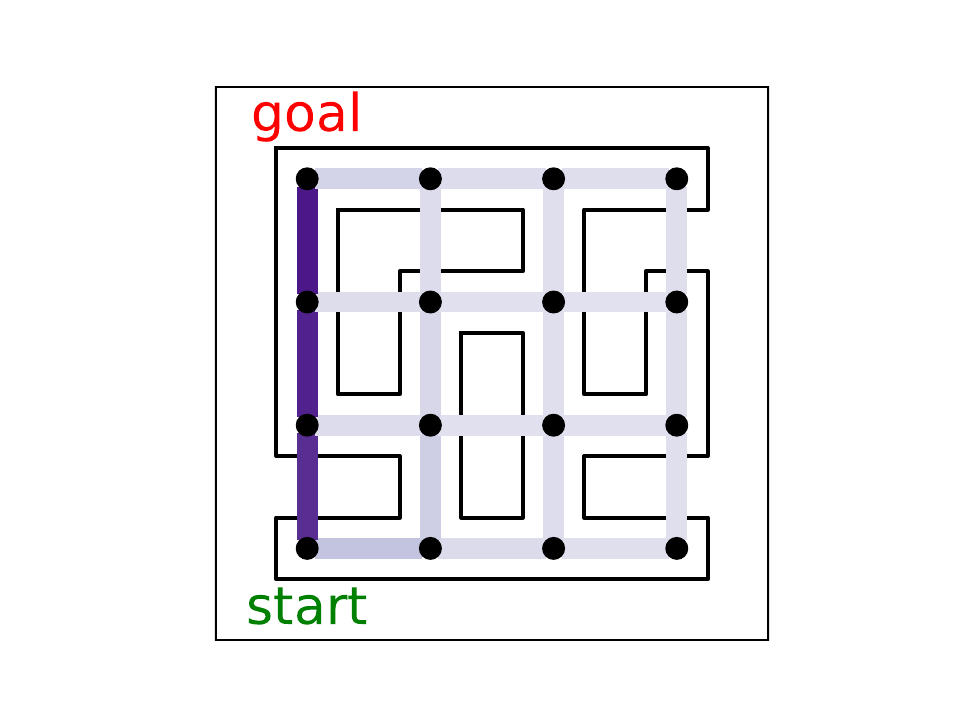}
        \caption{SW-CUCB}
        \label{figure:heatmap-SWCUCB-app2}
    \end{subfigure}
    \begin{subfigure}{0.133\linewidth}
        \centering
            \includegraphics[width=0.98\linewidth, trim={3.2cm 1cm 3.2cm 1cm},clip]{./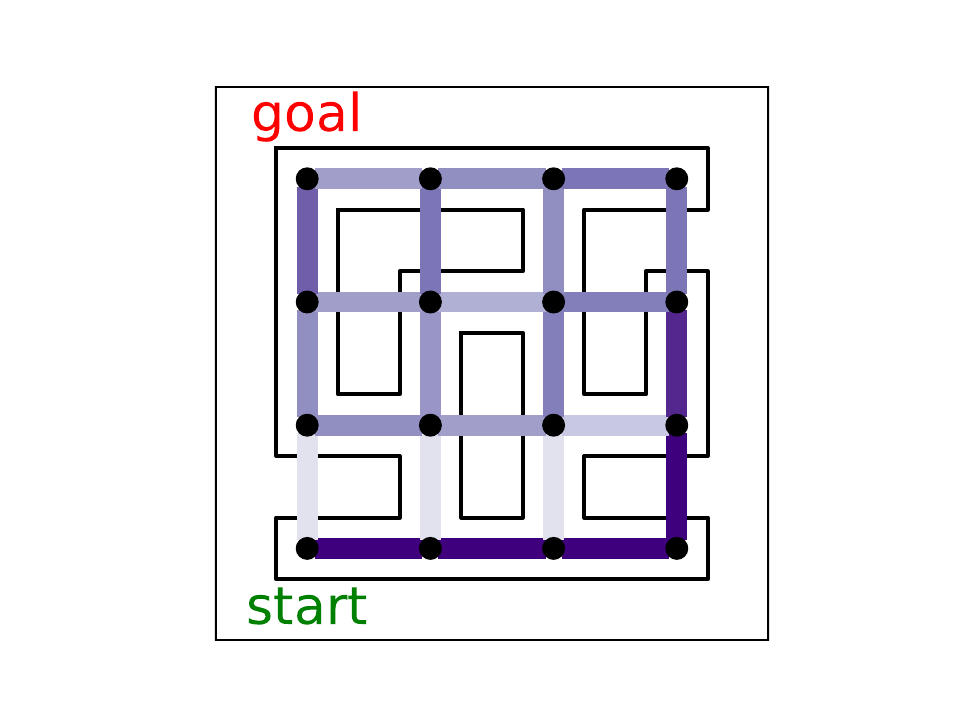}
            \includegraphics[width=0.98\linewidth, trim={3.2cm 1cm 3.2cm 1cm},clip]{./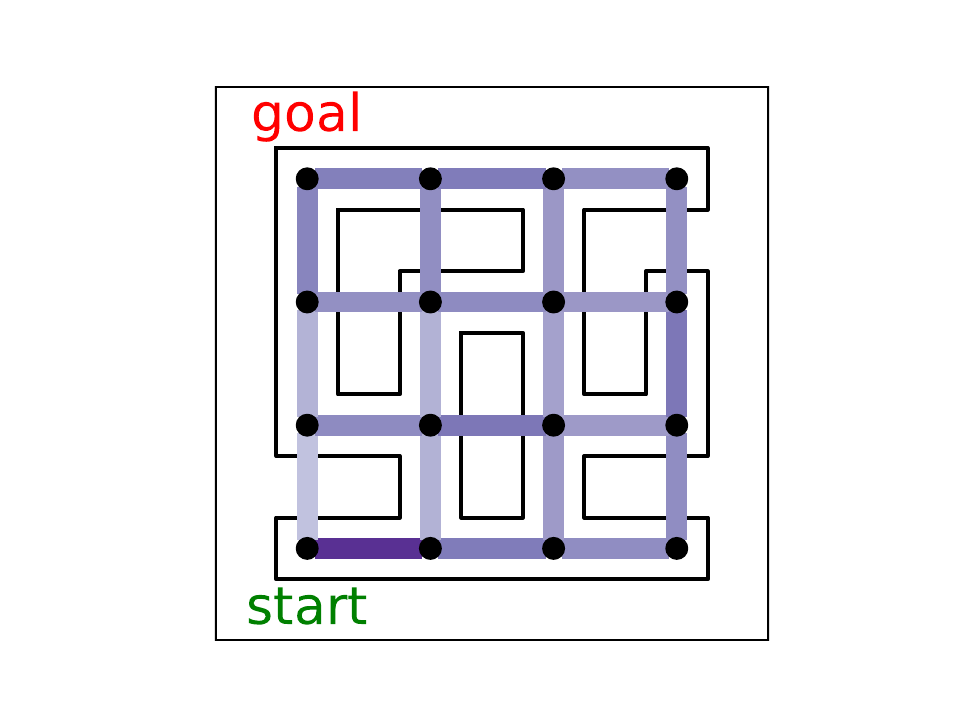}
            \includegraphics[width=0.98\linewidth, trim={3.2cm 1cm 3.2cm 1cm},clip]{./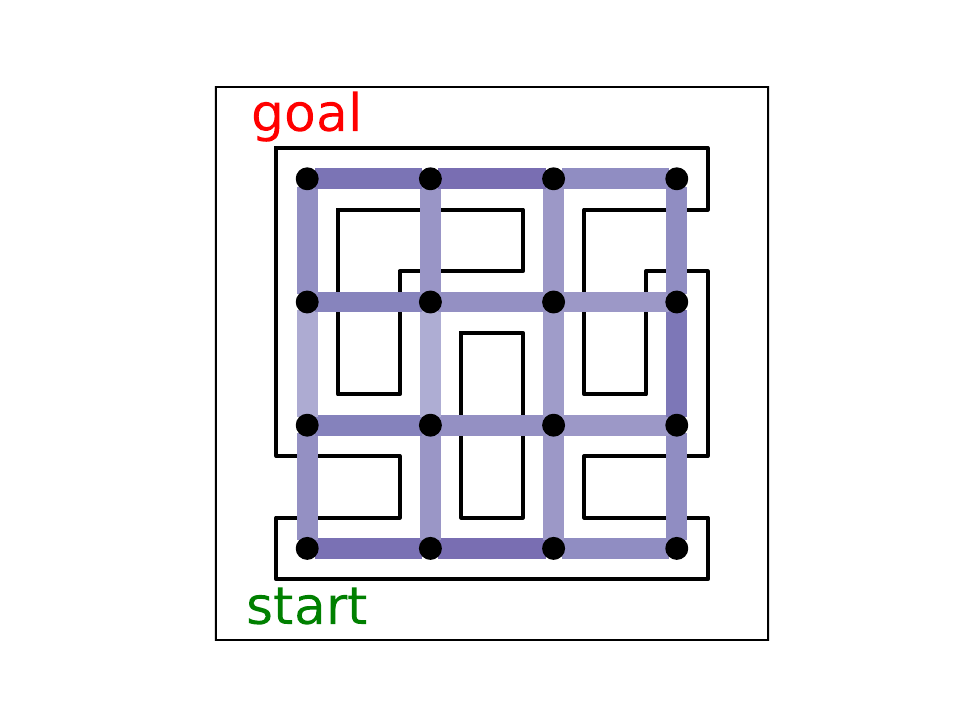}
        \caption{SW-UCB}
        \label{figure:heatmap-SWUCB-app2}
    \end{subfigure}
    \begin{subfigure}{0.133\linewidth}
        \centering
            \includegraphics[width=0.98\linewidth, trim={3.2cm 1cm 3.2cm 1cm},clip]{./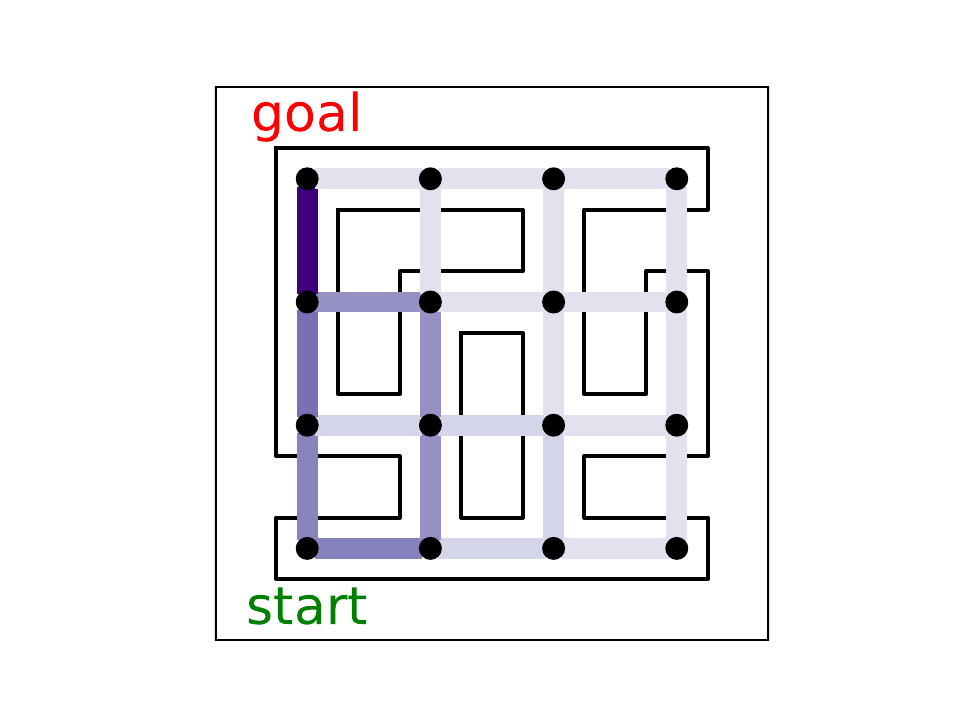}
            \includegraphics[width=0.98\linewidth, trim={3.2cm 1cm 3.2cm 1cm},clip]{./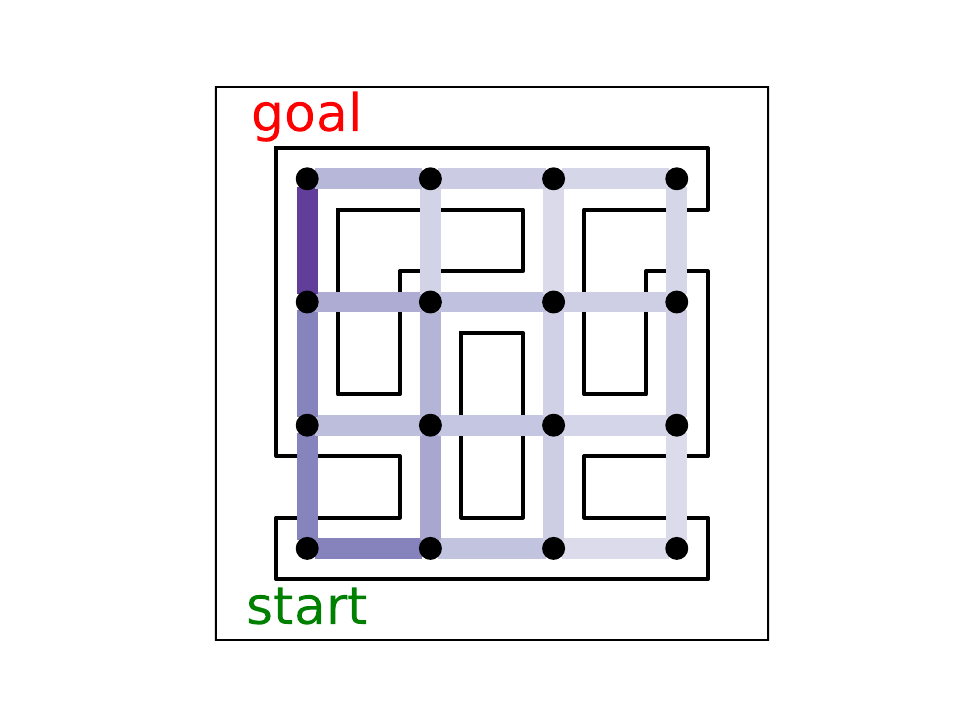}
            \includegraphics[width=0.98\linewidth, trim={3.2cm 1cm 3.2cm 1cm},clip]{./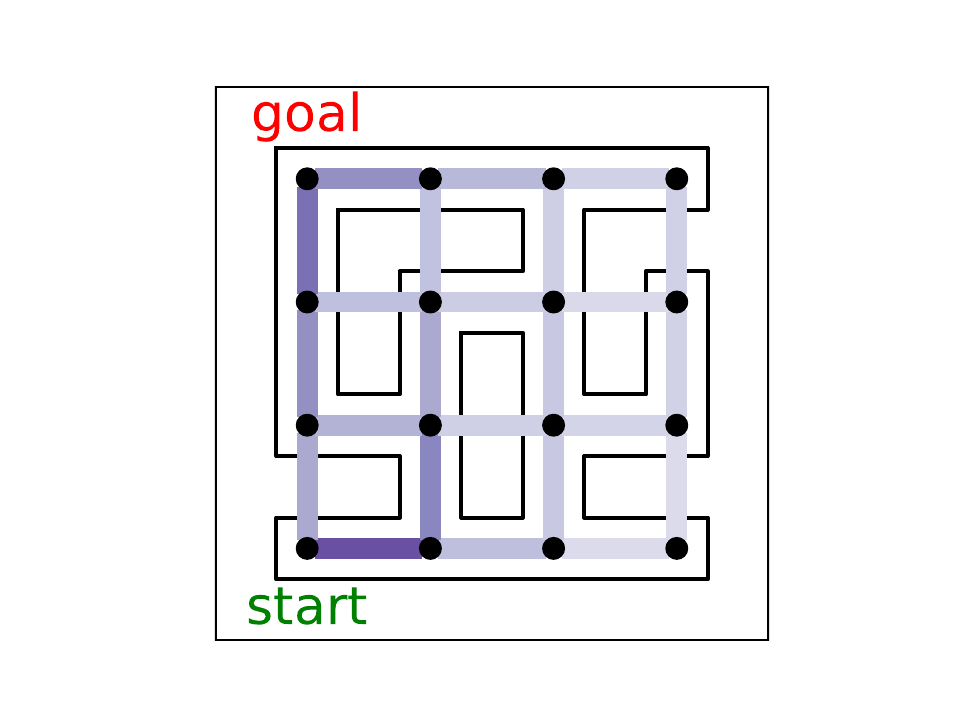}
        \caption{SW-CTS}
        \label{figure:heatmap-SWCTS-app2}
    \end{subfigure}
    \begin{subfigure}{0.133\linewidth}
        \centering
            \includegraphics[width=0.98\linewidth, trim={3.2cm 1cm 3.2cm 1cm},clip]{./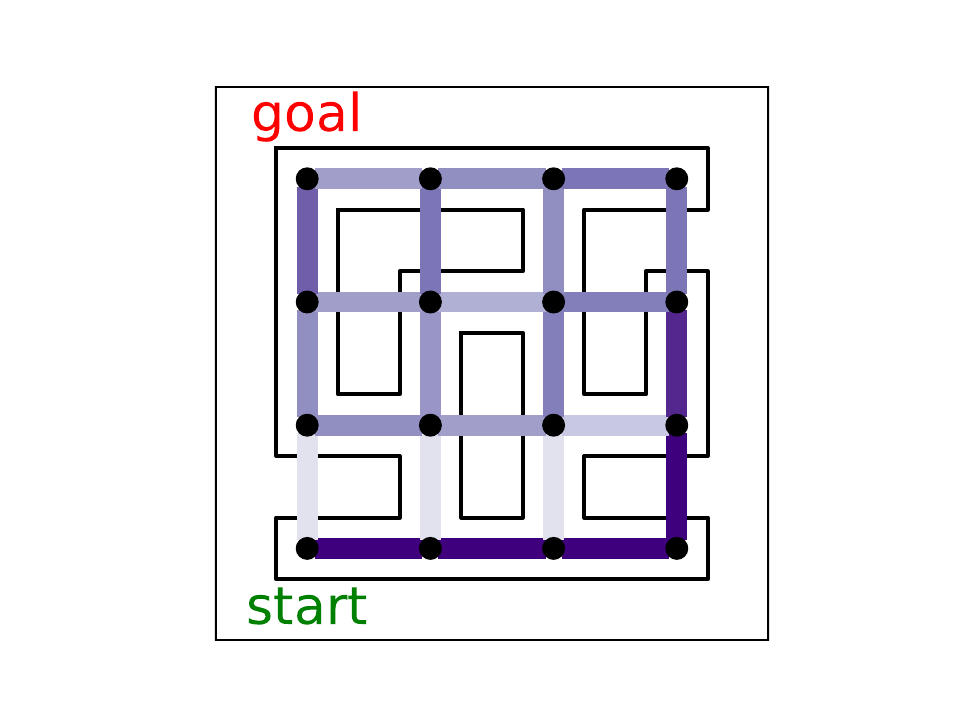}
            \includegraphics[width=0.98\linewidth, trim={3.2cm 1cm 3.2cm 1cm},clip]{./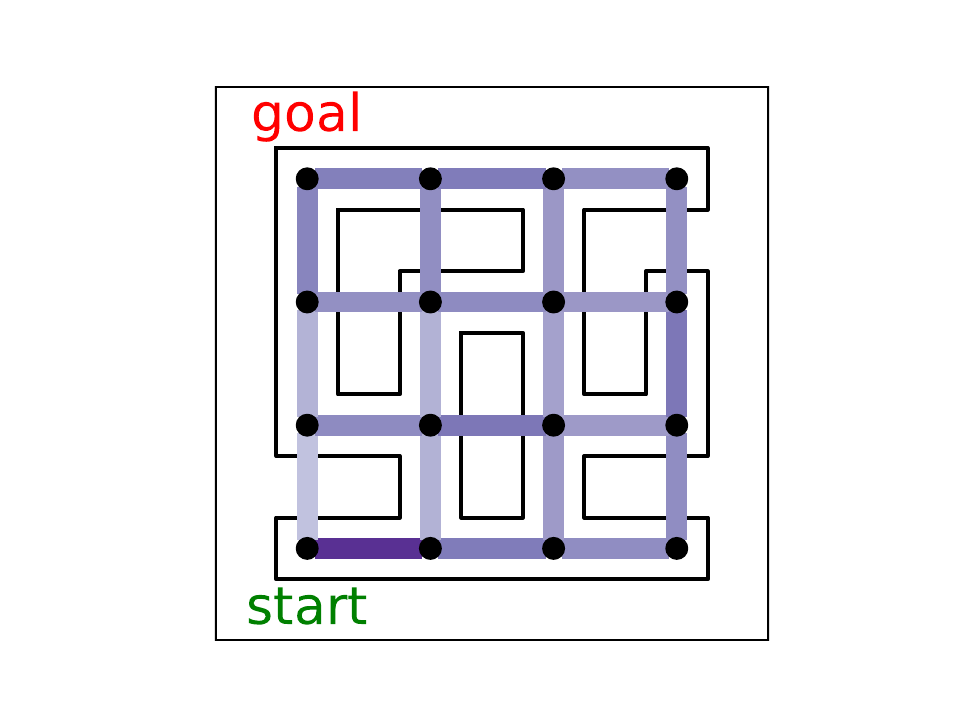}
            \includegraphics[width=0.98\linewidth, trim={3.2cm 1cm 3.2cm 1cm},clip]{./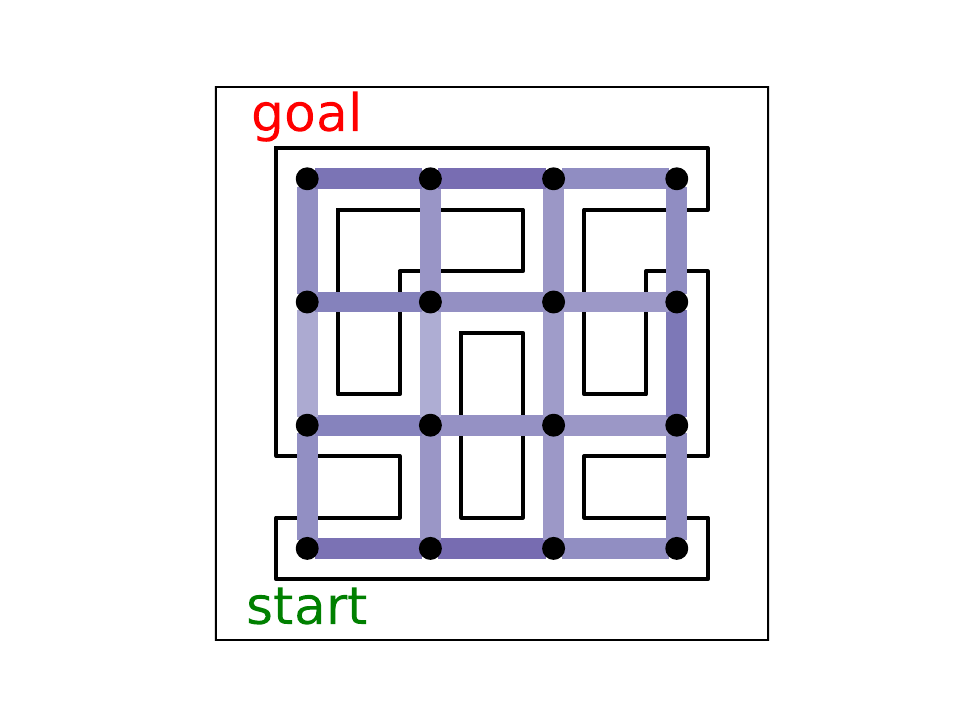}
        \caption{SW-TS}
        \label{figure:heatmap-SWTS-app2}
    \end{subfigure}
    
    \vspace{-0.1cm}
    \caption{\textbf{Heatmap illustrating the path-level try frequencies in AntMaze-complex.} 
    We visualize the path-level try frequencies for the optimal policy, CRUCB, and baseline algorithms. 
    The heatmap includes three rows representing the path-level try frequencies until episode numbers 100, 500, and 3000.
    }
    \label{figure:heatmap-path-app}
\end{figure}
\begin{figure}[!ht]
\centering
    \begin{subfigure}{0.6\linewidth}
        \centering
        \includegraphics[width=\linewidth]{./figure/images/colorbar3.pdf}
    \end{subfigure}
    \vfill
    \begin{subfigure}{0.06\linewidth}
        \rotatebox{90}{$t$ = 100}
        \vspace{0.8cm}
        \vfill
        \rotatebox{90}{$t$ = 500}
        \vspace{0.9cm}
        \vfill
        \rotatebox{90}{$t$ = 3000}
        \vspace{0.8cm}
    \end{subfigure}
    \hspace{-0.7cm}
    \begin{subfigure}{0.133\linewidth}
        \centering
            \includegraphics[width=0.98\linewidth, trim={3.2cm 1cm 3.2cm 1cm},clip]{./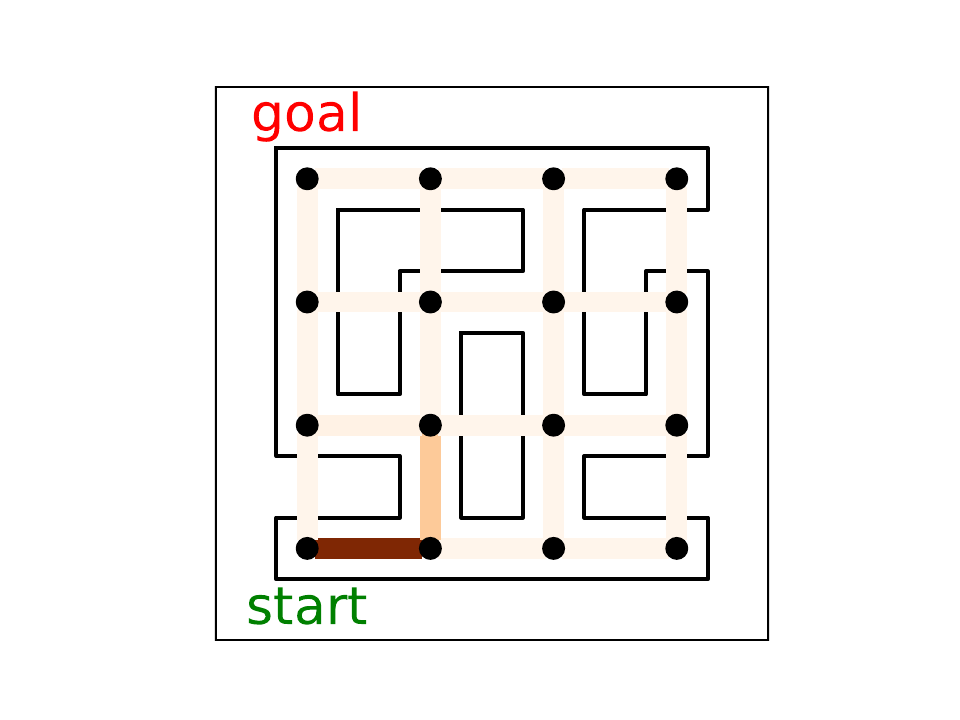}
            \includegraphics[width=0.98\linewidth, trim={3.2cm 1cm 3.2cm 1cm},clip]{./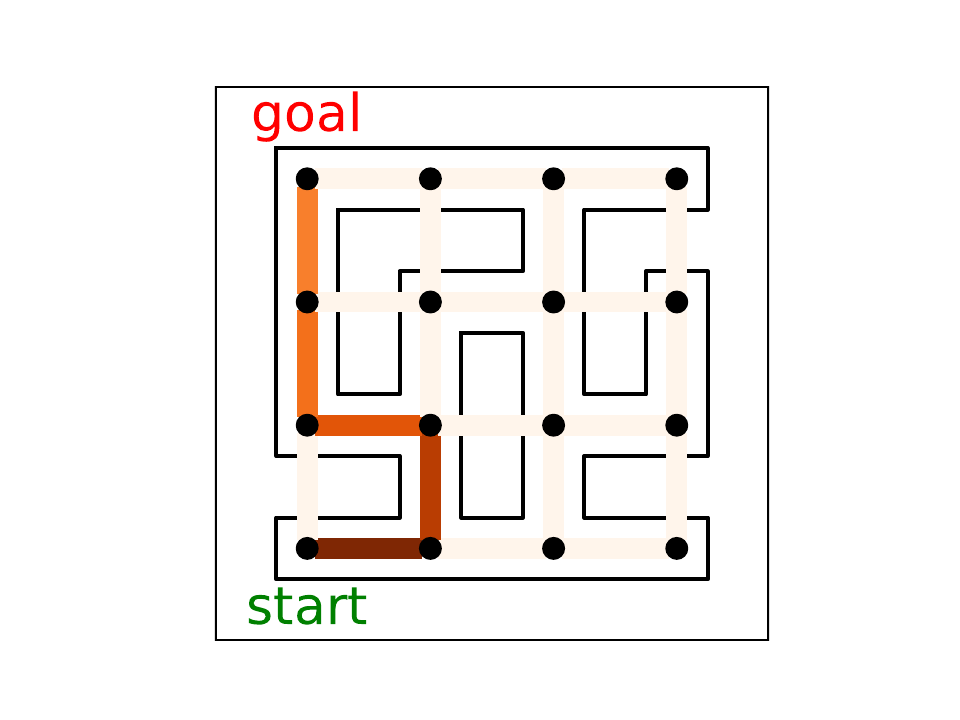}
            \includegraphics[width=0.98\linewidth, trim={3.2cm 1cm 3.2cm 1cm},clip]{./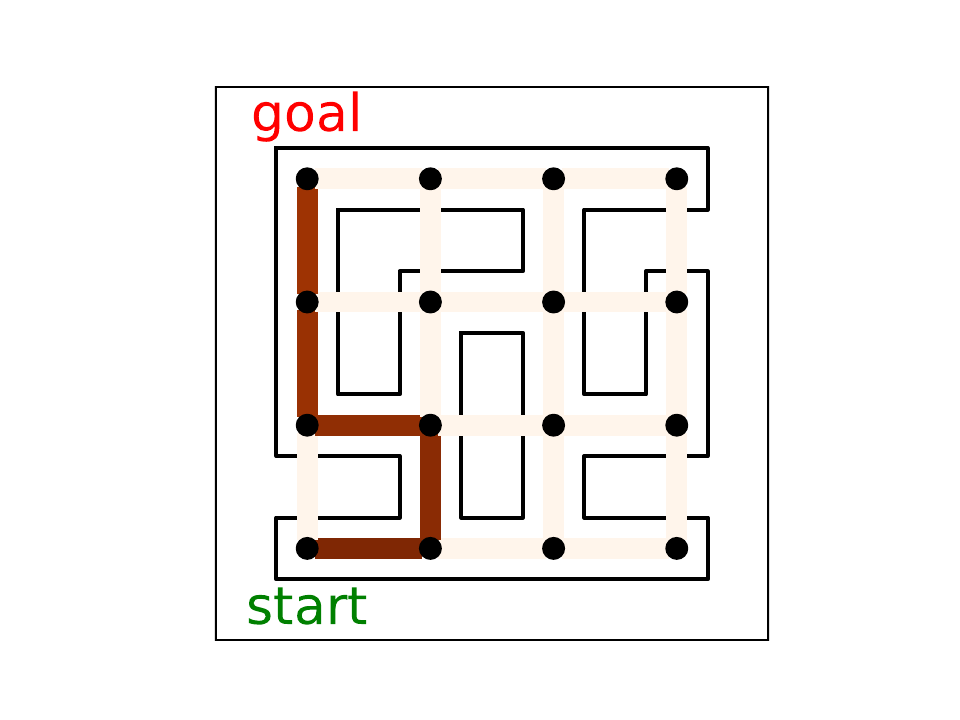}
        \caption{Optimal}
        \label{figure:heatmap-oracle-app3}
    \end{subfigure}
    \begin{subfigure}{0.133\linewidth}
        \centering
            \includegraphics[width=0.98\linewidth, trim={3.2cm 1cm 3.2cm 1cm},clip]{./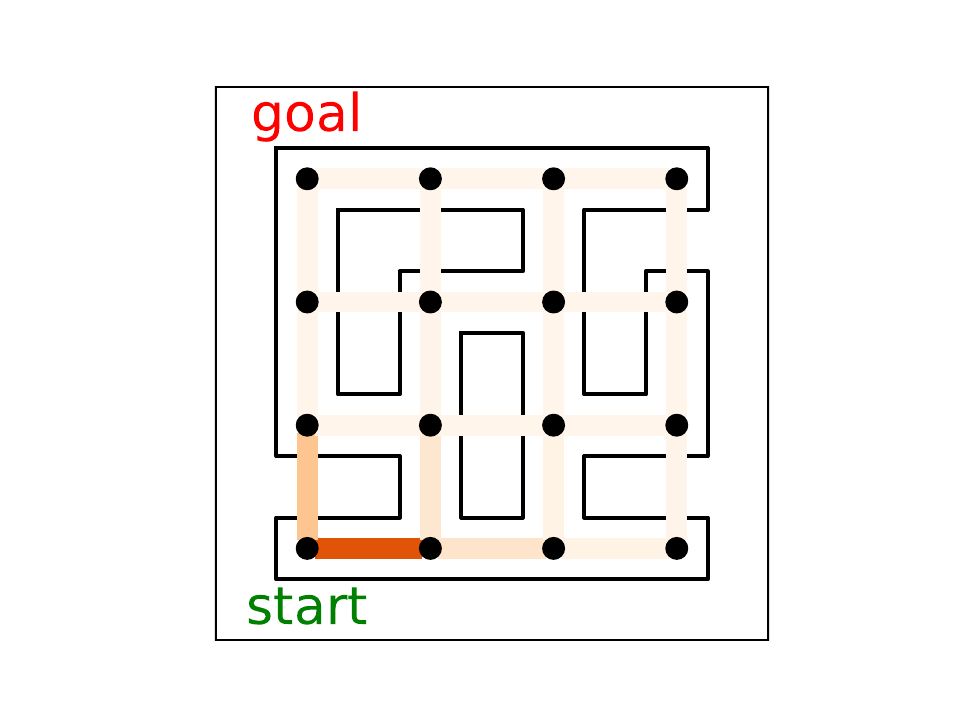}
            \includegraphics[width=0.98\linewidth, trim={3.2cm 1cm 3.2cm 1cm},clip]{./figure/images/heatmap7_RCUCB_500-n.pdf}
            \includegraphics[width=0.98\linewidth, trim={3.2cm 1cm 3.2cm 1cm},clip]{./figure/images/heatmap7_RCUCB_3000-n.pdf}
        \caption{CRUCB}
        \label{figure:heatmap-RCUCB-app3}
    \end{subfigure}
    \begin{subfigure}{0.133\linewidth}
        \centering
            \includegraphics[width=0.98\linewidth, trim={3.2cm 1cm 3.2cm 1cm},clip]{./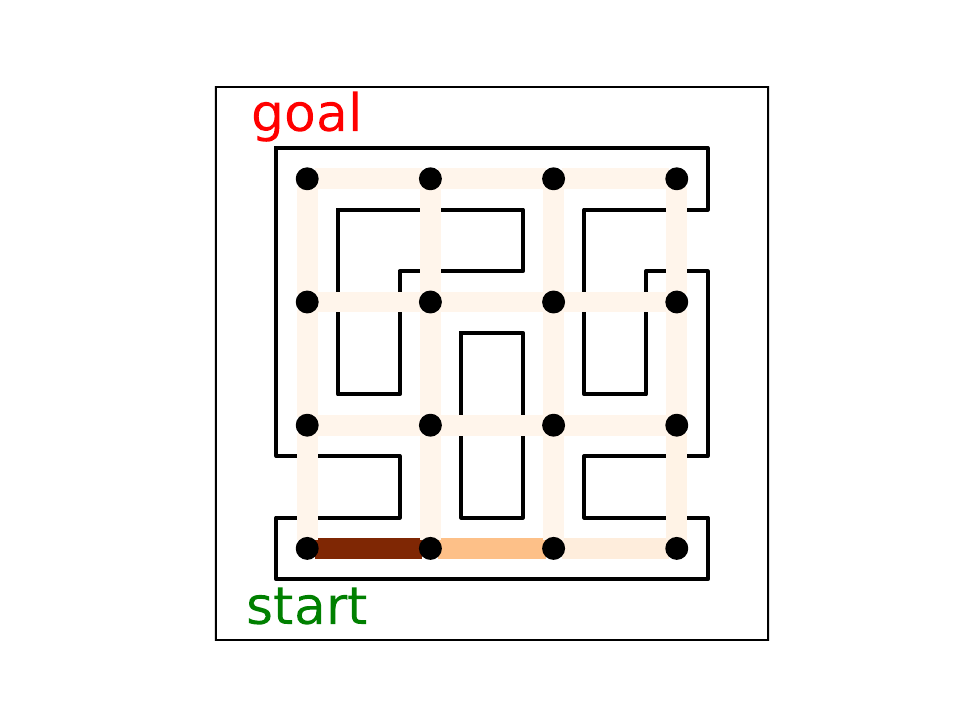}
            \includegraphics[width=0.98\linewidth, trim={3.2cm 1cm 3.2cm 1cm},clip]{./figure/images/heatmap7_RUCB_500-n.pdf}
            \includegraphics[width=0.98\linewidth, trim={3.2cm 1cm 3.2cm 1cm},clip]{./figure/images/heatmap7_RUCB_3000-n.pdf}
        \caption{R-ed-UCB}
        \label{figure:heatmap-RUCB-app3}
    \end{subfigure}
    \begin{subfigure}{0.133\linewidth}
        \centering
            \includegraphics[width=0.98\linewidth, trim={3.2cm 1cm 3.2cm 1cm},clip]{./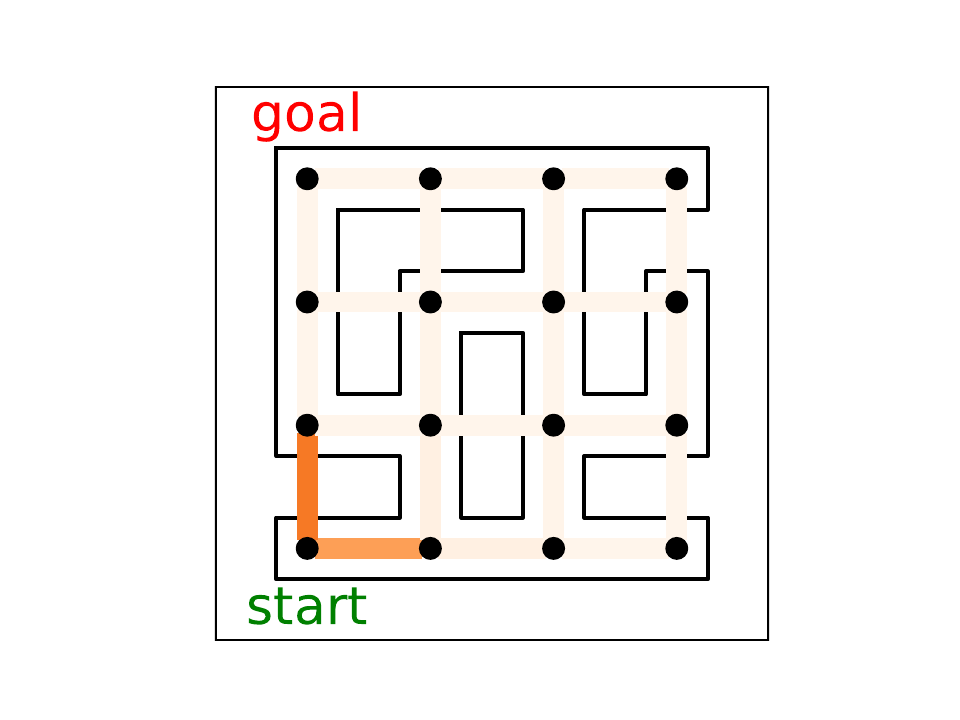}
            \includegraphics[width=0.98\linewidth, trim={3.2cm 1cm 3.2cm 1cm},clip]{./figure/images/heatmap7_CSWUCB_500-n.pdf}
            \includegraphics[width=0.98\linewidth, trim={3.2cm 1cm 3.2cm 1cm},clip]{./figure/images/heatmap7_CSWUCB_3000-n.pdf}
        \caption{SW-CUCB}
        \label{figure:heatmap-SWCUCB-app3}
    \end{subfigure}
    \begin{subfigure}{0.133\linewidth}
        \centering
            \includegraphics[width=0.98\linewidth, trim={3.2cm 1cm 3.2cm 1cm},clip]{./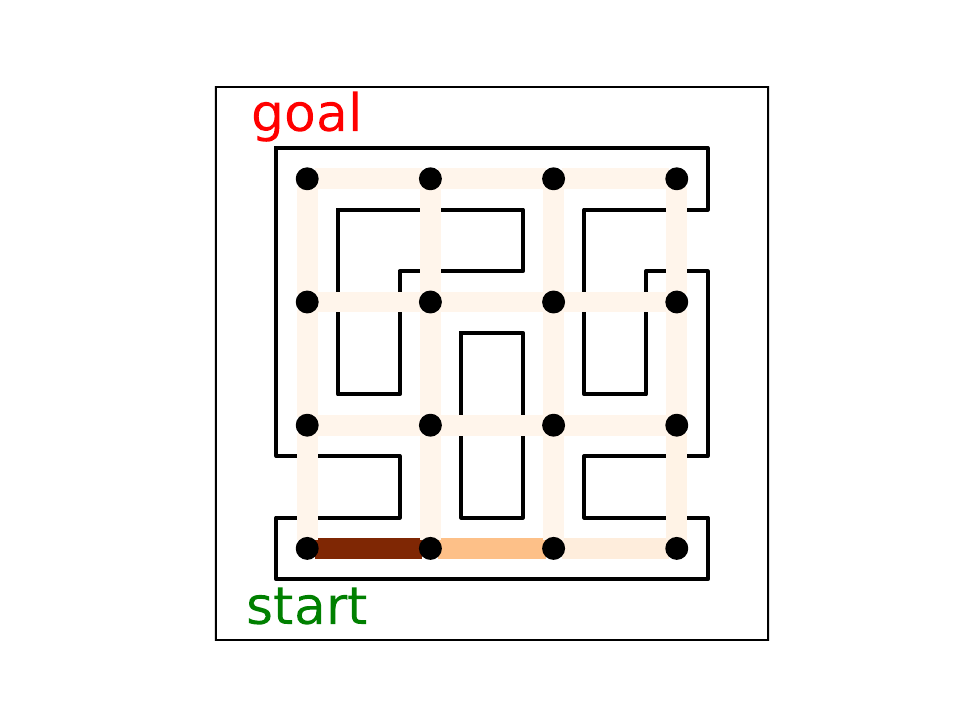}
            \includegraphics[width=0.98\linewidth, trim={3.2cm 1cm 3.2cm 1cm},clip]{./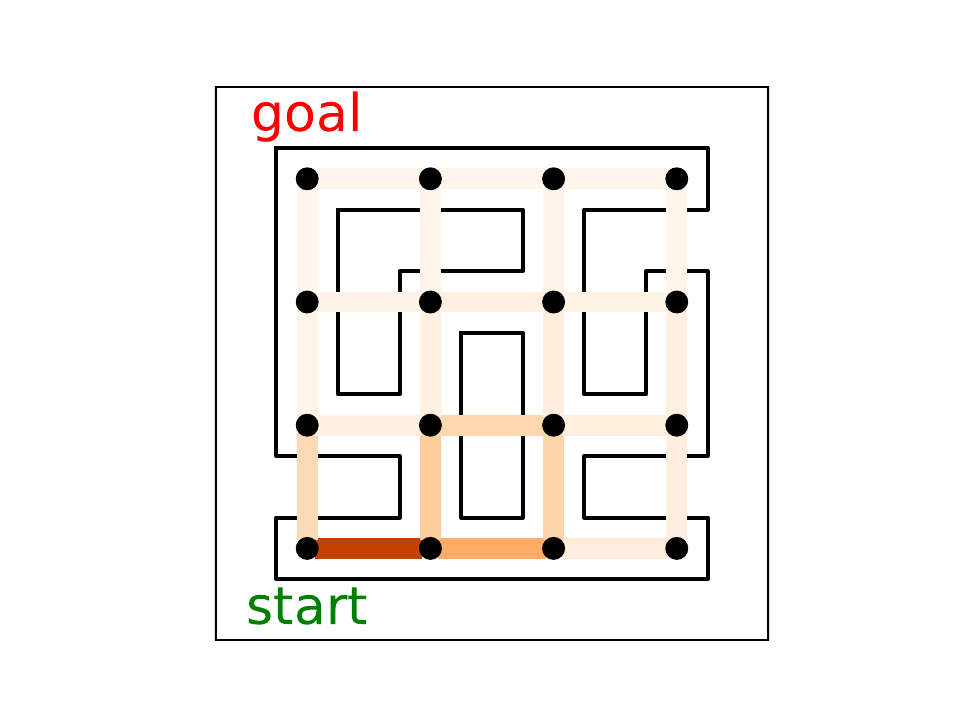}
            \includegraphics[width=0.98\linewidth, trim={3.2cm 1cm 3.2cm 1cm},clip]{./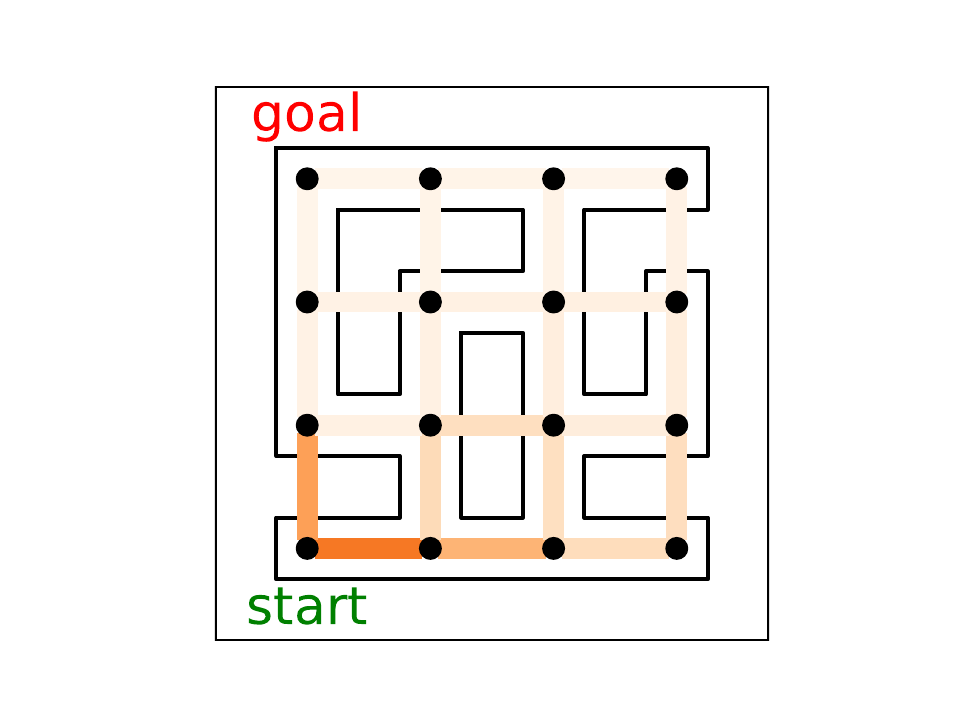}
        \caption{SW-UCB}
        \label{figure:heatmap-SWUCB-app3}
    \end{subfigure}
    \begin{subfigure}{0.133\linewidth}
        \centering
            \includegraphics[width=0.98\linewidth, trim={3.2cm 1cm 3.2cm 1cm},clip]{./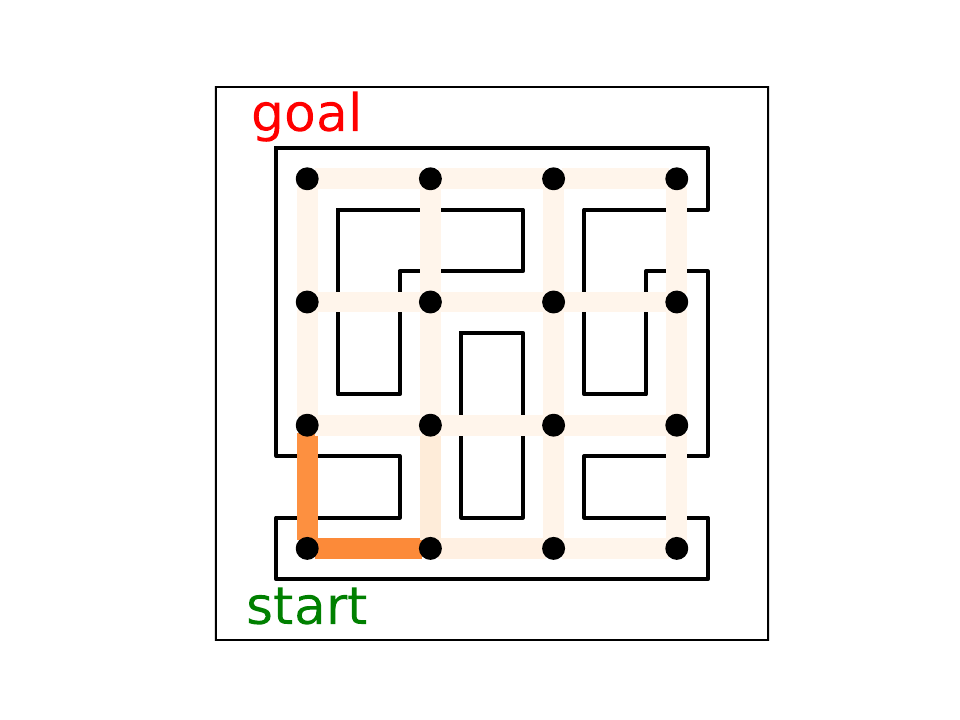}
            \includegraphics[width=0.98\linewidth, trim={3.2cm 1cm 3.2cm 1cm},clip]{./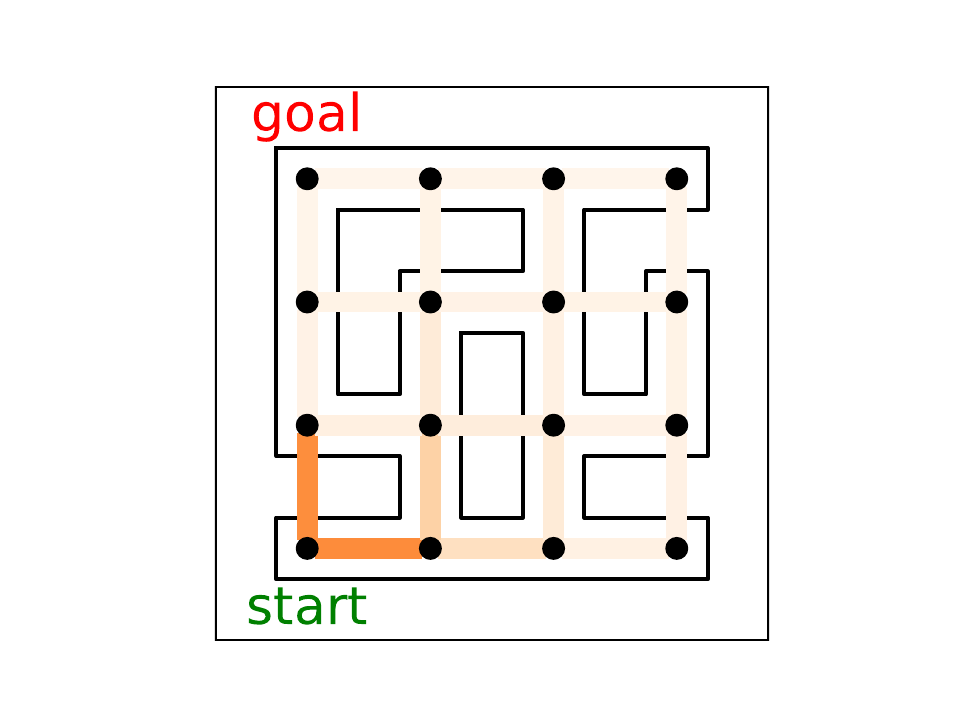}
            \includegraphics[width=0.98\linewidth, trim={3.2cm 1cm 3.2cm 1cm},clip]{./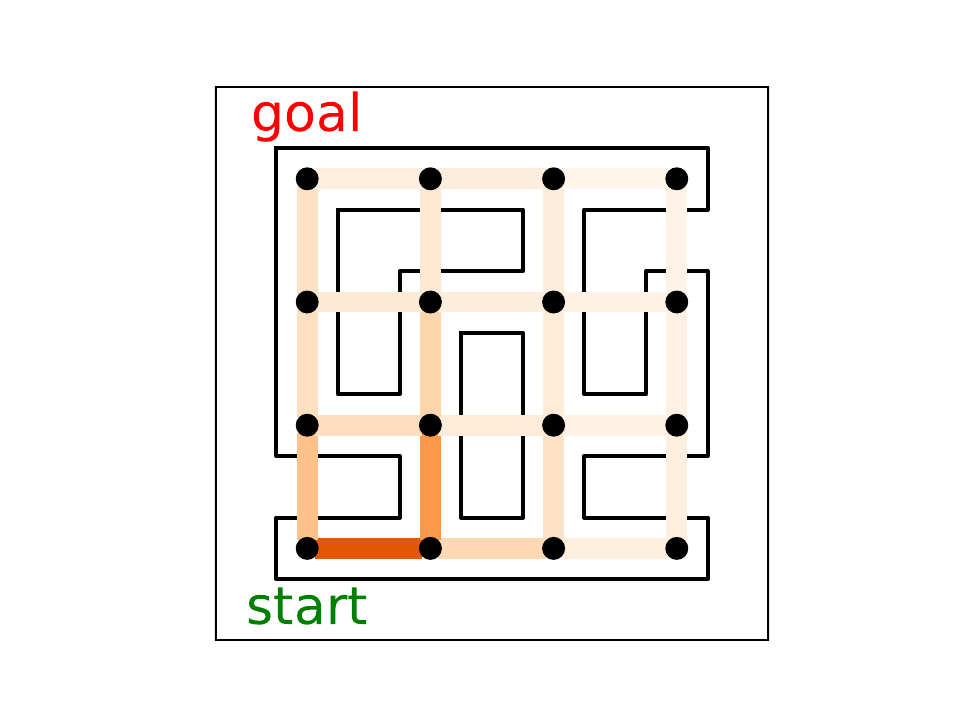}
        \caption{SW-CTS}
        \label{figure:heatmap-SWCTS-app3}
    \end{subfigure}
    \begin{subfigure}{0.133\linewidth}
        \centering
            \includegraphics[width=0.98\linewidth, trim={3.2cm 1cm 3.2cm 1cm},clip]{./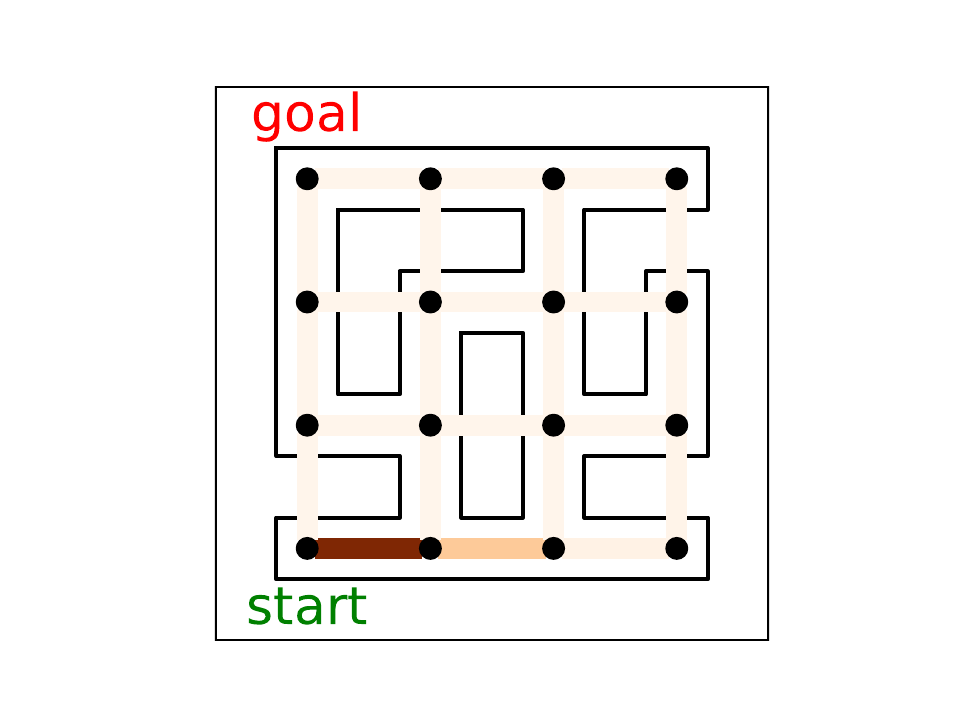}
            \includegraphics[width=0.98\linewidth, trim={3.2cm 1cm 3.2cm 1cm},clip]{./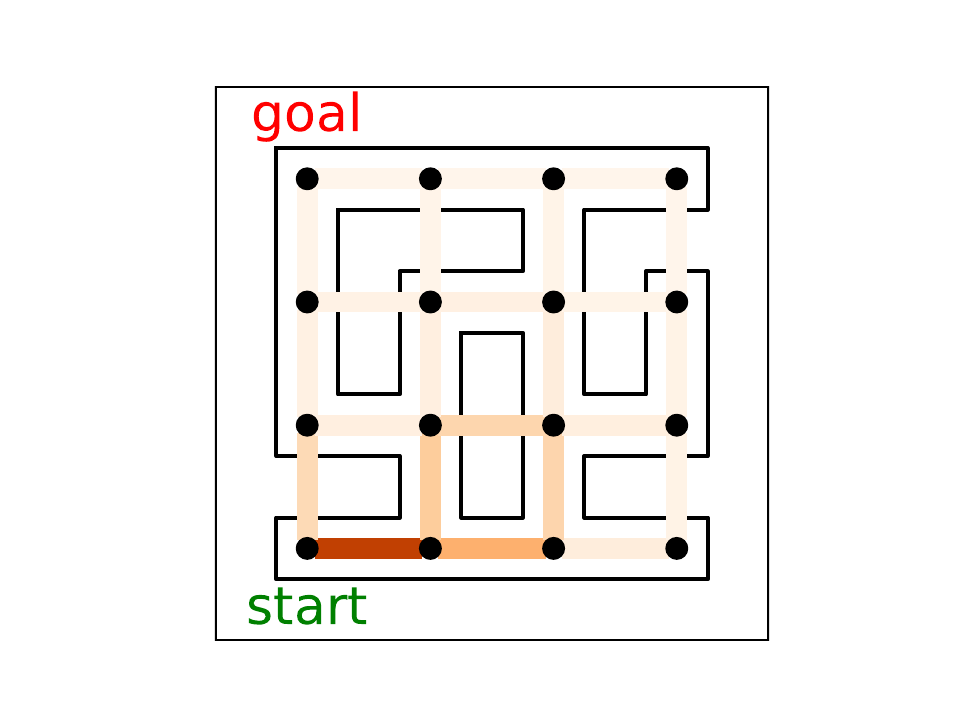}
            \includegraphics[width=0.98\linewidth, trim={3.2cm 1cm 3.2cm 1cm},clip]{./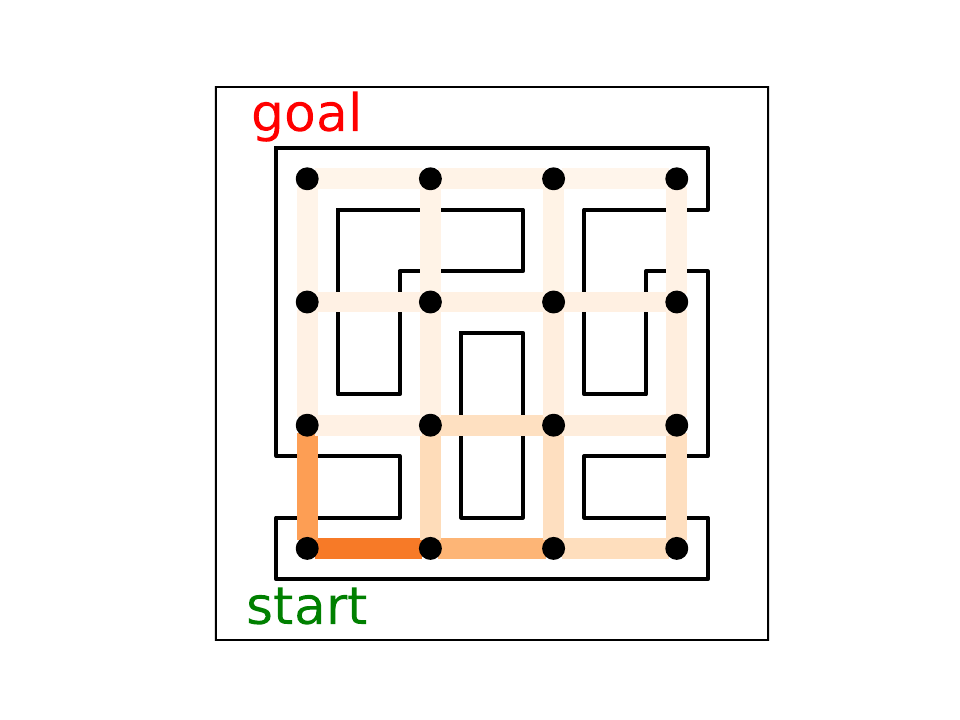}
        \caption{SW-TS}
        \label{figure:heatmap-SWTS-app3}
    \end{subfigure}

    \vspace{-0.1cm}
    \caption{\textbf{Heatmap illustrating the edge-level try frequencies in AntMaze-complex.} 
    We visualize the edge-level try frequencies for the optimal policy, CRUCB, and baseline algorithms. 
    The heatmap includes three rows representing the edge-level try frequencies until episode numbers 100, 500, and 3000.
    }
    \label{figure:heatmap-edge-app}
\end{figure}

In Figure~\ref{figure:heatmap-path-app} and Figure~\ref{figure:heatmap-edge-app}, we further analyze the exploration behaviors of each algorithm by visualizing their try frequencies at both the path and edge levels.
The optimal policy concentrates its tries exclusively along the shortest path, resulting in highly localized activity in both visualizations.
CRUCB exhibits exploration patterns that closely resemble those of the optimal policy, maintaining focused and structured exploration throughout.
Notably, AntMaze-complex includes 178 possible paths, which makes exhaustive exploration highly time-consuming.
As illustrated in Figure~\ref{figure:heatmap-path-app}, non-combinatorial algorithms struggle with this complexity: by episode 100, some paths remain untried, and even by episode 3000, their exploration remains broadly distributed and unguided, indicating inefficient use of the exploration budget.

\clearpage
\section{The Use of Large Language Models (LLMs)}
\label{sec:app_llm_usage}
In the preparation of this manuscript, we utilized a large language model (LLM) as a writing assistant to aid in polishing the text. 
The LLM’s role was limited to refining phrasing and grammar in author-written drafts, suggesting alternative sentence structures to improve clarity, and helping maintain a consistent academic tone. 
All technical contributions, theoretical results, experimental designs, and final claims were conceived and developed solely by the human authors. 
The authors thoroughly reviewed and edited the manuscript and take full responsibility for all content presented in this paper.

\end{document}